\newcommand\StartAppendixEntries{}
  \renewcommand\StartAppendixEntries{\value{tocdepth}=-10000\relax}%
  \edef\maintocdepth{\the\value{tocdepth}}%
  \renewcommand\StartAppendixEntries{\value{tocdepth}=\maintocdepth\relax}%
\newcommand*\appendixwithtoc{%
  \cleardoublepage
  \appendix
  \addtocontents{toc}{\protect\StartAppendixEntries}
  \listofatoc
}
\title{Rebounding Bandits for Modeling Satiation Effects}%
\author[1]{Liu Leqi\footnote{Corresponding author: \texttt{leqil@cs.cmu.edu}.}}
\author[2]{Fatma K{\i}l{\i}n\c{c}-Karzan}
\author[1,2]{Zachary C. Lipton}
\author[1,2]{Alan L. Montgomery}
\affil[1]{Machine Learning Department}
\affil[2]{Tepper School of Business}
\affil[ ]{Carnegie Mellon University}
\date{October 27, 2021}
\begin{document}

\maketitle

\begin{abstract} 
Psychological research 
shows that enjoyment of many goods 
is subject to satiation,  
with short-term satisfaction declining 
after repeated exposures to the same item. 
Nevertheless, proposed algorithms 
for powering recommender systems
seldom model these dynamics,
instead proceeding as though 
user preferences were fixed in time. 
In this work, we introduce \emph{rebounding bandits},
a multi-armed bandit setup,
where satiation dynamics are modeled
as time-invariant linear dynamical systems.
Expected rewards for each arm 
decline monotonically 
with consecutive exposures to it
and rebound towards the initial reward 
whenever that arm is not pulled.
Unlike classical bandit settings,
methods for tackling rebounding bandits
must plan ahead and model-based methods 
rely on estimating the parameters of the
satiation dynamics. 
We characterize the planning problem, 
showing that the greedy policy
is optimal 
when the arms exhibit identical deterministic dynamics.
To address stochastic satiation dynamics with unknown parameters, we propose Explore-Estimate-Plan (EEP),
an algorithm that pulls arms methodically,
estimates the system dynamics, 
and then plans accordingly.

\end{abstract}

\section{Introduction}
Recommender systems suggest
such diverse items as music, 
news, restaurants, 
and even job candidates.  
Practitioners hope that
by leveraging historical interactions,
they might provide services
better aligned with their users' preferences.
However, despite their ubiquity in application,
the dominant learning framework 
suffers several conceptual gaps
that can result in misalignment
between machine behavior and human preferences. 
For example, because human preferences
are seldom directly observed,
these systems are typically trained 
on the available observational data
(e.g., purchases, ratings, or clicks)
with the objective of predicting customer behavior~\citep{bennett2007netflix, mcauley2013amateurs}.
Problematically, such observations 
tend to be confounded 
(reflecting exposure bias 
due to the current recommender system)
and subject to censoring 
(e.g., users with strong opinions 
are more likely to write reviews)~\citep{swaminathan2015counterfactual, joachims2017unbiased}.

Even if we could directly observe 
the utility experienced by each user,
we might expect it to depend, in part,
on the history of past items consumed. 
For example, consider the task 
of automated (music) playlisting.
As a user is made to listen to the same song
over and over again, 
we might expect that the utility 
derived from each consecutive listen
would decline~\citep{ratner1999choosing}. 
However, after listening 
to other music for some time,
we might expect the utility 
associated with that song
to bounce back towards its baseline level.
Similarly, a diner served pizza for lunch
might feel diminished pleasure 
upon eating pizza again for dinner.

The psychology literature on \emph{satiation} 
formalizes the idea that enjoyment depends 
not only on one's intrinsic preference 
for a given product
but also on the sequence of previous exposures 
and the time between them~\citep{baucells2007satiation, caro2012product}. 
Research on satiation dates to the 1960s 
(if not earlier)
with early studies addressing brand loyalty~\citep{tucker1964development, mcconnell1968development}.  
Interestingly, even after controlling
for marketing variables like price, 
product design, promotion, etc., 
researchers still observe 
brand-switching behavior in consumers. 
Such behavior, referred as \emph{variety seeking}, 
has often been explained as a consequence
of utility associated with the change itself
\citep{mcalister1982dynamic, kahn1995consumer}. 
For a comprehensive review on hedonic decline 
caused by repeated exposure to a stimulus,
we refer the readers to \cite{galak2018properties}.

In this paper, we introduce \emph{rebounding bandits}, 
a multi-armed bandits (MABs)~\citep{robbins1952some} framework
that models satiation via linear dynamical systems.
While traditional MABs draw rewards 
from \emph{fixed} but unknown distributions,
rebounding bandits allow 
each arm's rewards to evolve
as a function of both 
the per-arm characteristics
(susceptibility to satiation 
and speed of rebounding)
and the historical pulls 
(e.g., past recommendations).
In rebounding bandits, 
even if the dynamics 
are known and deterministic,
selecting the optimal
sequence of $T$ arms to play
requires planning in a Markov decision process (MDP)
whose state space scales exponentially in the horizon $T$.
When the satiation dynamics 
are known and stochastic, 
the states are only partially observable, 
since the satiation of each arm evolves with (unobserved) stochastic noises between pulls. %
And when the satiation dynamics are unknown,
learning requires that we identify
a stochastic dynamical system.

We propose Explore-Estimate-Plan (EEP)
an algorithm that 
(i) collects data by pulling each arm repeatedly,
(ii) estimates the dynamics using this dataset;
and (iii) plans using the estimated parameters.
We provide guarantees for our estimators
in \S~\ref{sec:system-identification-main-text} 
and bound EEP's regret in \S~\ref{sec:regret-bound}. 

Our main contributions are:
(i) the rebounding bandits problem
(\S\ref{sec:setup}),
(ii) 
analysis showing
that when arms share rewards and (deterministic) dynamics,
the optimal policy pulls arms cyclically,
exhibiting variety-seeking behavior (\S\ref{sec:greedy}); 
(iii) an estimator (for learning the satiation dynamics)  
along with a sample complexity bound  
for identifying an affine dynamical system 
using a single trajectory of data
(\S\ref{sec:system-identification-main-text});
(iv)
EEP, an algorithm
for learning with unknown stochastic dynamics
that achieves sublinear 
$w$-step lookahead regret~\cite{pike2019recovering}
(\S\ref{sec:etc});
and (v) experiments demonstrating EEP's efficacy
(\S\ref{sec:experiment}).

\section{Related Work} 
\label{sec:related-work}

Satiation effects have been addressed
by such diverse disciplines as
psychology, marketing, 
operations research, 
and recommendation systems.
In the psychology and marketing literatures,
satiation 
has been proposed as an explanation for
variety-seeking consumer behavior~\cite{galak2018properties,mcalister1982dynamic,mcalister1982variety}.
In operations research,
addressing continuous consumption decisions, 
\cite{baucells2007satiation} propose a deterministic  
linear dynamical system to model satiation effects. 
In the recommendation systems community, 
researchers have used semi-Markov models
to explicitly model two states: 
(i) \emph{sensitization}---where the user 
is highly interested in the product;
and  (ii) \emph{boredom}---where the user 
is not engaged~\citep{kapoor2015just}. 

The bandits literature 
has proposed a variety 
of extensions
where rewards depend on past exposures,
both to address satiation and other phenomena.
\cite{heidari2016tight,levine2017rotting, seznec2018rotting} 
tackle settings
where each arm's expected reward 
grows (or shrinks) monotonically
in the number of pulls.
By contrast,
\cite{kleinberg2018recharging, basu2019blocking, cella2020stochastic} 
propose models
where rewards increase
as a
function of the time elapsed
since the last pull.
\cite{pike2019recovering} 
model the expected reward
as a function of the time
since the last pull
drawn from a Gaussian Process 
with known kernel. 
\cite{warlop2018fighting} 
propose a model where rewards 
are linear functions
of the recent history of actions
and \cite{mintz2020nonstationary}
model the reward as a function of a context 
that evolves according to known deterministic dynamics.
In rested bandits~\citep{gittins1979bandit},
an arm's rewards changes only when it is played,  
and in restless bandits \citep{whittle1988restless}
rewards evolve independently from the play of each arm. 

\paragraph{Key Differences}
This may be the first bandits paper
to model evolving rewards 
through continuous-state
linear stochastic dynamical systems
with unknown parameters. 
Our framework %
captures several important %
aspects of satiation:
rewards decline by diminishing amounts 
with consecutive pulls 
and rebound towards the baseline with disuse. 
Unlike models that depend 
only on fixed windows %
or the time since the last pull, 
our model expresses satiation more organically
as a quantity that evolves 
according to stochastic dynamics
and is shocked (upward) by pulls. 
To estimate the reward dynamics,
we leverage recent advances 
in the identification 
of linear dynamical systems~\citep{simchowitz2018learning,sarkar2019near}
that rely on the theory of self-normalized processes~\citep{pena2008self,abbasi2011improved}
and block martingale conditions~\citep{simchowitz2018learning}.

\begin{figure*}
     \centering
     \begin{subfigure}[b]{0.45\linewidth}
         \centering
         \includegraphics[width=\linewidth]{./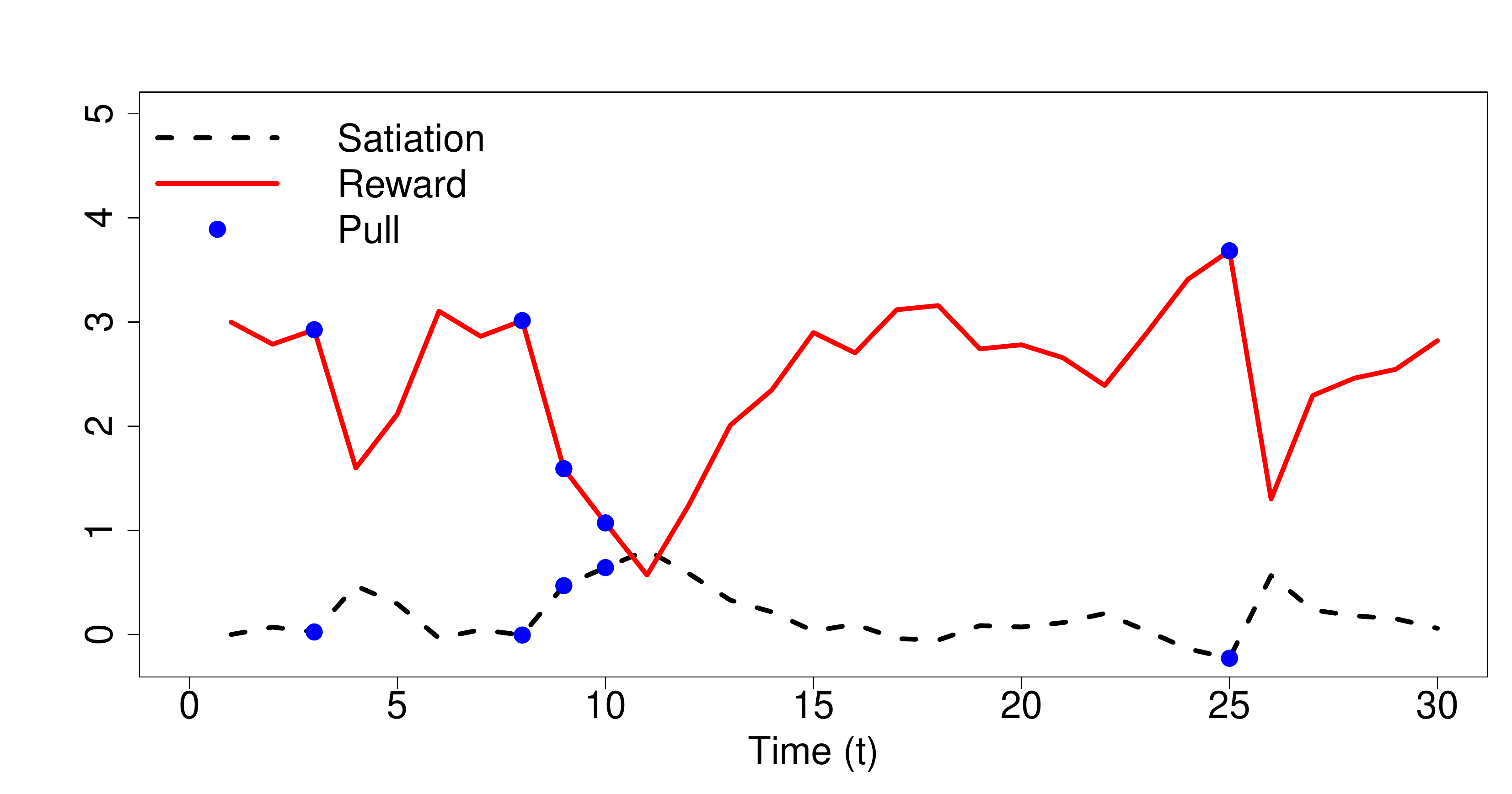}
         \caption{$\gamma_k = .5, \lambda_k = 3, b_k = 3$}
         \label{fig:gamma_p5_sigma_p1}
     \end{subfigure}
     \hfill
     \begin{subfigure}[b]{0.45\linewidth}
         \centering
         \includegraphics[width=\linewidth]{./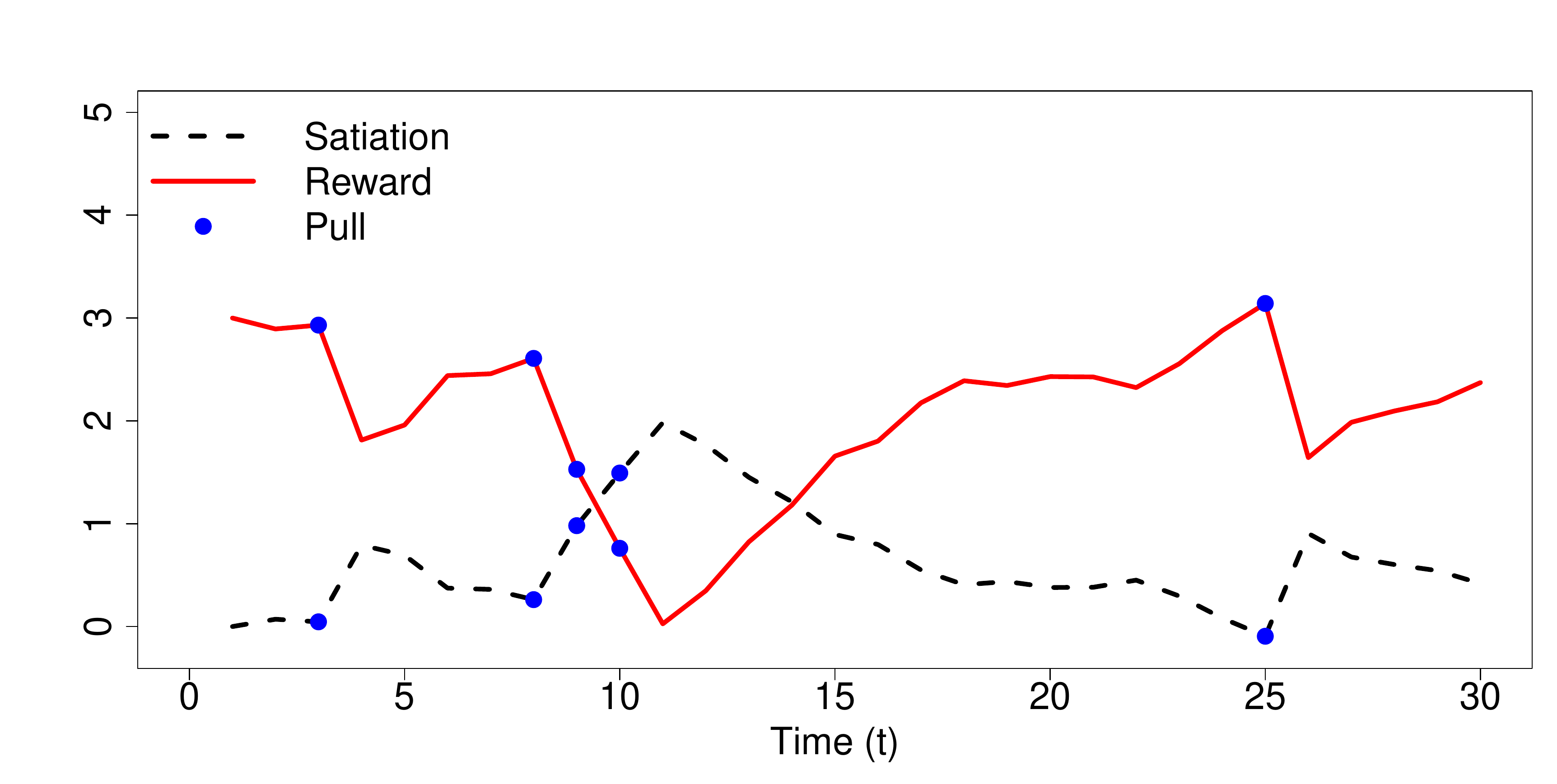}
         \caption{$\gamma_k = .8, \lambda_k = 1.5, b_k = 3$}
         \label{fig:gamma_p8_sigma_p1}
     \end{subfigure}
        \caption{
        These plots illustrate the satiation level and reward 
        of an arm from time $1$ to $30$. 
        The two plots are generated 
        with the same pull sequence,
        base rewards $b_k = 3$ 
        and realized noises with variance $\sigma_z = .1$. 
        In Figure~\ref{fig:gamma_p5_sigma_p1}, $\gamma_k = .5$
        and $\lambda_k = 3$.
        In Figure~\ref{fig:gamma_p8_sigma_p1}, 
        $\gamma_k = .8$
        and $\lambda_k = 1.5$.
        In both cases, the arm has started with $0$
        as its base satiation level. 
        \textbf{Black dashed line:} 
        the satiation level. 
        \textcolor{red}{\textbf{Red solid line:}} the reward.
        \textcolor{blue}{\textbf{Blue dots:}} time steps 
        where the arm is pulled.}
        \label{fig:dynamics_illustration}
        \vspace{-5px}
\end{figure*}

\section{Rebounding Bandits Problem Setup}
\label{sec:setup}

Consider the set of $K$ arms 
$[K] := \{1, \ldots, K\}$ 
with bounded base rewards $b_1, \ldots, b_K$. 
Given a horizon $T$, a policy 
$\pi_{1:T} \coloneqq (\pi_1, \ldots, \pi_T)$ 
is a sequence of actions,
where $\pi_t \in [K]$ depends 
on past actions and observed rewards. 
For any arm $k \in [K]$, 
we denote its pull history from $0$ to $T$ 
as the binary sequence 
$u_{k,0:T} \coloneqq (u_{k,0}, \ldots, u_{k,T})$,
where $u_{k,0} = 0$ and for $t \in [T]$, 
$u_{k,t} = 1$ if $\pi_t = k$ and $u_{k,t} = 0$ otherwise. 
The subsequence of $u_{k,0:T}$ 
from $t_1$ to $t_2$ (including both endpoints) is denoted 
by $u_{k,t_1:t_2}$.

At time $t$, each arm $k$
has a satiation level $s_{k,t}$ 
that depends on a \emph{satiation retention} factor 
$\gamma_k \in [0, 1)$, as follows
\begin{align}\label{eq:satiation-dynamics}
    s_{k,t} := \gamma_k (s_{k,t-1} + u_{k,t-1}) + z_{k,t-1}, 
    \; \forall t > t_0^k,
\end{align}
where $t_0^k := \min_t \{t:~u_{k,t} = 1\}$ %
is the first time arm $k$ is pulled and 
$z_{k,t-1}$ is independent and identically distributed noise 
drawn from $\mathcal{N}(0, \sigma_z^2)$,
accounting for incidental (uncorrelated) 
factors in the satiation dynamics.
Because satiation requires exposure,
arms only begin to have nonzero 
satiation levels after their first pull,
i.e., 
$s_{k, 0} = \ldots = s_{k,t_0^k} = 0$.

At time $t \in [T]$, 
if arm $k$ is played 
with a current satiation level $s_{k,t}$, 
the agent receives reward
$\mu_{k,t} := b_k - {\lambda_k s_{k,t}}$,
where $b_k$ is the base reward for arm $k$ 
and $\lambda_k \geq 0$ is a bounded 
\emph{exposure influence} factor. 
We use \emph{satiation influence} to 
denote the product of 
the exposure influence factor $\lambda_k$ %
and the satiation level $s_{k,t}$. %
In Figure~\ref{fig:dynamics_illustration},
we show how rewards evolve 
in response to both pulls
and the stochastic dynamics
under two sets of parameters. 
The expected reward %
of arm $k$ 
(where the expectation 
is taken over all noises 
associated with the arm)  
monotonically decreases 
by diminishing amounts 
with consecutive pulls 
and increases with disuse 
by diminishing amounts. 

\begin{remark}[negative expected reward]
We note that there exist choices of $b_k, \gamma_k, \lambda_k$ for which the expected reward of arm $k$ can be negative. 
In the traditional bandits setup, one must pull an arm at every time step. Thus, what matters are the relative rewards and the problem is mathematically identical, 
regardless of whether the expected rewards 
range from $-10$ to $0$ or $0$ to $10$. 
In addition, one might construct settings where negative expected rewards are reasonable. 
For example, when one of the arms 
corresponds to no recommendation 
with $0$ being its expected reward (e.g., $b_k=0$, $\lambda_k=0$), 
then
the interpretation of negative expected reward would be that the corresponding arm (item)  
is less preferred relative to not being recommended.
\end{remark}

Given horizon $T \geq 1$, %
we seek an optimal pull sequence $\pi_{1:T}$,
where $\pi_t$ %
depends on past rewards and actions 
$(\pi_1, \mu_{\pi_1, 1}, \ldots, \pi_{t-1}, \mu_{\pi_{t-1}, t-1})$
and maximizes the expected cumulative reward:
\begin{align}\label{eq:obj_G}
    G_T(\pi_{1:T}) := \mathbb{E}\left[ \textstyle \sum_{t=1}^T \mu_{\pi_t,t}\right].
\end{align}

\paragraph{Additional Notation}
Let $\overline{\gamma} \coloneqq \max_{k \in [K]} \gamma_k$ %
and $\overline{\lambda} \coloneqq \max_{k \in [K]} \lambda_k$. 
We use $a \lesssim b$ 
when $a \leq C b$ 
for some positive constant $C$.

\section{Planning with Known Dynamics}    
\label{sec:instantaneous-stochasticity}
Before we can hope to learn an optimal policy
with unknown stochastic dynamics,
we need to establish a procedure for planning
when the satiation retention factors, 
exposure influence factors,
and base rewards are known.
We begin by presenting several planning strategies
and analyzing them under deterministic dynamics,
where the past pulls exactly 
determine each arm's satiation level, i.e., 
$s_{k,t} = \gamma_k (s_{k, t-1} + 
u_{k,t-1})$, $\forall t > t_0^k$. 
With some abuse of notation, 
at time $t \geq 2$,
given a pull sequence $u_{k,0:t-1}$, 
we can express the satiation 
and the {expected}\footnote{
We use ``expected reward'' 
to emphasize that 
all results in this section also apply to settings where the satiation 
dynamics are deterministic 
but the rewards are stochastic, i.e., 
$\mu_{k,t} = b_k - \lambda_k s_{k,t} + e_{k,t}$
for independent mean-zero noises $e_{k,t}$.} reward
of each arm as 
\begin{align}\label{eq:expected-reward}
    {s}_{k,t}(u_{k,0:t-1})%
    &= \gamma_k \left(s_{k,t-1} + u_{k, t-1}\right) = 
    \gamma_k \left( \gamma_k \left(s_{k,t-2} + u_{k, t-2}\right) \right) + \gamma_k u_{k, t-1}
    = \textstyle \sum_{i=1}^{t-1} \gamma_k^{t-i} u_{k,i}, \nonumber\\
   {\mu}_{k,t}(u_{k,0:t-1})
&= %
b_k - \lambda_k \left( \textstyle  \sum_{i=1}^{t-1} \gamma_k^{t-i} u_{k,i}\right). 
\end{align}
At time $t=1$, we have that 
${s}_{k,1}(u_{k,0:0}) = 0$ 
and ${\mu}_{k,1}(u_{k,0:0}) = b_k$ 
for all $k \in [K]$. 
Since the arm parameters 
$\{\lambda_k, \gamma_k, b_k\}_{k=1}^K$ are known, 
our goal~\eqref{eq:obj_G} simplifies 
to finding a pull sequence that solves 
the following bilinear integer program:
\allowdisplaybreaks
\begin{align}\label{eq:obj}
    \!\!\max_{u_{k, t}} 
    \left\{
    \sum_{k=1}^K \sum_{t=1}^T u_{k,t}  \bigg(b_k - \lambda_k   \sum_{i=0}^{t-1} \gamma_k^{t-i} u_{k,i} \bigg)\! :\! 
    \begin{array}{l}
    \sum\limits_{k=1}^K u_{k,t} = 1,  \quad \forall t \in [T],\\ 
    u_{k,t} \in \{0,1\}, ~~
    u_{k,0} =0,  ~ \forall k \in [K], \forall t \in [T] 
    \end{array}\!\!\!\right\}
\end{align}
where the objective maximizes 
the expected cumulative reward 
associated with the pull sequence 
and the constraints ensure 
that at each time period 
we pull exactly one arm.
Note that~\eqref{eq:obj} 
includes products of decision variables $u_{k,t}$
leading to bilinear terms in the objective. 
In Appendix~\ref{appendix:restate-obj}, 
we provide an equivalent integer linear program.

\subsection{The Greedy Policy}
\label{sec:greedy}
At each step, the greedy policy $\pi^g$
picks the arm with the highest 
instantaneous expected reward.
Formally, at time $t$, 
given the pull history
$\{u_{k, 0:t-1}\}_{k=1}^K$, 
the greedy policy picks
\begin{align*}
    \pi^g_t \in \argmax_{k \in [K]} {\mu}_{k,t}(u_{k,0:t-1}).
\end{align*}
In order to break ties,
when all arms have the same expected reward, 
the greedy policy chooses 
the arm with the lowest index.

Note that the greedy policy
is not, in general, optimal.
Sometimes, we are better off allowing
the current best arm
to rebound even further, 
before pulling it again.
\begin{example}
Consider the case with two arms. 
Suppose that arm $1$ 
has base reward $b_1$, 
satiation retention factor 
$\gamma_1 \in (0,1)$,
and exposure influence factor $\lambda_1 = 1$. 
For any fixed time horizon $T > 2$, 
suppose that arm $2$ has 
$b_2 = b_1 + \frac{\gamma_2 - \gamma_2^{T}}{1-\gamma_2}$ 
where $\gamma_2 \in (0, 1)$ and $\lambda_2 = 1$. 
The greedy policy $\pi_{1:T}^g$
will keep pulling arm $2$ until time $T-1$ 
and then play arm $1$ (or arm $2$) at time $T$. 
This is true because if we keep 
pulling arm $2$ until $T-1$, 
at time $T$, we have
${\mu}_{2,T}(u_{2,0:T-1}) = b_1 = {\mu}_{1,T}(u_{1,0:T-1})$.  
However, the policy $\pi^n_{1:T}$,
where $\pi^n_t = 2$ if $t \leq T-2$, 
$\pi^n_{T-1}=1$, and $\pi^n_{T}=2$, 
obtains a higher expected cumulative reward. 
In particular, the difference 
$G_T(\pi^n_{1:T}) - G_T(\pi^g_{1:T})$ 
will be $\gamma_2 - \gamma_2^{T-1}$.
\label{ex:greedy-not-optimal}
\end{example}

\subsection{When is Greedy Optimal?}
When the satiation retention factors $\gamma_k=0$ 
for all $k \in [K]$, 
i.e., when the satiation effect is always $0$, 
we know that the greedy policy 
(which always plays the arm 
with the highest instantaneous expected reward) is optimal. 
However, when satiation can be nonzero,
it is less clear under what conditions 
the greedy policy performs optimally.
This question is of special interest
when we consider human decision-making,
since we cannot expect people 
to solve large-scale 
bilinear integer programs
every time they pick music to listen to.

In this section, we show that 
when all arms share the same properties
($\gamma_k, \lambda_k, b_k$ are identical for $k \in [K]$), 
the greedy policy is optimal. 
In this case, the greedy policy 
exhibits variety-seeking behavior
as it plays the arms cyclically. 
Interestingly, this condition aligns 
with early research that has motivated 
studies on satiation~\citep{tucker1964development, mcconnell1968development}:
when controlling for marketing variables (e.g., 
the arm parameters $\gamma_k, \lambda_k, b_k$), 
researchers still observe variety-seeking behaviors
of consumers (e.g., playing arms in a cyclic order).

\begin{assumption}\label{assump:same-dynamics}
$\gamma_1 = \ldots = \gamma_K = \gamma$,~ $\lambda_1 = \ldots = \lambda_K = \lambda$, 
and $b_1 = \ldots = b_K = b$. 
\end{assumption}

We start with characterizing the greedy policy when Assumption~\ref{assump:same-dynamics} holds.

\begin{lemma}[Greedy Policy Characterization]\label{lemma:greedy-periodic}
Under Assumption~\ref{assump:same-dynamics}
and the tie-breaking rule 
that when all arms have the same expected reward, 
the greedy policy chooses 
the one with the lowest arm index, 
the sequence of arms 
pulled by the greedy policy 
forms a periodic sequence:
$
    \pi_1 = 1, \pi_2 = 2, 
    \cdots, \pi_K = K, \text{ and } \pi_{t+K} = \pi_t, \; %
    \forall t \in \mathbb{N}_+.
$
\end{lemma}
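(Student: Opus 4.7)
The plan is an induction on $t$ that simultaneously establishes (i) $\pi_t = ((t-1) \bmod K) + 1$ and (ii) a closed form for the satiation levels under this cyclic history. Write $t = pK + q$ with $0 \le q < K$. The base case $t = 1$ (so $p = q = 0$) is immediate: every arm has satiation $0$ and reward $b$, and the tie-breaking rule picks arm $1$.

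For the inductive step I would apply the explicit expression for ${s}_{k,t+1}$ from~\eqref{eq:expected-reward}. Under the inductive hypothesis, arms $1,\dots,q$ have been pulled at times $k, k+K, \dots, k+pK$ (a total of $p+1$ pulls) while arms $q+1,\dots,K$ have been pulled at $k, k+K, \dots, k+(p-1)K$ (a total of $p$ pulls if $p\ge 1$, and $0$ pulls if $p=0$). Because the retention factor $\gamma$ is shared across arms, the resulting geometric sums collapse to
\begin{align*}
s_{k,t+1} \;=\;
\begin{cases}
\gamma^{\,q+1-k}\,\dfrac{1-\gamma^{(p+1)K}}{1-\gamma^K}, & 1 \le k \le q,\\[6pt]
\gamma^{\,K+q+1-k}\,\dfrac{1-\gamma^{pK}}{1-\gamma^K}, & q+1 \le k \le K,\ p \ge 1,\\[6pt]
0, & q+1 \le k \le K,\ p = 0.
\end{cases}
\end{align*}

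It then remains to verify that arm $q+1$ attains the smallest satiation (hence the largest expected reward, since $\lambda \ge 0$), with any tie broken in its favor by the index rule. When $p=0$, arms $q+1,\dots,K$ all have satiation $0$ and tie-breaking selects $q+1$. When $p\ge 1$, arm $q+1$ has satiation $\gamma^K(1-\gamma^{pK})/(1-\gamma^K)$; strict dominance over $k > q+1$ follows from $\gamma^{K+q+1-k} > \gamma^K$, and dominance over $k \le q$ uses both $\gamma^{q+1-k} > \gamma^K$ (since $q+1-k \le q < K$) and $1-\gamma^{(p+1)K} > 1-\gamma^{pK}$. The greedy rule therefore selects arm $q+1 = ((t+1)-1)\bmod K + 1$, closing the induction.

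The only real difficulty is the bookkeeping: one must align the partial-cycle index $q$ with the two arm regimes $k \le q$ and $k > q$, and check that the exponents in the geometric sums line up in each case. Beyond that, the argument is purely mechanical, relying on the symmetry imposed by Assumption~\ref{assump:same-dynamics} to reduce the optimality check to a comparison of a handful of powers of $\gamma$.
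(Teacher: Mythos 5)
Your proof is correct and follows essentially the same route as the paper's: an induction on the time step that uses the cyclic pull history to compute each arm's satiation level and then compares powers of $\gamma$ to conclude that arm $q+1$ is selected. The only cosmetic difference is that you write the satiation levels as fully explicit geometric sums, whereas the paper carries a common satiation value $s$ from the previous cycle; both arguments also share the same implicit assumption that $\gamma$ and $\lambda$ are strictly positive so that the comparisons are strict.
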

In this case, the greedy policy is equivalent 
to playing the arms in a cyclic order. 
All proofs for the paper 
are deferred to the Appendices.

\begin{theorem}\label{thm:greedy-cumulative-reward}
Under Assumption~\ref{assump:same-dynamics}, given any horizon $T$, the greedy policy $\pi_{1:T}^g$ is optimal.
\end{theorem}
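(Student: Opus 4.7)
The plan is to reduce the objective to a discounted pair-counting form and then argue optimality by a local exchange. Under Assumption~\ref{assump:same-dynamics}, equation~\eqref{eq:expected-reward} gives $\sum_{t=1}^T \mu_{\pi_t,t} = Tb - \lambda \sum_{t=1}^T s_{\pi_t,t} = Tb - \lambda F(\pi_{1:T})$ where
\[
F(\pi_{1:T}) \;:=\; \sum_{1 \le i < t \le T} \gamma^{\,t-i}\,\mathbf{1}[\pi_i = \pi_t].
\]
So maximizing $G_T$ is equivalent to minimizing $F$, which depends only on the partition of $[T]$ into arm-pull sets. By Lemma~\ref{lemma:greedy-periodic}, $\pi^g_t = ((t-1) \bmod K) + 1$, so the same-arm pairs of $\pi^g$ occur precisely at gaps that are positive multiples of $K$, yielding the closed form $F(\pi^g) = \sum_{j \ge 1,\, jK \le T-1}(T - jK)\gamma^{jK}$.

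Second, I would show $F(\pi) \ge F(\pi^g)$ for every feasible $\pi$ by an exchange-and-induct argument. Suppose toward contradiction that an optimal $\pi^* \ne \pi^g$ exists, and let $t^*$ be the earliest position where the two disagree. At $t^*$ the greedy rule plays the arm $a$ with smallest current satiation (lowest index in the event of a tie, by the tie-breaking rule in Lemma~\ref{lemma:greedy-periodic}), whereas $\pi^*_{t^*} = b \ne a$ satisfies $s_{b,t^*} > s_{a,t^*}$. Let $t' > t^*$ be the next pull of $a$ in $\pi^*$. If no such $t'$ exists, then the simpler ``relabel every future $b$-pull as an $a$-pull'' policy strictly reduces $F$, because the head-start $s_{a,t^*} < s_{b,t^*}$ propagates into a strictly smaller contribution at every one of those positions. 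Otherwise, define $\pi'$ to agree with $\pi^*$ everywhere except with $\pi'_{t^*} = a$ and $\pi'_{t'} = b$. The change $F(\pi') - F(\pi^*)$ decomposes as (i) a saving at $t^*$ from pulling the less-satiated arm, (ii) an extra contribution at $t'$ from pulling the more-satiated arm, and (iii) an adjustment from all subsequent same-arm pairs involving $a$ or $b$. Using that $a$ is the least-recently-pulled arm at $t^*$ --- which tightly constrains where $a$'s prior pulls sit --- the three contributions group into a single factor $(1 - \gamma^{\,t'-t^*})$ multiplying a nonnegative weighted sum, so $F(\pi') \le F(\pi^*)$. Since $\pi'$ agrees with $\pi^g$ on at least one additional initial position, iterating this exchange finitely many times transforms $\pi^*$ into $\pi^g$ without ever increasing $F$, contradicting $F(\pi^*) < F(\pi^g)$.

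The main obstacle is the bookkeeping in step (iii): the swap at $t^*$ and $t'$ alters the satiation trajectories of both $a$ and $b$ at every time in between and afterwards, so the net change in $F$ has to be read off a sum over all later $a$- and $b$-pulls in $\pi^*$. I expect the cleanest packaging is to rewrite $F(\pi') - F(\pi^*)$ as $(\gamma^{\,t'-t^*} - 1)$ times a weighted sum of same-arm indicators and then sign the weighted sum using the fact that arm $a$ has strictly fewer (discounted) prior pulls than arm $b$ at time $t^*$ --- this is precisely the inequality $s_{a,t^*} < s_{b,t^*}$ that the greedy choice exploits. Once that monotonicity is verified for a single swap, the outer induction on the first disagreement position closes the theorem.
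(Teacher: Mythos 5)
Your reduction of the objective to $G_T(\pi_{1:T}) = Tb - \lambda F(\pi_{1:T})$ with $F(\pi_{1:T}) = \sum_{i<t}\gamma^{t-i}\mathbf{1}[\pi_i=\pi_t]$ is correct and is essentially the Max $K$-Cut reformulation the paper records in Proposition~\ref{thm:max k-cut}. The gap is in the exchange lemma. Writing $A^- = s_{a,t^*}$, $B^- = s_{b,t^*}$, $P_a$ for the $a$-pulls after $t'$, $P_b^{(1)}$ for the $b$-pulls in $(t^*,t')$ and $P_b^{(2)}$ for the $b$-pulls after $t'$, the transposition of positions $t^*$ and $t'$ gives
\begin{equation*}
F(\pi')-F(\pi^*) = (1-\gamma^{t'-t^*})\Bigl[(A^--B^-)-\sum_{j\in P_a}\gamma^{j-t'}+\sum_{j\in P_b^{(2)}}\gamma^{j-t'}\Bigr]+\sum_{j\in P_b^{(1)}}\bigl(\gamma^{t'-j}-\gamma^{j-t^*}\bigr),
\end{equation*}
and the $P_b^{(2)}$ term enters with the \emph{wrong} sign: shifting the $b$-pull from $t^*$ to $t'$ moves it closer to every later $b$-pull, and this gain can dominate the savings $B^--A^-$. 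Concretely, take $K=3$, $\pi^*=(1,2,3,2,1,2,2)$: the first disagreement with greedy is $t^*=4$ with $a=1$, $b=2$, $s_{1,4}=\gamma^3<\gamma^2=s_{2,4}$ strictly, and $t'=5$; your swap yields $\pi'=(1,2,3,1,2,2,2)$ with $F(\pi')-F(\pi^*)=\gamma-\gamma^2+\gamma^3-\gamma^4=\gamma(1-\gamma)(1+\gamma^2)>0$. So the claimed grouping into $(1-\gamma^{t'-t^*})$ times a nonnegative sum is false as a local algebraic statement, and the induction cannot proceed without importing global information about the suffix of $\pi^*$.

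This is precisely the difficulty the paper's proof is built to handle. Rather than transposing $t^*$ with a possibly distant $t'$, it picks $t^*$ to be the \emph{last} non-greedy time (so the suffix is greedy), swaps the two \emph{adjacent} positions $t^*$ and $t^*+1$, and then \emph{relabels} all subsequent pulls of the two arms $k_1,k_2$ according to a three-case rule that tracks which of the two has been pulled more often up to each time (Cases I--III), with Lemma~\ref{lemma:greedy-opt-auxillary} propagating the needed satiation-gap invariants across case switches. It is this global relabeling of the two arms' future schedules --- not a two-position transposition --- that makes every residual term carry the right sign. To repair your argument you would either need to adopt such a relabeling, or use the optimality of $\pi^*$ to rule out suffixes like the one in the counterexample, which amounts to a different and substantially more involved proof than the one you sketch.
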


\begin{remark}
Theorem~\ref{thm:greedy-cumulative-reward} suggests
that when the (deterministic) satiation dynamics 
and base rewards are identical across arms, 
planning does not require knowledge 
of those parameters.
\end{remark}

Lemma~\ref{lemma:greedy-periodic} and Theorem~\ref{thm:greedy-cumulative-reward}
lead us to conclude the following result: 
when recommending items 
that share the same properties, 
the best strategy is to show the users 
a variety of recommendations 
by following the greedy policy.

On a related note,  Theorem~\ref{thm:greedy-cumulative-reward}
also gives 
an exact Max K-Cut  
of a complete graph $\mathcal{K}_T$ on $T$ vertices, 
where the edge weight connecting vertices $i$ and $j$ 
is given by $e(i, j) = \lambda\gamma^{|j-i|}$ for $i \neq j$. 
The Max K-Cut problem partitions the vertices of a graph 
into $K$ subsets $P_1, \ldots P_K$, such that 
the sum of the edge weights connecting the subsets 
are maximized~\citep{frieze1997improved}. 
Mapping the Max K-Cut problem back to our original setup, 
each vertex represents a time step. 
If vertex $i$ is assigned to subset $P_k$, 
it suggests that arm $k$ should be played at time $i$. 
The edge weights $e(i,j) = \lambda \gamma^{|j-i|}$ for $i \neq j$ 
can be seen 
as the reduction in satiation influence achieved 
by not playing the same arm 
at both time $i$ and time $j$. 
The goal~\eqref{eq:obj} is to maximize the total satiation influence reduction.

\begin{proposition}[Connection to Max K-Cut]
Under Assumption~\ref{assump:same-dynamics}, %
an optimal solution to~\eqref{eq:obj} 
is given by a Max K-Cut on $\mathcal{K}_T$,
where $\mathcal{K}_T$ is a complete graph on $T$ vertices 
with edge weights $e(i,j) = \lambda \gamma^{|j-i|}$ for all $i \neq j$. 
\label{thm:max k-cut}
\end{proposition}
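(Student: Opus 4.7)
The plan is to rewrite the objective in~\eqref{eq:obj} under Assumption~\ref{assump:same-dynamics} so that, up to additive constants, maximizing expected cumulative reward is identical to maximizing the value of a cut on $\mathcal{K}_T$. The key observation is that the feasibility constraints $\sum_{k=1}^K u_{k,t}=1$ together with $u_{k,t}\in\{0,1\}$ induce a partition of the time steps $[T]$ into (at most) $K$ clusters $P_1,\ldots,P_K$ with $P_k := \{t \in [T] : u_{k,t}=1\}$; the task is simply to identify the right objective to optimize over this partition.

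First, I would substitute $\gamma_k=\gamma$, $\lambda_k=\lambda$, $b_k=b$ into~\eqref{eq:obj}. Because $\sum_{k,t} u_{k,t}\, b = bT$ is a constant and $u_{k,0}=0$, the objective collapses to $bT - \lambda \sum_{k=1}^K \sum_{t=1}^T \sum_{i=1}^{t-1} \gamma^{t-i}\, u_{k,i}\,u_{k,t}$. Next, I would interpret the remaining bilinear double sum combinatorially: for any fixed pair $1\le i<t\le T$, exactly one arm is played at time $i$ and exactly one at time $t$, so $\sum_{k=1}^K u_{k,i}\,u_{k,t}$ equals $1$ when $i$ and $t$ are assigned to the same arm and $0$ otherwise. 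Swapping the order of summation gives
\begin{align*}
    G_T(\pi_{1:T}) \;=\; bT \;-\; \sum_{1\le i<t\le T}\lambda\,\gamma^{t-i}\,\mathbbm{1}[i\sim t],
\end{align*}
where $i\sim t$ denotes that $i$ and $t$ lie in the same cluster of the induced partition $\{P_k\}_{k=1}^K$. The subtracted sum is precisely the total intra-cluster edge weight on $\mathcal{K}_T$ under the edge weights $e(i,j)=\lambda\gamma^{|j-i|}$.

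Finally, let $W := \sum_{1\le i<t\le T}\lambda\gamma^{t-i}$ denote the total edge weight of $\mathcal{K}_T$, which depends only on $T$, $\lambda$, $\gamma$, and is therefore a constant with respect to the decision variables. Writing $C(\{P_k\})$ for the value of the cut induced by the partition, we have the identity $W = (\text{intra-cluster weight}) + C(\{P_k\})$, hence $G_T(\pi_{1:T}) = bT - W + C(\{P_k\})$. Maximizing $G_T$ over feasible pull sequences is therefore equivalent to maximizing $C(\{P_k\})$ over partitions of $[T]$ into at most $K$ blocks, which is exactly the Max $K$-Cut problem on $\mathcal{K}_T$. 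An optimal policy is recovered from an optimal partition by setting $\pi_t = k$ whenever $t\in P_k$.

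This proof is essentially a reformulation rather than a technical argument, so there is no real obstacle; the only care needed is (i) correctly separating the constant base-reward contribution from the bilinear satiation penalty, and (ii) recognizing $\sum_{k} u_{k,i}u_{k,t}$ as the indicator that $i$ and $t$ belong to the same block of the partition, which hinges on the exactly-one-pull-per-time-step constraint.
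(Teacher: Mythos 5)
Your proposal is correct and follows essentially the same route as the paper's proof: substitute the common parameters, drop the constant $bT$, recognize $\sum_k u_{k,i}u_{k,t}$ as the same-cluster indicator for the partition $P_k=\{t:u_{k,t}=1\}$, and use the identity that total edge weight equals intra-cluster weight plus cut value. The only cosmetic difference is that the paper treats the degenerate case $T\le K$ separately, which your general argument already subsumes.
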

Using Lemma~\ref{lemma:greedy-periodic} and Theorem~\ref{thm:greedy-cumulative-reward}, 
we obtain that an exact Max K-Cut 
of $\mathcal{K}_T$ 
is given by $\forall k \in [K], P_k = \{t \in [T]: t \equiv k \Mod{K}\}$, which may be a result of separate interest.

\subsection{The $w$-lookahead Policy}
\label{sec:known-different-dynamics}
To model settings where the arms correspond 
to items with different characteristics
(e.g., we can enjoy tacos on consecutive days
but require time to recover 
from a trip to the steakhouse)
we must allow the satiation parameters
to vary across arms.
Here, the greedy policy may not be optimal.
Thus, we consider more general lookahead policies
(the greedy policy is a special case).
Given a window of size $w$ 
and the current satiation levels, %
the $w$-lookahead policy picks actions 
to maximize the total reward
over the next $w$ time steps.
Let $l$ denote $\lceil T/w \rceil$.
Define $t_i = \min \{iw, T\}$ 
for $i \in [l]$ and $t_0 = 0$. 
More formally, the $w$-lookahead policy 
$\pi_{1:T}^w$ is defined as follows: 
for any $i \in [l]$, 
given the previously chosen arms'
corresponding pull histories   
$\{u^w_{k,0:t_{i-1}}\}_{k=1}^K$ where
$u^w_{k,0}=0$ and  
$u^w_{k,t}=1$ if (and only if) $\pi^w_t = k$, 
the next $w$ (or $T \text{ mod } w$) actions 
$\pi^w_{t_{i-1}+1:t_i}$ are given by
\begin{align}\label{eq:lookahead}
    \max_{\pi_{t_{i-1}+1:t_i}} 
    \left\{%
    \sum_{t={t_{i-1}+1}}^{t_i} 
    {\mu}_{\pi_t, t}(u_{\pi_t, 0:t-1}): 
    \begin{array}{l}%
    u_{k, 0:t_{i-1}} = u^w_{k, 0:t_{i-1}}, \quad \forall k \in [K],\\
    \sum_{k=1}^K u_{k,t}=1, \quad \forall t \in [T],  \\ %
    u_{k,t} \in \{0,1\}, \quad \forall k \in [K], t \in [t_i] %
    \end{array}
    \right\} %
\end{align}
In the case of a tie, 
one can pick any of the sequences 
that maximize~\eqref{eq:lookahead}. 
We recover the greedy policy 
when the window size $w=1$, 
and finding the $w$-lookahead policy 
for the window size $w=T$ 
is equivalent to solving~\eqref{eq:obj}. 

\begin{remark}
Another reasonable lookahead policy, 
which requires planning ahead at every time step, 
would be the following:
at every time $t$, plan for the next $w$ actions 
and  
follow them for a single time step. 
Studying the performance of such a policy is of future interest. 
To lighten the computational load, 
we adopt the current $w$-lookahead policy which only requires planning every $w$ time steps. 
\end{remark}

For the rest of the paper, 
we use 
$\texttt{Lookahead}(\{\lambda_k, \gamma_k, b_k\}_{k=1}^K, \{u^w_{k,0:t_{i-1}}\}_{k=1}^K, 
t_{i-1}, t_{i})$ 
to refer to the solution of~\eqref{eq:lookahead}, 
where 
the arm parameters are $\{\lambda_k, \gamma_k, b_k\}_{k=1}^K$, 
the historical pull sequences of all arms
till time $t_{i-1}$ are given by 
$\{u^w_{k,0:t_{i-1}}\}_{k=1}^K$, 
and the solution corresponds to the actions that should be taken 
for the next $t_i - t_{i-1}$ time steps.

\begin{theorem} \label{thm:w-lookahead}
Given any horizon $T$, 
let $\pi^*_{1:T}$ be a solution to~\eqref{eq:obj}. 
For a fixed window size $w \leq T$, we have that 
$$G_T(\pi^*_{1:T}) - G_T(\pi^w_{1:T}) 
\leq 
\frac{\overline{\lambda} \overline{\gamma} %
(1 - \overline{\gamma}^{T-w})}{(1 - \overline{\gamma})^2} \lceil T/w \rceil.$$ 
\end{theorem}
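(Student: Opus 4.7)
The plan is to decompose the regret block by block and exploit the defining optimality of the $w$-lookahead policy within each block. Let $l := \lceil T/w \rceil$, and let $A_i$ (resp.\ $B_i$) denote the cumulative reward collected by $\pi^*_{1:T}$ (resp.\ $\pi^w_{1:T}$) during block $i$, i.e., over time steps $\{t_{i-1}+1,\ldots,t_i\}$. Then $G_T(\pi^*_{1:T}) - G_T(\pi^w_{1:T}) = \sum_{i=1}^l (A_i - B_i)$, and it suffices to bound each summand.

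The key device is a hybrid pull sequence $u^{\mathrm{hyb}}_{k,0:t_i}$ that agrees with $\pi^w_{1:T}$ on times $1,\ldots,t_{i-1}$ but uses $\pi^*_{1:T}$'s block-$i$ arm choices on times $t_{i-1}+1,\ldots,t_i$. By the definition of $\pi^w_{1:T}$ in~\eqref{eq:lookahead}, it maximizes the block-$i$ reward over all continuations of its own history, so the hybrid's block-$i$ reward $\tilde B_i$ satisfies $\tilde B_i \leq B_i$. Since the hybrid and $\pi^*_{1:T}$ apply identical block-$i$ actions, $A_i - \tilde B_i$ depends only on the pre-block history mismatch, which via the deterministic formula in~\eqref{eq:expected-reward} gives
\[
\bigl|s^{*}_{k,t} - s^{\mathrm{hyb}}_{k,t}\bigr| \;=\; \Bigl|\sum_{i'=1}^{t_{i-1}} \gamma_k^{t-i'}\bigl(u^{*}_{k,i'} - u^{w}_{k,i'}\bigr)\Bigr| \;\leq\; \sum_{i'=1}^{t_{i-1}} \overline{\gamma}^{\,t-i'}
\]
for every $t \in \{t_{i-1}+1,\ldots,t_i\}$ and every arm $k$. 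Multiplying by $\lambda_{\pi^{*}_t} \leq \overline{\lambda}$ and summing over $t$ in the block, the resulting double geometric sum evaluates to
\[
A_i - B_i \;\leq\; A_i - \tilde B_i \;\leq\; \frac{\overline{\lambda}\,\overline{\gamma}\,\bigl(1 - \overline{\gamma}^{\,t_{i-1}}\bigr)\bigl(1 - \overline{\gamma}^{\,t_i - t_{i-1}}\bigr)}{(1 - \overline{\gamma})^2}.
\]

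The final step is a short case analysis that upper bounds the product $(1-\overline{\gamma}^{t_{i-1}})(1-\overline{\gamma}^{t_i - t_{i-1}})$ by $1 - \overline{\gamma}^{T-w}$ for every $i$. Block $i=1$ contributes zero since $t_0 = 0$. For $i \leq l-1$, the definition of $l$ forces $(l-1)w < T$, hence $t_{i-1} = (i-1)w \leq (l-2)w < T-w$, which gives $1 - \overline{\gamma}^{t_{i-1}} \leq 1 - \overline{\gamma}^{T-w}$; combined with $(1-a)(1-b) \leq 1-a$ this yields the bound. For $i=l$, writing $q := (l-1)w$ one has $q \geq \max\{w,\,T-w\}$ and $t_l - t_{l-1} = T - q \in (0, w]$, so $1 - \overline{\gamma}^{T-q} \leq 1 - \overline{\gamma}^{T-w}$ and $(1-a)(1-b) \leq 1-b$ yields the same conclusion. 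Summing over $l = \lceil T/w \rceil$ blocks completes the proof.

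The main obstacle is precisely this closing bookkeeping, and in particular the last block when $T$ is not a multiple of $w$: here $t_{l-1}$ can strictly exceed $T-w$, so one must choose the appropriate side of $(1-a)(1-b) \leq \min\{1-a,\,1-b\}$ to retain the stated form $1-\overline{\gamma}^{T-w}$. Beyond this arithmetic, the argument is a clean block decomposition powered by the per-block optimality of $\pi^w_{1:T}$.
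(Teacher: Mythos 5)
Your proof is correct, and it reaches the stated bound by a decomposition that differs from the paper's. The paper telescopes over a chain of full-horizon hybrids $\tilde{\pi}^i_{1:T} = (\pi^w_{1:iw}, \pi^*_{iw+1:T})$: each telescoping term isolates a single block swap from $\pi^*$'s actions to $\pi^w$'s actions, uses the per-block optimality of $\pi^w$ to discard the within-block difference, and charges the remainder to the \emph{forward-propagating} satiation discrepancy over all times after that block, bounded crudely by $\sum_{t\geq 0}\overline{\lambda}\,\overline{\gamma}^{t}\cdot\overline{\gamma}/(1-\overline{\gamma})$. You instead compare $\pi^*$ and $\pi^w$ block by block directly, introduce for each block a single local hybrid (follow $\pi^w$'s history, then $\pi^*$'s block actions), invoke the same per-block optimality to get $\tilde B_i \leq B_i$, and charge $A_i - \tilde B_i$ to the \emph{backward-looking} effect of the accumulated history mismatch on the current block only. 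Your per-term bound $\overline{\lambda}\,\overline{\gamma}\,(1-\overline{\gamma}^{t_{i-1}})(1-\overline{\gamma}^{t_i-t_{i-1}})/(1-\overline{\gamma})^2$ is actually tighter than the paper's $\overline{\lambda}\,\overline{\gamma}\,(1-\overline{\gamma}^{T-iw})/(1-\overline{\gamma})^2$, at the cost of the closing case analysis you carry out (which I have checked: $i=1$ contributes zero, $i\leq l-1$ uses $t_{i-1}<T-w$, and $i=l$ uses $t_l-t_{l-1}\leq T-w$ via $(l-1)w\geq w$); the paper's per-term bound is monotone in $i$ and is dominated by the $i=1$ term without any casework. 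Both routes rest on the same two ingredients — per-block optimality of the lookahead policy and geometric decay of the satiation discrepancy — so the choice between them is a matter of where one prefers to pay the bookkeeping.
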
 

\begin{remark}\label{rem:w-lookahead}
Note that when $w = T$, the $w$-lookahead policy 
by definition is the optimal policy 
and in such case, the upper bound 
for the optimality gap of $w$-lookahead established in Theorem~\ref{thm:w-lookahead} is also $0$. 
In contrast to the optimal policy, 
the computational benefit of the $w$-lookahead policy 
becomes apparent when the horizon $T$ is large 
since it requires solving 
for a much smaller program~\eqref{eq:lookahead}. 
In general, the $w$-lookahead policy 
is expected to perform much better 
than the greedy policy 
(which corresponds to the case of $w=1$)
at the expense of a higher computational cost. 
Finally, we note that for 
the window size of $w = \sqrt{T}$,
we obtain $G_T(\pi^*_{1:T}) - G_T(\pi^w_{1:T}) \leq O(\sqrt{T})$. 
\end{remark}

\section{Learning with Unknown Dynamics: Preliminaries}
\label{sec:new_stochastic_satiation}

When the satiation dynamics 
are unknown and stochastic 
($\sigma_z > 0$), 
the learner faces a continuous-state
partially observable MDP 
because the satiation levels
are not observable. 
To set the stage,
we first introduce our
state representation
(\S~\ref{sec:state}) 
and a regret-based performance 
measure (\S~\ref{sec:regrets}).
In the next section, we will introduce EEP, 
our algorithm for rebounding bandits.

\subsection{State Representation}
\label{sec:state}
Following~\cite{ortner2012regret},
at any time $t \in [T]$,
we define a state vector $x_t$ 
in the state space $\mathcal{X}$
to be $x_t = (x_{1,t}, n_{1,t}, x_{2,t}, n_{2, t}, \ldots, x_{K,t}, n_{K,t})$,  
where $n_{k,t} \in \mathbb{N}$
is the number of steps at time $t$
since arm $k$ was last selected
and $x_{k,t} $ is the satiation influence 
(product of $\lambda_k$ and the satiation level)
as of the most recent pull of arm $k$.
Since the most recent pull happens at $t - n_{k,t}$,
we have  $x_{k,t} = b_k - \mu_{k, t - n_{k,t}}
= \lambda_k s_{k, t - n_{k,t}}$.
Recall that $\mu_{k,t - n_{k,t}}$ 
is the reward collected 
by pulling arm $k$ at time $t-n_{k,t}$. 
Note that %
$b_k$ is directly observed when arm $k$
is pulled for the first time
because there is no satiation effect.
The state at the first time step
is $x_{1} = (0, \ldots, 0)$. 
Transitions between two states 
$x_t$ and $x_{t+1}$ 
are defined as follows:  
If arm $k$ is chosen at time $t$, 
and reward $\mu_{k,t}$ is obtained, 
then the next state $x_{t+1}$ will satisfy 
(i) for the pulled arm $k$,
    $n_{k, t+1} = 1$
    and $x_{k, t+1} = b_k - \mu_{k,t}$;
(ii) for other arms $k' \neq k$, 
    $n_{k', t+1} = n_{k',t} +1$ if $n_{k', t} \neq 0$,
    $n_{k', t+1} = 0$ if $n_{k', t} = 0$,
    and the satiation influence remains 
    the same $x_{k',t+1} = x_{k', t}$. 

Given $\{\gamma_k, \lambda_k, {b}_k\}_{k=1}^K$, 
the reward function $r: \mathcal{X} \times [K] \to \mathbb{R}$ 
represents the \emph{expected} reward 
of pulling arm $k$ under state $x_t$: 

If $n_{k, t} =0$, 
then ${r}(x_t, k) = b_{k}$.
If $n_{k,t} \geq 1$, 
${r}(x_t, k) = b_{k} - \gamma_k^{n_{k,t}} x_{k, t} - \lambda_k \gamma_k^{n_{k,t}}$,
which equals $\mathbb{E}[\mu_{k,t} | x_t]$,
where the expectation is taken over the noises 
in between the current pull 
and the last pull of arm $k$. 
See Appendix~\ref{sec:mdp-setup} for
the full description of the MDP setup 
(including the transition kernel 
and value function definition)
of rebounding bandits. 

\subsection{Evaluation Criteria: $w$-step Lookahead Regret}
\label{sec:regrets}

In reinforcement learning (RL), 
the performance of a learner 
is often measured through 
a regret that compares the 
expected cumulative reward obtained by 
the learner against that of an
optimal policy in a competitor class~\cite{lattimore2020bandit}.  
In most episodic (e.g., finite horizon) RL  literature~\cite{ortner2012online,jaksch2010near}, 
regrets are defined in terms of episodes.
In such cases, the initial state 
is reset (e.g., to a fixed state)
after each episode ends, 
independent of previous actions taken by the leaner.
Unlike these episodic RL setups, 
in rebounding bandits, 
we cannot restart from the initial state 
because the satiation level
cannot be reset   
and user's memory depends 
on past received recommendations. 
Instead,~\cite{pike2019recovering} 
proposed 
a version of \emph{$w$-step lookahead regret}
that 
divides the 
$T$ time steps 
into $\lceil T/w \rceil$ episodes
where each episode (besides the last)  
consists of $w$ time steps. 
At the beginning of each episode, 
the initial state is reset 
but depends on how the learner  
has interacted with the user previously.
In particular, at the beginning 
of episode $i+1$ (at time $t=iw+1$), 
given that the learner has  
played $\pi_{1:iw}$ with corresponding 
pull sequence $u_{k,0:iw}$ for $k \in [K]$, 
we reset the initial state to be 
$x^i = (\mu_{1,iw+1}(u_{1,0:iw}), n_{1,iw+1}, \ldots,  
\mu_{K,iw+1}(u_{K,0:iw}), n_{K,iw+1})$
where $\mu_{k,t}(\cdot)$ is defined 
in~\eqref{eq:expected-reward}
and $n_{k,iw+1}$ is the number of steps 
since arm $k$ is last pulled 
by the learner as of time $iw+1$. 
Then, given the learner's policy $\pi_{1:T}$,
where $\pi_t: \mathcal{X} \to [K]$,  
the $w$-step lookahead regret, 
against a competitor class 
$\mathcal{C}^w$ (which we define later), 
is defined as follows:
\begin{align}\label{eq:lookahead-regret}
     \text{Reg}^{w}(T) &= \textstyle \sum_{i=0}^{\lceil T/w \rceil-1} 
    \max_{\Tilde{\pi}_{1:w} \in \mathcal{C}^w}\mathbb{E}\left[ \sum_{j=1}^{\min\{w, T-iw\}} r(x_{iw+j}, \Tilde{\pi}_{j}(x_{iw+j})) \Big| x_{iw+1} = x^i%
    \right] \nonumber \\
    &\qquad \textstyle - 
    \mathbb{E}\left[ \sum_{j=1}^{\min\{w,T-iw\}}  r(x_{iw+j}, {\pi}_{iw+j}(x_{iw+j})) \Big| x_{iw+1} = x^i%
    \right],
\end{align}
where the expectation is taken over 
$x_{iw+2}, \ldots, x_{\min\{(i+1)w, T\}}$.

The competitor class $\mathcal{C}^w$ 
that we have chosen consists of policies 
that depend on time steps, i.e., 
$
    \mathcal{C}^w = \left\{\Tilde{\pi}_{1:w}: 
    \Tilde{\pi}_t = \Tilde{\pi}_t(x_t) = \Tilde{\pi}_t(x_t'), \tilde{\pi}_t \in [K], \forall t \in [w], x_t, x_t' \in \mathcal{X} \right\}. 
$
We note that $\mathcal{C}^w$ 
subsumes many traditional competitor classes
in bandits literature, 
including the class of fixed-action policies 
considered in adversarial bandits~\cite{lattimore2020bandit}
and the class of periodic ranking policies~\cite{cella2020stochastic}. 
In our paper, the $w$-lookahead policy 
(including the $T$-lookahead policy given by~\eqref{eq:obj})
is a time-dependent policy 
that belongs to $\mathcal{C}^w$, 
since at time $t$, it will play a fixed action by solving~\eqref{eq:lookahead} using the true 
reward parameters $\{\lambda_k, \gamma_k, b_k\}_{k=1}^K$.
The time-dependent competitor class $\mathcal{C}^w$ 
differs from a state-dependent competitor class 
which includes all measurable functions $\Tilde{\pi}_t$ 
that map from $\mathcal{X}$ to $[K]$.
The state-dependent competitor class contains 
the optimal policy $\pi^*$
where $\pi^*_t(x_t)$ depends
on not just the time step 
but also the exact state $x_t$.
Finding the optimal state-dependent 
policy requires optimal planning for 
a continuous-state MDP,
which relies on state space discretizion~\cite{ortner2012online} 
or function approximation 
(e.g., approximate dynamic programming algorithms~\citep{munos2007performance, ernst2005tree, riedmiller2005neural}).
In Appendix~\ref{appendix:mdp-related}, 
we provide discussion and analysis on 
an algorithm compared against 
the optimal state-dependent policy. 
We proceed the rest of the main paper
with $\mathcal{C}^w$ defined above.

When $w=1$, the $1$-step lookahead regret 
is also known as the {instantaneous} regret, 
which is commonly used 
in restless bandits literature
and some nonstationary bandits papers including~\cite{mintz2020nonstationary}. 
Note that low instantaneous regret 
does not imply high expected cumulative 
reward in the long-term, 
i.e., one may benefit more by 
waiting for certain arms to rebound.  
When $w=T$, we recover the full horizon regret. 
As we have noted earlier, 
finding the optimal competitor policy
in this case 
is computationally intractable 
because the number of states,
even when the satiation dynamics are deterministic, 
grows exponentially with the horizon $T$. 
Finally, we note that the 
$w$-step lookahead regret 
can be obtained for not just 
policies designed to look $w$ 
steps ahead but 
any given policy. 
For a more comprehensive discussion
on these notions of regret,
see~\cite[Section 4]{pike2019recovering}.

\section{Explore-Estimate-Plan}
\label{sec:etc}

We now present \emph{Explore-Estimate-Plan (EEP)},
an algorithm for learning in rebounding bandits
with stochastic dynamics and unknown parameters,
that (i) collects data by pulling 
each arm a fixed number of times;
(ii) estimates the model's parameters
based on the logged data;
and then (iii) plans according 
to the estimated model.
Finally, we analyze EEP's regret.

Because each arm's base reward 
is known from the first pull, 
whenever arm $k$ is pulled 
at time $t$ and $n_{k,t} \neq 0$,  
we measure the satiation influence 
$\lambda_k s_{k,t}$,
which becomes the next state $x_{k,t+1}$: 
\begin{align}\label{eq:influence-dynamics-main-text}
   x_{k,t+1} &= \lambda_{k} s_{k, t} = 
    \lambda_{k} \gamma_k^{n_{k,t}} s_{k, t -n_{k,t}} +  \lambda_{k}\gamma_k^{n_{k,t}} +  \lambda_{k} \textstyle 
    \sum_{i=0}^{n_{k,t} - 1} \gamma_k^{i}  z_{k, t-1-i} \nonumber \\
    &= \gamma_k^{n_{k,t}} x_{k, t+1-n_{k,t}} + \lambda_k \gamma_k^{n_{k,t}}  + \lambda_k   \textstyle 
    \sum_{i=0}^{n_{k,t}-1} \gamma_k^{i} z_{k, t-1-i}.   
\end{align}
We note that the current state $x_{k,t}$ 
equals $x_{k, t+1-n_{k,t}}$,
since $x_{k, t+1-n_{k,t}}$ 
is the last observed satiation influence for arm $k$ 
and $n_{k,t}$ is the number of steps 
since arm $k$ was last pulled. 

\subsection{The Exploration Phase: Repeated Pulls}
\label{sec:exploration}
We collect a dataset $\mathcal{P}_k^n$ 
by consecutively pulling 
each arm $n+1$ times, in turn,
where $n \geq \lfloor T^{2/3}/K \rfloor$
(Line 4-7 of Algorithm~\ref{algo:rebounding_etc}).
Specifically, for each arm $k \in [K]$, 
the dataset $\mathcal{P}^{n}_k$ 
contains a single trajectory 
of $n+1$ observed satiation influences 
$\tilde{x}_{k,1}, \ldots, \tilde{x}_{k, n+1}$, 
where $\tilde{x}_{k,1} = 0$
and $\tilde{x}_{k,j}$ ($j > 1$) is
the difference between the first reward 
and the $j$-th reward from arm $k$. 
Thus, for $\tilde{x}_{k,j}, \tilde{x}_{k,j+1} \in \mathcal{P}^{n}_{k}$, 
using~\eqref{eq:influence-dynamics-main-text} 
with $n_{k,t}=1$ (because pulls are consecutive),
it follows that 
\begin{align}\label{eq:affine-system}
    \tilde{x}_{k,j+1} = \gamma_k \tilde{x}_{k, j} + d_k + \tilde{z}_{k,j},
\end{align}
where $d_k=\lambda_k \gamma_k$ and $\tilde{z}_{k,j}$ 
are independent samples 
from $\mathcal{N}(0, \sigma_{z,k}^2)$ 
with $\sigma_{z,k}^2 = \lambda_k^2 \sigma_z^2$. 
In Appendix~\ref{appendix:other-exploration-strategy}, 
we discuss other exploration strategies 
(e.g., playing the arms cyclically)
for EEP and their regret guarantees.

\subsection{Estimating the Reward Model and Satiation Dynamics}
\label{sec:system-identification-main-text}

For all $k \in [K]$, given the dataset  $\mathcal{P}^{n}_{k}$, 
we estimate $A_k = (\gamma_k, d_k)^\top$
using the \emph{ordinary least squares estimator}:
$$\widehat{A}_k \in \argmin_{A \in \mathbb{R}^2}\|\mathbf{Y_k} - \mathbf{\overline{X}_k} A\|_2^2,$$
where $\mathbf{Y_k} \in \mathbb{R}^n$ 
is an $n$-dimensional vector 
whose $j$-th entry is $\tilde{x}_{k, j+1}$ 
and $\mathbf{\overline{X}_k} \in \mathbb{R}^{n \times 2}$ 
takes as its $j$-th row the vector 
$\overline{x}_{k,j} = (\tilde{x}_{k, j}, 1)^\top$, 
i.e., 
$\tilde{x}_{k, j+1}$ is treated to be
the response to the covariates $\overline{x}_{k,j}$.
This suggests that 
\begin{align}\label{eq:affine-estimator}
    \widehat{A}_k
    = \begin{pmatrix}
           \widehat{\gamma}_k\\
           \widehat{d}_k
    \end{pmatrix}
    = \left(\mathbf{\overline{X}_k}^\top \mathbf{\overline{X}_k}\right)^{-1} \mathbf{\overline{X}_k}^\top \mathbf{Y_k}, 
\end{align}
and we take 
$\widehat{\lambda}_k = |\widehat{d}_k/\widehat{\gamma}_k|$.

The difficulty in analyzing 
the ordinary least squares estimator~\eqref{eq:affine-estimator} 
for identifying an affine dynamical system~\eqref{eq:affine-system} 
using a single trajectory of data 
comes from the fact that 
the samples are not independent. 
Asymptotic guarantees of the 
ordinary least squares estimators in this case  
have been studied previously 
in the control theory and time series communities
\citep{hamilton1994time,ljung1999system}. 
Recent work on system identifications 
for linear dynamical systems 
focuses on the sample complexity~\cite{simchowitz2018learning,sarkar2019near}. 
Adapting the proof of~\cite[Theorem 2.4]{simchowitz2018learning}, 
we derive the following theorem 
for identifying our affine 
dynamical system~\eqref{eq:affine-system}. 

\begin{theorem} 
Fix $\delta \in (0, 1)$. %
For all $k \in [K]$, 
there exists a constant  
$n_0(\delta, k)$ 
such that if the dataset $\mathcal{P}_k^{n}$ 
satisfies $n \geq n_0(\delta, k)$,
then 
\allowdisplaybreaks
\begin{align*}
    \mathbb{P}\left(\|\widehat{A}_k - A_k\|_2 \gtrsim \sqrt{1/(\psi n)} \right) \leq \delta, 
\end{align*}
where 
$
   \psi = \sqrt{\min\left\{\frac{\sigma_{z,k}^2(1-\gamma_{k})^2}{{16 d_k^2(1-\gamma_k^2)} + (1-\gamma_{k})^2{\sigma_{z,k}^2}},  \frac{\sigma_{z,k}^2}{4(1-\gamma_k^2)} \right\}}.
$
\label{thm:a_b_estimation_m=1} 
\end{theorem}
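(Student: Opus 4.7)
The plan is to adapt the proof of~\cite[Theorem 2.4]{simchowitz2018learning} to the affine setting~\eqref{eq:affine-system}, working with the augmented regressor $\overline{x}_{k,j}=(\tilde x_{k,j},1)^\top$ so that the intercept $d_k$ is absorbed into the slope. The OLS normal equations give $\widehat{A}_k - A_k = (\mathbf{\overline{X_k}}^\top \mathbf{\overline{X_k}})^{-1} \mathbf{\overline{X_k}}^\top \mathbf{z}_k$ for $\mathbf{z}_k:=(\tilde z_{k,1},\dots,\tilde z_{k,n})^\top$, which yields the classical decomposition
\begin{align*}
\|\widehat{A}_k-A_k\|_2 \;\leq\; \lambda_{\min}\!\bigl(\mathbf{\overline{X_k}}^\top\mathbf{\overline{X_k}}\bigr)^{-1/2} \cdot \bigl\|\bigl(\mathbf{\overline{X_k}}^\top\mathbf{\overline{X_k}}\bigr)^{-1/2}\mathbf{\overline{X_k}}^\top \mathbf{z}_k\bigr\|_2,
\end{align*}
so it suffices to bound the two factors separately and multiply.

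The right factor is a self-normalized martingale: since $\tilde z_{k,j}\sim\mathcal{N}(0,\sigma_{z,k}^2)$ is a martingale difference with respect to $\mathcal{F}_j := \sigma(\tilde x_{k,1:j})$ and $\overline{x}_{k,j}$ is $\mathcal{F}_j$-measurable, the Laplace-transform bound of~\cite{abbasi2011improved} controls it by $O(\sigma_{z,k}\sqrt{\log\det(I+\mathbf{\overline{X_k}}^\top\mathbf{\overline{X_k}})+\log(1/\delta)})$ with probability $1-\delta/2$. The other factor requires a high-probability lower bound on the Gram matrix, for which I would apply the block martingale small-ball (BMSB) machinery of~\cite{simchowitz2018learning}. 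Since $\tilde x_{k,j+1}\mid\mathcal{F}_j$ is $\mathcal N(\gamma_k \tilde x_{k,j}+d_k,\sigma_{z,k}^2)$, Gaussian anti-concentration yields a one-step small-ball inequality for $\langle w,\overline{x}_{k,j+1}\rangle$ in every unit direction $w=(w_1,w_2)^\top$, and the BMSB proposition upgrades this to $\lambda_{\min}(\mathbf{\overline{X_k}}^\top\mathbf{\overline{X_k}})\gtrsim n\,\lambda_{\min}(\Gamma_{\mathrm{sb}})$ with probability $1-\delta/2$, where $\Gamma_{\mathrm{sb}}$ is a covariance lower bound for the augmented regressor.

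The two terms in $\psi$ emerge from computing $\lambda_{\min}(\Gamma_{\mathrm{sb}})$. At (approximate) stationarity,
\begin{align*}
\mathbb{E}\!\left[\overline{x}_{k,\infty}\overline{x}_{k,\infty}^\top\right]=\begin{pmatrix} \dfrac{\sigma_{z,k}^2}{1-\gamma_k^2}+\dfrac{d_k^2}{(1-\gamma_k)^2} & \dfrac{d_k}{1-\gamma_k}\\[6pt] \dfrac{d_k}{1-\gamma_k} & 1\end{pmatrix},
\end{align*}
and the determinant-over-trace bound $\lambda_{\min}\geq \det/\mathrm{tr}$ produces the first term inside the $\min$ defining $\psi^2$ (scaling like $\sigma_{z,k}^2/d_k^2$ when the drift dominates), while the pure-fluctuation direction orthogonal to the mean produces the second term $\sigma_{z,k}^2/(4(1-\gamma_k^2))$. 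Taking the worst direction and substituting into the master inequality then recovers the stated rate $\sqrt{1/(\psi n)}$.

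The main obstacle will be that the trajectory starts at $\tilde x_{k,1}=0$ rather than from the stationary distribution, so a naive one-shot BMSB application gives inflated constants during the burn-in window. I would resolve this by taking $n_0(\delta,k)$ to scale as $\mathrm{polylog}(1/\delta)\cdot (1-\gamma_k)^{-1}$, so that the mixing time is absorbed and the empirical one-step conditional covariance approximates $\Gamma_{\mathrm{sb}}$ on all but a vanishing fraction of steps; the BMSB condition then holds with the stationary parameters up to universal constants. Carefully tracking the numerical constants $16$ and $4$ through the~\cite{simchowitz2018learning} framework is routine bookkeeping that I would defer to the appendix.
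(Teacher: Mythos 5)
Your high-level plan matches the paper's: both adapt the BMSB plus self-normalized-martingale machinery of Simchowitz et al.\ to the augmented regressor $\overline{x}_{k,j}=(\tilde x_{k,j},1)^\top$, split the error into a Gram-matrix lower bound times a self-normalized term, and read off $\psi$ from the small-ball covariance. However, there is a genuine gap at the step where you claim that one-step Gaussian anti-concentration ``in every unit direction'' plus the standard BMSB proposition yields $\lambda_{\min}(\mathbf{\overline{X}_k}^\top\mathbf{\overline{X}_k})\gtrsim n\,\lambda_{\min}(\Gamma_{\mathrm{sb}})$ with a fixed stationary $\Gamma_{\mathrm{sb}}$. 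For the affine system the conditional second moment of $\langle w,\overline{x}_{k,t+j}\rangle$ given $\mathcal{F}_t$ is $\bigl(w_1(\gamma_k^{j}\tilde x_{k,t}+d_{k,j})+w_2\bigr)^2+w_1^2\Gamma_{k,j}$: the cross term between the state-dependent drift and the constant coordinate can cancel, so the small-ball level cannot be lower-bounded \emph{almost surely} by any fixed $\sqrt{w^\top\Gamma_{\mathrm{sb}}w}$, and the $(l,\Gamma_{\mathrm{sb}},p)$-BMSB definition (which requires the conditional bound to hold a.s.\ uniformly over the unbounded state) is simply not satisfied. This is precisely why the paper cannot invoke the BMSB proposition off the shelf; it proves a modified version (Lemma~\ref{lemma:varying-nu}) in which the small-ball level $\nu_t$ is random and only bounded below on a high-probability event, combines it with a Paley--Zygmund one-step bound (Lemmas~\ref{prop:paley_zygmund} and~\ref{lemma:bmsb_nu_t}) and a block length $j_\star$ growing like $\log(n/\delta)$ chosen so that both $\Gamma_{k,j}\ge\Gamma_{k,\infty}/2$ and $\max_t|\gamma_k^{j}\tilde x_{k,t}+d_{k,j}|\le 2d_k/(1-\gamma_k)$ hold, and only then verifies the remaining hypotheses of the cited Theorem 2.4.

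Relatedly, your diagnosis of the obstacle is misplaced: starting from $\tilde x_{k,1}=0$ is benign (it only makes $\Gamma_{k,j}$ smaller than $\Gamma_{k,\infty}$, which the block length absorbs), whereas the real difficulty is the unbounded state entering the conditional mean, which your ``burn-in / mixing'' fix does not address. Finally, $\psi$ in the statement is not $\lambda_{\min}$ of the stationary second-moment matrix obtained by a determinant-over-trace bound; it arises from the explicit trade-off $\min\bigl\{1-\epsilon,\ \Gamma_{k,\infty}/2-(\tfrac{1}{\epsilon}-1)\tfrac{4d_k^2}{(1-\gamma_k)^2}\bigr\}$ with the particular choice $\epsilon=\tfrac{4d_k^2/(1-\gamma_k)^2}{4d_k^2/(1-\gamma_k)^2+\Gamma_{k,\infty}/4}$, which is what produces the constants $16$ and $4$ in the theorem; treating this as ``routine bookkeeping'' skips the part of the argument that actually generates the stated bound.
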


\setlength{\textfloatsep}{10pt}
\begin{algorithm}[t]
\SetAlgoLined 
\KwIn{Lookahead window size $w$, Number of arms $K$, Horizon $T$}
\textbf{Initialize} $t=1$, $\pi_{1:T}$ to be an empty array of length $T$
and $\widetilde{T} = T^{2/3} + w - (T^{2/3} \text{ mod } w)$.\\
\For{$k = 1, \ldots, K $}{
    Set $t'=t$ and initialize an empty array $\mathcal{P}_k^n$.\\
    \For{ $c = 0, \ldots, \lfloor \widetilde{T}/K \rfloor$}{
        Play arm $k$ to obtain reward $\mu_{k,t'+c}$ and add $ \mu_{k,t'}-\mu_{k,t' +c}$ to $\mathcal{P}^n_k$.\\
        Set $\pi_t = k$
        and increase $t$ by $1$.
    } 
    Obtain $\widehat{\gamma}_k, \widehat{d}_k$ using the estimator~\eqref{eq:affine-estimator},
    set $\widehat{\lambda}_k = |\widehat{d}_k/\widehat{\gamma}_k|$
    and $\widehat{b}_k=\mu_{k,t'}$.
}
Let $t_0 = \widetilde{T}$,
set $\pi_{t:t_0} = (1, \ldots, \widetilde{T} - t+1)$, 
and play $\pi_{t:t_0}$.\\
\For{$i = 1, \ldots, \lceil \frac{T - t_0}{w} \rceil$}{
    Set $t_i = \min\{t_{i-1} + w, T\}$.\\
    Obtain $\pi_{t_{i-1}+1:t_i} = \texttt{Lookahead}(\{\widehat{\lambda}_k, \widehat{\gamma}_k, \widehat{b}_k\}_{k=1}^K, \{u_{k,0:t_{i-1}}\}_{k=1}^K, t_{i-1}, t_{i})$
    where $\{u_{k,0:t_{i-1}}\}_{k=1}^K$ 
    are the arm pull histories correspond to 
    $\pi_{1:t_{i-1}}$.\\
    Play $\pi_{t_{i-1}+1,t_i}$.
}
\caption{$w$-lookahead Explore-Estimate-Plan}
\label{algo:rebounding_etc}
\end{algorithm}

As shown in Theorem~\ref{thm:a_b_estimation_m=1}, 
when $d_k = \lambda_k \gamma_k $ gets larger, 
the convergence rate for $\widehat{A}_k$ gets slower.
Given a single trajectory of sufficient length,
we obtain $|\widehat{\gamma}_k - \gamma_k| \leq O(1/\sqrt{n})$
and $|\widehat{d}_k - d_k| \leq O(1/\sqrt{n})$.
In Corollary~\ref{cor:estimation-single_m=1}, 
we show that the estimator of $\lambda_k$ 
also achieves $O(1/\sqrt{n})$ estimation error. 

\begin{corollary} \label{cor:estimation-single_m=1}
Fix $\delta \in (0,1)$. 
Suppose that for all $k \in [K]$, %
we have $\mathbb{P}(\|\widehat{A}_k - A_k\|_2 \gtrsim 1/\sqrt{n} ) \leq \delta$ 
and $\widehat{\gamma}_k > 0$.
Then, with probability $1 - \delta$,
we have that for all $k \in [K]$,  
\begin{align*}
        |\widehat{\gamma}_k - \gamma_k|
    \leq O\left(\frac{1}{\sqrt{n}} \right),
    \quad |\widehat{\lambda}_k - \lambda_k| \leq O\left(\frac{1}{\sqrt{n}} \right)
    .
\end{align*}
\end{corollary}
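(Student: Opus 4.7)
The plan is to reduce both claims to the single hypothesis $\|\widehat{A}_k - A_k\|_2 \leq O(1/\sqrt{n})$, which holds simultaneously for all $k \in [K]$ with probability at least $1-\delta$ (after a routine union bound that can be absorbed into the implicit constants, or by rescaling $\delta$ by a factor of $K$). The first claim is immediate: since $\widehat{\gamma}_k - \gamma_k$ is the first coordinate of the two-dimensional vector $\widehat{A}_k - A_k$, its absolute value is bounded by the Euclidean norm of the whole vector, which gives $|\widehat{\gamma}_k - \gamma_k| \leq \|\widehat{A}_k - A_k\|_2 \leq O(1/\sqrt{n})$. The analogous coordinate bound $|\widehat{d}_k - d_k| \leq O(1/\sqrt{n})$ will drive the second claim.

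For the bound on $|\widehat{\lambda}_k - \lambda_k|$, I plan to use a standard quotient-perturbation identity. Since $d_k = \lambda_k \gamma_k \geq 0$ we have $\lambda_k = d_k/\gamma_k$, while $\widehat{\lambda}_k = |\widehat{d}_k|/\widehat{\gamma}_k$ by definition. Writing the difference over a common denominator and adding and subtracting $d_k\gamma_k$ in the numerator yields
\begin{align*}
\widehat{\lambda}_k - \lambda_k
\;=\; \frac{|\widehat{d}_k|\,\gamma_k - d_k\,\widehat{\gamma}_k}{\widehat{\gamma}_k\,\gamma_k}
\;=\; \frac{(|\widehat{d}_k| - d_k)\,\gamma_k + d_k\,(\gamma_k - \widehat{\gamma}_k)}{\widehat{\gamma}_k\,\gamma_k}.
\end{align*}
Using the reverse triangle inequality $||\widehat{d}_k|-d_k| \leq |\widehat{d}_k - d_k|$, the numerator is bounded by $(|\widehat{d}_k - d_k|)\gamma_k + d_k\,|\widehat{\gamma}_k - \gamma_k| \leq O(1/\sqrt{n})$, since both coordinate errors are $O(1/\sqrt{n})$ and $d_k = \lambda_k \gamma_k$ is a bounded constant.

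The main obstacle, and the only place where the extra hypothesis is used, is controlling the denominator $\widehat{\gamma}_k\gamma_k$ away from zero. The positivity hypothesis $\widehat{\gamma}_k > 0$, combined with $|\widehat{\gamma}_k - \gamma_k| \leq O(1/\sqrt{n})$, implies that on the high-probability event and for $n$ large enough (depending on $\gamma_k$), one has $\widehat{\gamma}_k \geq \gamma_k/2$. Thus $1/(\widehat{\gamma}_k \gamma_k) \leq 2/\gamma_k^2 = O(1)$, treating the problem-dependent positive constant $\gamma_k$ as hidden in the big-$O$. Chaining this denominator bound with the $O(1/\sqrt{n})$ numerator bound yields the desired $|\widehat{\lambda}_k - \lambda_k| \leq O(1/\sqrt{n})$ simultaneously for all $k$, on the same event of probability $1-\delta$, completing the proof.
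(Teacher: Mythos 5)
Your proposal is correct and follows essentially the same route as the paper: bound each coordinate of $\widehat{A}_k - A_k$ by its Euclidean norm, then handle $\widehat{\lambda}_k - \lambda_k$ via an add-and-subtract (quotient perturbation) decomposition and a union/rescaling argument over the $K$ arms. If anything, you are more careful than the paper on the one delicate point — justifying that the denominator $\widehat{\gamma}_k$ is bounded away from zero via $\widehat{\gamma}_k \geq \gamma_k/2$ for $n$ large, a step the paper's proof leaves implicit when it divides by $\widehat{a}_k$ and asserts the $O(1/\sqrt{n})$ rate.
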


\subsection{Planning and Regret Bound}
\label{sec:regret-bound}
In the planning stage of Algorithm~\ref{algo:rebounding_etc} (Line 11-15), 
at time $t_{i-1}+1$, 
the next $w$ arms to play 
are obtained through the \texttt{Lookahead} function
defined in~\eqref{eq:lookahead}
based on the estimated parameters 
from the estimation stage (Line 8).
Using the results in Corollary~\ref{cor:estimation-single_m=1},
we obtain the following sublinear 
regret bound for $w$-lookahead EEP.

\begin{theorem}\label{thm:regret-upper}
There exists a constant $T_0$ such that 
for all $T > T_0$ and $w \leq T^{2/3}$, 
the $w$-step lookahead regret 
of $w$-lookahead Explore-Estimate-Plan 
satisfies 
\begin{align*}
    \text{Reg}^{w}(T) \leq {O}(K^{1/2}T^{2/3}\log T). 
\end{align*}
\end{theorem}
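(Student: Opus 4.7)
The plan is to split the $w$-step lookahead regret episode by episode, bounding the $\widetilde{T}/w = O(T^{2/3}/w)$ exploration episodes trivially and the remaining $O((T-\widetilde{T})/w)$ planning episodes by a sensitivity analysis of \texttt{Lookahead} with respect to its input parameters. Since the expected reward lies in a bounded range depending only on $\max_k |b_k|$, $\overline{\lambda}$, and $(1-\overline{\gamma})^{-1}$, every episode contributes at most $O(w)$ to the regret, so the exploration phase contributes at most $O(\widetilde{T}) = O(T^{2/3})$ regardless of the actions taken.

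For the planning phase, I would apply Corollary~\ref{cor:estimation-single_m=1} with $\delta = 1/T$ and union-bound over the $K$ arms to define a good event $\mathcal{E}$ of probability at least $1 - K/T$ on which $\max_k |\widehat{\gamma}_k - \gamma_k|$ and $\max_k |\widehat{\lambda}_k - \lambda_k|$ are both $\lesssim \sqrt{\log T / n}$ with $n \geq \lfloor T^{2/3}/K \rfloor$. On $\mathcal{E}^c$ the regret contribution is $(K/T) \cdot O(T) = O(K)$, while $\widehat{b}_k = b_k$ holds exactly because $s_{k,t_0^k} = 0$ makes the first observed reward from each arm noise-free. Conditional on $\mathcal{E}$, for each planning episode $i$ starting from state $x^i$, the algorithm plays $\widehat{\pi}$ solving \texttt{Lookahead} under the estimated parameters $\{\widehat{\gamma}_k, \widehat{\lambda}_k, b_k\}$, while the best per-episode competitor in $\mathcal{C}^w$ is the sequence $\pi^*$ solving \texttt{Lookahead} under the true parameters; by~\eqref{eq:expected-reward} the value $V_w(\pi; x^i) := \mathbb{E}[\sum_{j=1}^{w} r(x_{iw+j}, \pi_j) \mid x_{iw+1} = x^i]$ is a deterministic polynomial in the $\gamma_k$'s, with coefficients determined by $x^i$ and the pull pattern and scaled by the $\lambda_k$'s. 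A standard sandwiching argument then bounds per-episode regret by $2 \sup_{\pi \in \mathcal{C}^w} |V_w^{\mathrm{true}}(\pi; x^i) - V_w^{\mathrm{est}}(\pi; x^i)|$, pointwise in the (random) state $x^i$, so the bound survives taking expectation.

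For the uniform sensitivity, differentiating $\mathbb{E}[s_{k,t}] = \sum_p \gamma_k^{d_p}$ (where the $d_p$'s are the distinct time gaps from $t$ back to earlier pulls of arm $k$) gives $|\partial \mathbb{E}[s_{k,t}]/\partial \gamma_k| \leq \sum_{d \geq 1} d\, \overline{\gamma}^{\,d-1} = (1-\overline{\gamma})^{-2}$, while $|\partial r(x, k)/\partial \lambda_k| \leq (1-\overline{\gamma})^{-1}$. The per-step value sensitivity is therefore $O((1-\overline{\gamma})^{-2}\, \overline{\lambda}\, \epsilon_\gamma + (1-\overline{\gamma})^{-1} \epsilon_\lambda) = O(\sqrt{\log T / n})$, and summing $w$ steps per episode over $O(T/w)$ episodes cancels $w$ to give a good-event planning regret of $O(T \sqrt{\log T / n}) = O(T \sqrt{K \log T / T^{2/3}}) = O(K^{1/2} T^{2/3} \sqrt{\log T})$. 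Combined with the $O(T^{2/3})$ exploration and $O(K)$ bad-event contributions, this yields the claimed $O(K^{1/2} T^{2/3} \log T)$ bound after using $\sqrt{\log T} \leq \log T$. The hardest step will be this uniform sensitivity bound: a naive term-by-term Lipschitz argument on $|\gamma^m - \widehat{\gamma}^m|$ scales like $m \leq w$ per step and would inflate the regret by a factor of $w$; the key observation that the gaps $d_p$ are distinct (so the derivative is dominated by the convergent series $\sum_d d\, \overline{\gamma}^{\,d-1}$ rather than by $w/(1-\overline{\gamma})^2$) is what removes this factor and keeps the final bound sublinear in $T$.
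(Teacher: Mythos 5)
Your proposal is correct and follows essentially the same route as the paper: the same exploration/estimation/planning decomposition, the same good/bad-event split via the estimation corollary, the same sandwiching of the per-episode \texttt{Lookahead} values under true versus estimated parameters, and the same key per-step sensitivity bound (the paper's~\eqref{eq:estimated_reward_difference}), which — exactly as you observe — relies on the pull gaps being distinct so the error is controlled by the convergent series $\sum_d d\,\overline{\gamma}^{\,d-1}$ rather than incurring a factor of $w$. The only cosmetic differences are your choice of $\delta = 1/T$ versus the paper's $T^{-1/3}$ and your implicit use of the paper's Lemma~\ref{lemma:eep_obj_transformation} (reducing the conditional expectation over the stochastic dynamics to the deterministic reward formula~\eqref{eq:expected-reward}), neither of which affects the bound.
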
 
\begin{remark}
The fact that EEP incurs 
a regret of order $O(T^{2/3})$ 
is expected for two reasons:
First, EEP can be viewed as an 
explore-then-commit (ETC) algorithm
that first explores then exploits.
The regret of EEP resembles 
the $O(T^{2/3})$ regret of the ETC algorithm 
in the classical $K$-armed 
bandits setting~\cite{lattimore2020bandit}.
In rebounding bandits, 
the fundamental obstacle to mixing 
the exploration and exploitation stages
is the need to estimate the satiation dynamics. 
When the rewards of each arm
are not observed periodically, 
the obtained satiation influences
can no longer be viewed as samples
from the same time-invariant 
affine dynamical system,
since the parameters of the system 
depend on the duration between pulls. 
In practice, one may utilize 
the maximum likelihood estimator
to obtain estimates 
of the reward parameters %
but obtaining the sample complexity 
of such an estimator 
with dependent data
is difficult.
Second, it has been shown in
\cite{besbes2019optimal} 
that when the rewards of the arms 
have temporal variation 
that depends on the horizon $T$, 
the worst case instantaneous regret 
has a lower bound $\Omega(T^{2/3})$.  
On the other hand, 
in $K$-armed bandits, 
the regret (following the classical definition~\cite{lattimore2020bandit})
is lower bounded by $\Omega(T^{1/2})$,
and can be attained by methods 
like the upper confidence bound 
algorithm~\cite{lattimore2020bandit}. 
Precisely characterizing 
the regret lower bound for rebounding bandits 
is of future interest. 
\end{remark}

\section{Experiments}
\label{sec:experiment}

\begin{figure*}
     \centering
     \begin{subfigure}[b]{0.45\linewidth}
         \centering
         \includegraphics[width=\linewidth]{./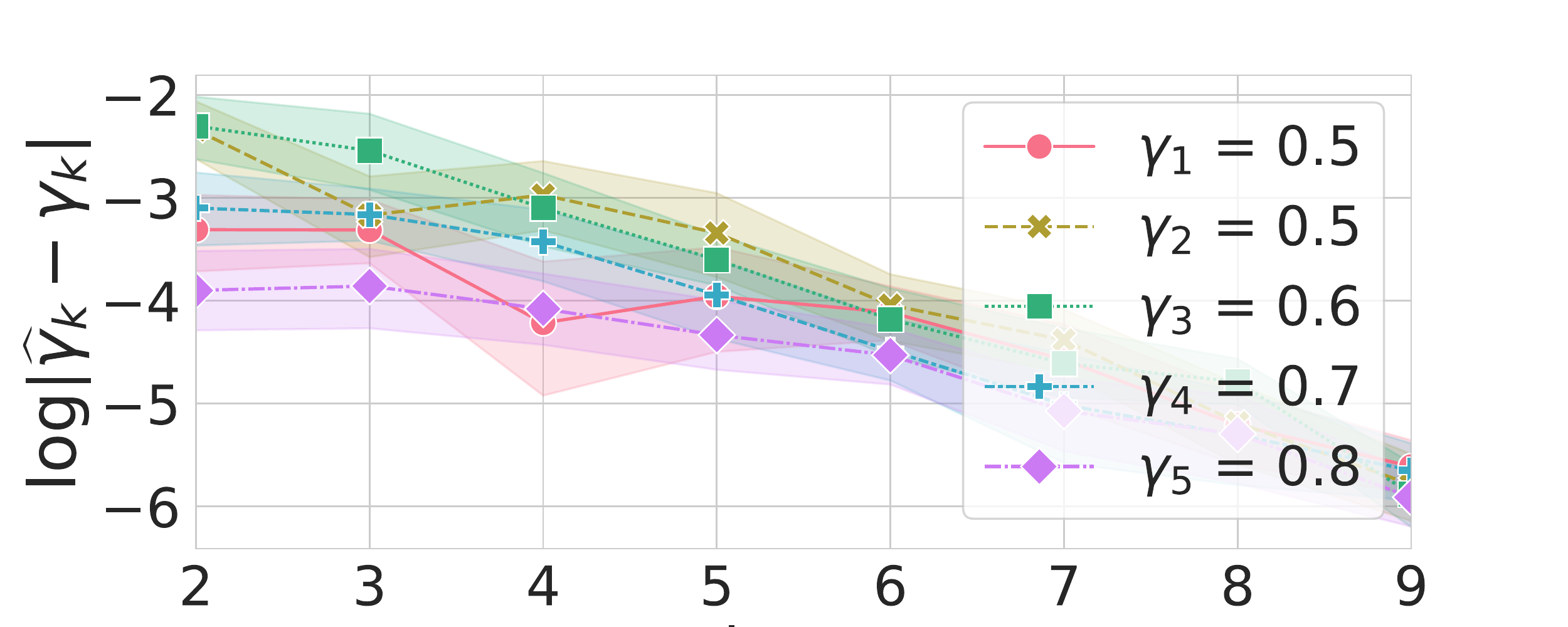}
         \caption{$%
         \log |\widehat{\gamma}_k - \gamma_k|$ v.s. $\log n$}
         \label{fig:gamma_m_1}
     \end{subfigure}
     \hfill
     \begin{subfigure}[b]{0.45\linewidth}
         \centering
         \includegraphics[width=\linewidth]{./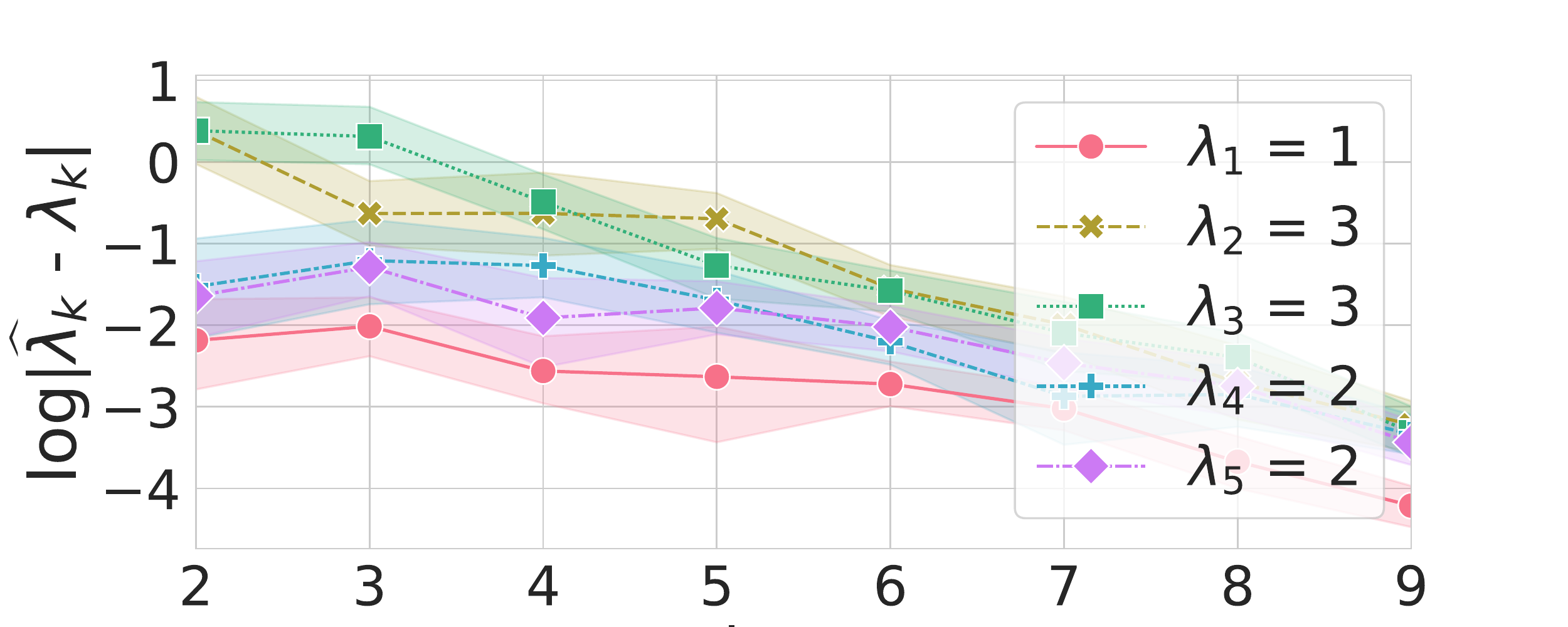}
         \caption{$%
         \log |\widehat{\lambda}_k - \lambda_k|$  v.s. $\log n$}
         \label{fig:lambda_m_1}
     \end{subfigure}
        \caption{Figure~\ref{fig:gamma_m_1} and~\ref{fig:lambda_m_1} are the $\log$-$\log$ plots 
        of absolute errors of $\widehat{\gamma}_k$ and $\widehat{\lambda}_k$ 
        with respect to the number of samples $n$ in a single trajectory.
        The results are averaged over $30$ random runs, 
        where the shaded area represents one standard 
        deviation. 
        }
        \label{fig:estimation_m_1}
        \vspace{-8px}
\end{figure*}

\begin{figure*}
     \centering
     \begin{subfigure}[b]{0.45\linewidth}
         \centering
         \includegraphics[width=\linewidth]{./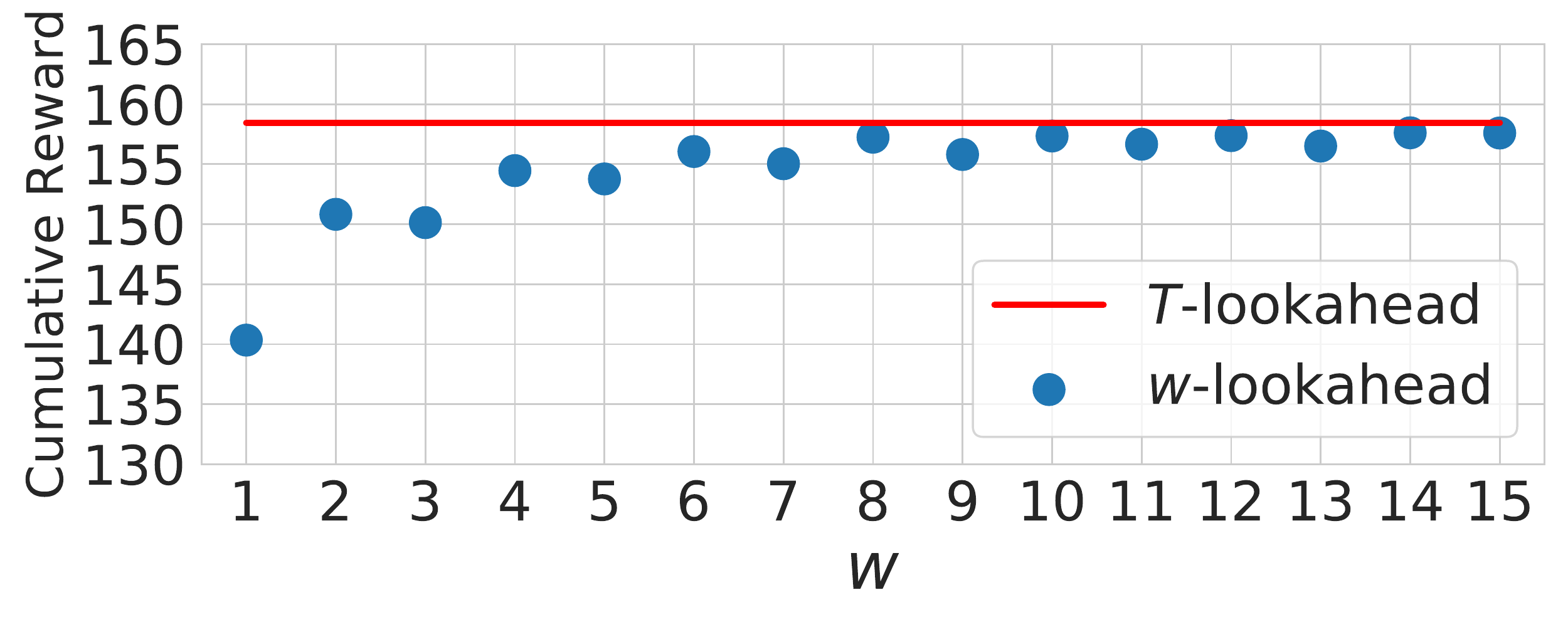}
         \caption{$T=30$}
         \label{fig:T_30_lookahead}
     \end{subfigure}
     \hfill
     \begin{subfigure}[b]{0.45\linewidth}
         \centering
         \includegraphics[width=\linewidth]{./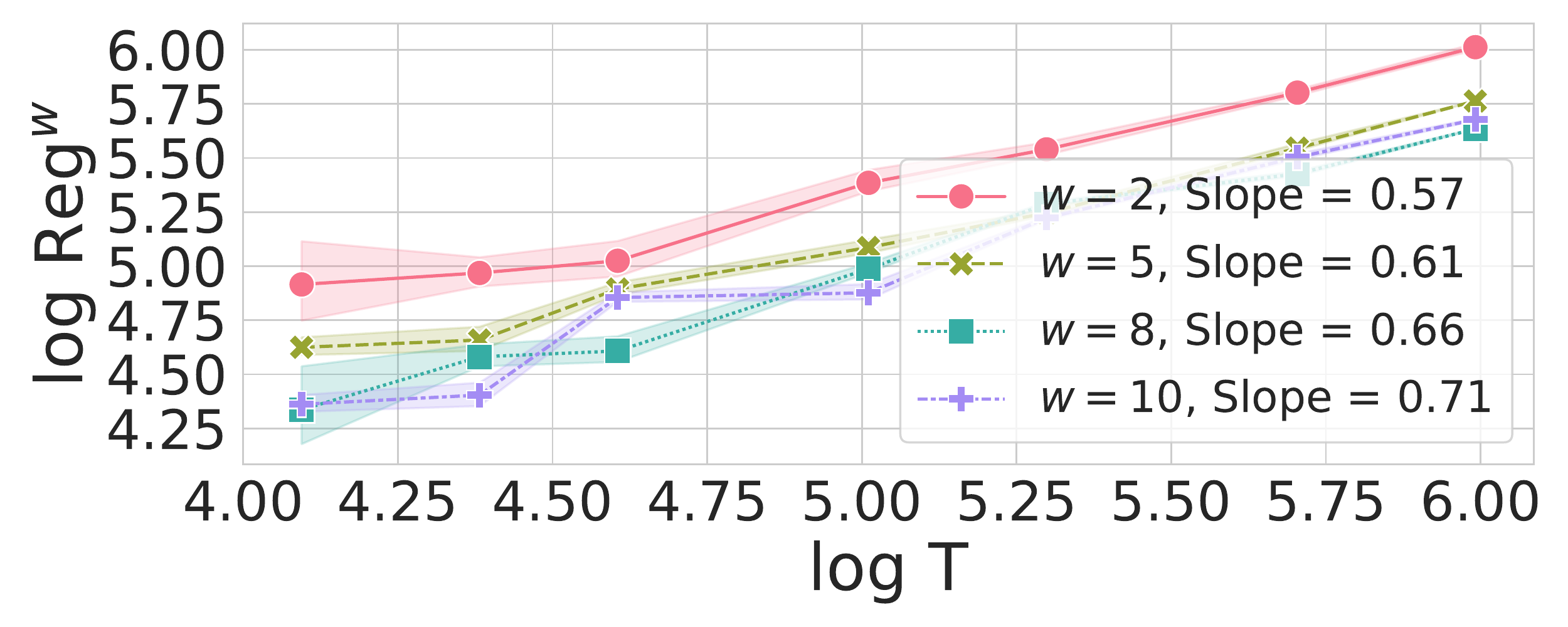}
         \caption{$\log \text{Reg}^w$ v.s. $\log T$}
         \label{fig:eep_performance}
     \end{subfigure}
        \caption{
        Figure \ref{fig:T_30_lookahead}
        shows the expected cumulative reward 
        collected by 
        the $T$-lookahead policy (red line) 
        and $w$-lookahead policy (blue dots)
        when $T=30$. 
        Figure~\ref{fig:eep_performance} 
        shows the log-log plot of the 
        $w$-step lookahead regret 
        of $w$-lookahead EEP 
        under different $T$ averaged over $20$ random runs.
        }
        \label{fig:lookahead_opt}
\end{figure*}

We now evaluate the performance of EEP experimentally, 
separately investigating the sample efficiency 
of our proposed estimators~\eqref{eq:affine-estimator} 
for learning the satiation and reward models 
(Figure~\ref{fig:estimation_m_1})
and the computational performance of 
the $w$-lookahead policies~\eqref{eq:lookahead}
(Figure~\ref{fig:T_30_lookahead}). 
For the experimental setup, 
we have $5$ arms with satiation retention factors 
$\gamma_1 = \gamma_2 = .5, \gamma_3 = .6, \gamma_4 =.7, \gamma_5 =.8$, 
exposure influence factors 
$\lambda_1 =1$, $\lambda_2 = \lambda_3 = 3, \lambda_4 = \lambda_5 = 2$, 
base rewards  
${b}_1 = 2, {b}_2 = 3, {b}_3 = 4, {b}_4 = 2, {b}_5 = 10$,
and noise with variance $\sigma_z=0.1$.

\paragraph{Parameter Estimation} 
We first evaluate our proposed estimator for 
using a single trajectory per arm 
to estimate the arm parameters $\gamma_k, \lambda_k$. 
In Figure~\ref{fig:estimation_m_1}, 
we show the absolute error (averaged over $30$ random runs)
between the estimated parameters 
and the true parameters for each arm. 
Aligning with our theoretical guarantees
(Corollary~\ref{cor:estimation-single_m=1}),
the log-log plots show that 
the convergence rate 
of the absolute error 
is on the scale of $O(n^{-1/2})$.

\paragraph{$w$-lookahead Performance} 
To evaluate $w$-lookahead policies, 
we solve~\eqref{eq:lookahead} 
using the true reward parameters 
and report expected cumulative rewards 
of the obtained $w$-lookahead policies 
(Figure~\ref{fig:T_30_lookahead}). 
Recall that the greedy policy 
is precisely the $1$-lookahead policy. 
In order to solve the resulting integer programs, 
we use Gurobi 9.1~\cite{gurobi} %
and set the number of threads
for solving the problem to $10$. 
When $T=30$, the $T$-lookahead policy 
(expected cumulative rewards 
given by the red line 
in Figure~\ref{fig:T_30_lookahead})
solved through~\eqref{eq:obj} is obtained in $1610$s.
On the other hand, 
all $w$-lookahead policies
(expected cumulative rewards 
given by the blue dots 
in Figure~\ref{fig:T_30_lookahead})
for $w$ in between $1$ and $15$ 
are solved within $2$s. %
We provide the results when $T=100$ 
in Appendix~\ref{appendix:experiment}.
Despite using significantly lower computational time, 
$w$-lookahead policies achieve 
a similar expected cumulative reward 
to the $T$-lookahead policy.

\paragraph{EEP Performance}
We evaluate the performance of EEP 
when $T$ ranges from $60$ to $400$.  
For each horizon $T$, 
we examine the $w$-step lookahead regret
of $w$-lookahead EEP where 
$w = 2, 5, 8, 10$. 
All results are averaged over $20$ random runs.
As $T$ increases, 
the exploration stage of EEP becomes 
longer, 
which results in collecting
more data for 
estimating the reward parameters
and lower variance 
of the parameter estimators. 
We fit a line for the regrets with
the same lookahead size $w$ 
to examine the order of the regret 
with respect to the horizon $T$. 
The slopes of the lines 
(see Figure~\ref{fig:eep_performance}'s legend) 
are close to $2/3$,
which aligns
with our theoretical guarantees 
(Theorem~\ref{thm:regret-upper}), 
i.e., the regrets are on the order of $O(T^{2/3})$. 
In Appendix~\ref{appendix:experiment}, 
we present 
additional experimental setups and
results.

\section{Conclusions}
While our work has taken strides
towards modeling the exposure-dependent
evolution of preferences 
through dynamical systems, 
there are many avenues for future work.
First, while 
our
satiation dynamics 
are independent across arms,
a natural extension %
might allow interactions among the arms.
For example, a diner sick of pizza 
after too many trips to Di Fara's,
likely would also avoid Grimaldi's
until the satiation effect wore off.
On the system identification side,
we might overcome our reliance
on evenly spaced pulls,
producing more adaptive algorithms
(e.g., optimism-based algorithms)
that can refine their estimates,
improving the agent's policy
even past the pure exploration period.
Finally, our satiation model captures just one plausible dynamic according to which preferences might evolve in response to past recommendations.  
Characterizing other such dynamics 
(e.g., the formation of brand loyalty 
where the rewards of an arm increase with more pulls) in bandits setups 
is of future interest.

\section*{Acknowledgement} 
LL is generously supported by an Open Philanthropy AI Fellowship. 
The authors would like to thank David Childers, Biswajit Paria, Eyan P. Noronha, Sai Sandeep and Max Simchowitz for very helpful discussions, 
and Stephen Tu for his insightful suggestions 
on system identification of affine dynamical systems.

\bibliographystyle{plain}
\bibliography{personalization-refs}

\begin{thebibliography}{10}

\bibitem{abbasi2011improved}
Yasin Abbasi-Yadkori, D{\'a}vid P{\'a}l, and Csaba Szepesv{\'a}ri.
\newblock Improved algorithms for linear stochastic bandits.
\newblock In {\em Advances in Neural Information Processing Systems}, pages
  2312--2320, 2011.

\bibitem{basu2019blocking}
Soumya Basu, Rajat Sen, Sujay Sanghavi, and Sanjay Shakkottai.
\newblock Blocking bandits.
\newblock In {\em Advances in Neural Information Processing Systems}, pages
  4785--4794, 2019.

\bibitem{baucells2007satiation}
Manel Baucells and Rakesh~K Sarin.
\newblock Satiation in discounted utility.
\newblock {\em Operations research}, 55(1):170--181, 2007.

\bibitem{bennett2007netflix}
James Bennett, Stan Lanning, et~al.
\newblock The netflix prize.
\newblock In {\em Proceedings of KDD cup and workshop}, volume 2007, page~35.
  New York, 2007.

\bibitem{besbes2019optimal}
Omar Besbes, Yonatan Gur, and Assaf Zeevi.
\newblock Optimal exploration--exploitation in a multi-armed bandit problem
  with non-stationary rewards.
\newblock {\em Stochastic Systems}, 9(4):319--337, 2019.

\bibitem{caro2012product}
Felipe Caro and Victor Mart{\'\i}nez-de Alb{\'e}niz.
\newblock Product and price competition with satiation effects.
\newblock {\em Management Science}, 58(7):1357--1373, 2012.

\bibitem{cella2020stochastic}
Leonardo Cella and Nicol{\`o} Cesa-Bianchi.
\newblock Stochastic bandits with delay-dependent payoffs.
\newblock In {\em International Conference on Artificial Intelligence and
  Statistics}, pages 1168--1177, 2020.

\bibitem{devroye2018total}
Luc Devroye, Abbas Mehrabian, and Tommy Reddad.
\newblock The total variation distance between high-dimensional gaussians.
\newblock {\em arXiv preprint arXiv:1810.08693}, 2018.

\bibitem{ernst2005tree}
Damien Ernst, Pierre Geurts, and Louis Wehenkel.
\newblock Tree-based batch mode reinforcement learning.
\newblock {\em Journal of Machine Learning Research}, 6(Apr):503--556, 2005.

\bibitem{frieze1997improved}
Alan Frieze and Mark Jerrum.
\newblock Improved approximation algorithms for max k-cut and max bisection.
\newblock {\em Algorithmica}, 18(1):67--81, 1997.

\bibitem{galak2018properties}
Jeff Galak and Joseph~P Redden.
\newblock The properties and antecedents of hedonic decline.
\newblock {\em Annual review of psychology}, 69:1--25, 2018.

\bibitem{gittins1979bandit}
John~C Gittins.
\newblock Bandit processes and dynamic allocation indices.
\newblock {\em Journal of the Royal Statistical Society: Series B
  (Methodological)}, 41(2):148--164, 1979.

\bibitem{hamilton1994time}
James Hamilton.
\newblock {\em Time series analysis}.
\newblock Princeton University Press, Princeton, N.J, 1994.

\bibitem{heidari2016tight}
Hoda Heidari, Michael~J Kearns, and Aaron Roth.
\newblock Tight policy regret bounds for improving and decaying bandits.
\newblock In {\em IJCAI}, pages 1562--1570, 2016.

\bibitem{jaksch2010near}
Thomas Jaksch, Ronald Ortner, and Peter Auer.
\newblock Near-optimal regret bounds for reinforcement learning.
\newblock {\em Journal of Machine Learning Research}, 11(4), 2010.

\bibitem{joachims2017unbiased}
Thorsten Joachims, Adith Swaminathan, and Tobias Schnabel.
\newblock Unbiased learning-to-rank with biased feedback.
\newblock In {\em Proceedings of the Tenth ACM International Conference on Web
  Search and Data Mining}, pages 781--789, 2017.

\bibitem{kahn1995consumer}
Barbara~E Kahn.
\newblock Consumer variety-seeking among goods and services: An integrative
  review.
\newblock {\em Journal of retailing and consumer services}, 2(3):139--148,
  1995.

\bibitem{kapoor2015just}
Komal Kapoor, Karthik Subbian, Jaideep Srivastava, and Paul Schrater.
\newblock Just in time recommendations: Modeling the dynamics of boredom in
  activity streams.
\newblock In {\em Proceedings of the eighth ACM international conference on web
  search and data mining}, pages 233--242, 2015.

\bibitem{kleinberg2018recharging}
Robert Kleinberg and Nicole Immorlica.
\newblock Recharging bandits.
\newblock In {\em 2018 IEEE 59th Annual Symposium on Foundations of Computer
  Science (FOCS)}, pages 309--319. IEEE, 2018.

\bibitem{lattimore2020bandit}
Tor Lattimore and Csaba Szepesv{\'a}ri.
\newblock {\em Bandit algorithms}.
\newblock Cambridge University Press, 2020.

\bibitem{levine2017rotting}
Nir Levine, Koby Crammer, and Shie Mannor.
\newblock Rotting bandits.
\newblock In {\em Advances in neural information processing systems}, pages
  3074--3083, 2017.

\bibitem{ljung1999system}
Lennart Ljung.
\newblock System identification.
\newblock {\em Wiley encyclopedia of electrical and electronics engineering},
  pages 1--19, 1999.

\bibitem{gurobi}
Gurobi~Optimization LLC.
\newblock Gurobi optimizer reference manual, 2021.

\bibitem{matni2019tutorial}
Nikolai Matni and Stephen Tu.
\newblock A tutorial on concentration bounds for system identification.
\newblock In {\em 2019 IEEE 58th Conference on Decision and Control (CDC)},
  pages 3741--3749. IEEE, 2019.

\bibitem{mcalister1982dynamic}
Leigh McAlister.
\newblock A dynamic attribute satiation model of variety-seeking behavior.
\newblock {\em Journal of Consumer Research}, 9(2):141--150, 1982.

\bibitem{mcalister1982variety}
Leigh McAlister and Edgar Pessemier.
\newblock Variety seeking behavior: An interdisciplinary review.
\newblock {\em Journal of Consumer research}, 9(3):311--322, 1982.

\bibitem{mcauley2013amateurs}
Julian~John McAuley and Jure Leskovec.
\newblock From amateurs to connoisseurs: modeling the evolution of user
  expertise through online reviews.
\newblock In {\em Proceedings of the 22nd international conference on World
  Wide Web}, pages 897--908. ACM, 2013.

\bibitem{mcconnell1968development}
J~Douglas McConnell.
\newblock The development of brand loyalty: an experimental study.
\newblock {\em Journal of Marketing Research}, 5(1):13--19, 1968.

\bibitem{mintz2020nonstationary}
Yonatan Mintz, Anil Aswani, Philip Kaminsky, Elena Flowers, and Yoshimi
  Fukuoka.
\newblock Nonstationary bandits with habituation and recovery dynamics.
\newblock {\em Operations Research}, 68(5):1493--1516, 2020.

\bibitem{munos2007performance}
R{\'e}mi Munos.
\newblock Performance bounds in $\ell_p$-norm for approximate value iteration.
\newblock {\em SIAM journal on control and optimization}, 46(2):541--561, 2007.

\bibitem{ortner2012online}
Ronald Ortner and Daniil Ryabko.
\newblock Online regret bounds for undiscounted continuous reinforcement
  learning.
\newblock In {\em Advances in Neural Information Processing Systems}, pages
  1763--1771, 2012.

\bibitem{ortner2012regret}
Ronald Ortner, Daniil Ryabko, Peter Auer, and R{\'e}mi Munos.
\newblock Regret bounds for restless markov bandits.
\newblock In {\em International Conference on Algorithmic Learning Theory},
  pages 214--228. Springer, 2012.

\bibitem{pena2008self}
Victor~H Pe{\~n}a, Tze~Leung Lai, and Qi-Man Shao.
\newblock {\em Self-normalized processes: Limit theory and Statistical
  Applications}.
\newblock Springer, 2009.

\bibitem{pike2019recovering}
Ciara Pike-Burke and Steffen Grunewalder.
\newblock Recovering bandits.
\newblock In {\em Advances in Neural Information Processing Systems}, pages
  14122--14131, 2019.

\bibitem{ratner1999choosing}
Rebecca~K Ratner, Barbara~E Kahn, and Daniel Kahneman.
\newblock Choosing less-preferred experiences for the sake of variety.
\newblock {\em Journal of consumer research}, 26(1):1--15, 1999.

\bibitem{riedmiller2005neural}
Martin Riedmiller.
\newblock Neural fitted q iteration--first experiences with a data efficient
  neural reinforcement learning method.
\newblock In {\em European Conference on Machine Learning}, pages 317--328.
  Springer, 2005.

\bibitem{robbins1952some}
Herbert Robbins.
\newblock Some aspects of the sequential design of experiments.
\newblock {\em Bulletin of the American Mathematical Society}, 58(5):527--535,
  1952.

\bibitem{sarkar2019near}
Tuhin Sarkar and Alexander Rakhlin.
\newblock Near optimal finite time identification of arbitrary linear dynamical
  systems.
\newblock In {\em International Conference on Machine Learning}, pages
  5610--5618, 2019.

\bibitem{seznec2018rotting}
Julien Seznec, Andrea Locatelli, Alexandra Carpentier, Alessandro Lazaric, and
  Michal Valko.
\newblock Rotting bandits are no harder than stochastic ones.
\newblock In {\em The 22nd International Conference on Artificial Intelligence
  and Statistics}, pages 2564--2572, 2019.

\bibitem{simchowitz2018learning}
Max Simchowitz, Horia Mania, Stephen Tu, Michael~I Jordan, and Benjamin Recht.
\newblock Learning without mixing: Towards a sharp analysis of linear system
  identification.
\newblock In {\em Conference On Learning Theory}, pages 439--473, 2018.

\bibitem{swaminathan2015counterfactual}
Adith Swaminathan and Thorsten Joachims.
\newblock Counterfactual risk minimization: Learning from logged bandit
  feedback.
\newblock In {\em International Conference on Machine Learning}, pages
  814--823, 2015.

\bibitem{tucker1964development}
William~T Tucker.
\newblock The development of brand loyalty.
\newblock {\em Journal of Marketing research}, 1(3):32--35, 1964.

\bibitem{warlop2018fighting}
Romain Warlop, Alessandro Lazaric, and J{\'e}r{\'e}mie Mary.
\newblock Fighting boredom in recommender systems with linear reinforcement
  learning.
\newblock In {\em Advances in Neural Information Processing Systems}, pages
  1757--1768, 2018.

\bibitem{whittle1988restless}
Peter Whittle.
\newblock Restless bandits: Activity allocation in a changing world.
\newblock {\em Journal of applied probability}, 25(A):287--298, 1988.

\end{thebibliography}

\newpage
\appendixwithtoc
\newpage

\section{Integer Linear Programming Formulation}
\label{appendix:restate-obj}
The bilinear integer program of~\eqref{eq:obj} admits the following equivalent linear integer programming formulation: 
\begin{align*}
\begin{split}
    \max_{u_{k, t}, z_{k,t,i}} %
    &\sum_{k \in [K]} \sum_{t \in [T]}   b_k u_{k,t} - \lambda_k  \sum_{i=0}^{t-1} \gamma_k^{t-i} z_{k,t, i}
    \\ 
    \text{s.t. } 
    &\sum_{k \in [K]} u_{k,t} = 1,  \qquad \forall t \in [T], \\
    &z_{k, t, i} \leq u_{k,i},~~ 
    z_{k,t,i} \leq u_{k,t},~~ u_{k,i} + u_{k,t} - 1 \leq z_{k, t, i},  
    \qquad \forall k \in [K], 
    t \in [T], i \in \{0, \ldots, t-1\}, \\
    & u_{k,t} \in \{0,1\},~~
    u_{k,0} =0,  \qquad \forall k \in [K], t \in [T], \\
    & z_{k,t,i} \in \{0,1\}, 
    \qquad \forall k \in [K], 
    t \in [T], i \in \{0, \ldots, t-1\}.
\end{split}
\end{align*}

\newpage

\section{Proofs and Discussion of Section~\ref{sec:instantaneous-stochasticity}}
\label{appendix:known-deterministic-proof}

\subsection{Proof of Lemma~\ref{lemma:greedy-periodic}}
\begin{proof}%
When the expected rewards of all arms are the same, 
we know that the arm with the lowest index 
will be chosen and thus the first $K$ pulls 
will be $\pi_1=1, \ldots, \pi_K=K$. 
We will complete the proof through induction.
Suppose that the greedy pull sequence is periodic 
with $\pi_1=1, \ldots, \pi_K=K$ and $\pi_{t +K} = \pi_t$ until time $h > K$. 
We define $k'$ to be $h \text{ mod } K$ and $n$ to be  $(h - k')/K$.
We will show that $\pi_{h+1} = 1$ if $\pi_h = K$
and $\pi_{h+1} = \pi_h + 1$ otherwise.
When $k' = 0$ (i.e., $\pi_h = K$), 
all arms have been pulled exactly $n$ times as of time $h$.
By the induction assumption, we know that 
$
    u_{1,1:h-K} = u_{2,2:h-K+1}
    = \ldots =  u_{K, K:h}, %
$ 
which implies that last time when each arm is pulled, 
all of them have the same expected rewards, i.e., 
\begin{align*}
    &{\mu}_{1, h-K+1}(u_{1,0:h-K}) ={\mu}_{2, h-K+2}(u_{2,0:h-K+1}) = \cdots = {\mu}_{K, h}(u_{K, 0:h-1}). \\
    \text{Moreover, }&u_{1,h-K+1:h} = (1,\underbrace{0, \cdots 0}_\text{K times}),~~ 
    u_{2,h-K+1:h} = (1,\underbrace{0, \cdots 0}_\text{K-1 times}),~~
    \cdots,~~  u_{K, h:h} = (1).
\end{align*}
Therefore, by~\eqref{eq:expected-reward}, at time $h + 1$, arm $1$ 
has the highest expected reward and will be chosen. 
In the case where $k' >0$ (i.e., $\pi_h = k'$), 
we let $h'\coloneqq h-k'$. 
We have that ${\mu}_{1, h'-K+1}(u_{1,0:h'-K}) = \ldots = {\mu}_{K,h}(u_{K, 0:h'-1})$ 
and $s = {s}_{1, h'-K+1}(u_{1,0:h'-K})=\ldots 
= {s}_{K, h'}(u_{K,0:h'-1}) \leq \frac{\gamma^K}{1 - \gamma^K}$.
Then, at time $h+1$, the %
satiation level for the arms will be  
${s}_{k, h+1}(u_{k,0:h}) = \gamma^{k'-k+1} \left(1+ \gamma^{K} s\right)$
for all $k \leq k'$ and 
${s}_{k, h+1}(u_{k,0:h}) = \gamma^{K-k+k'+1} s$ for all $k > k'$. 
Thus, the arm with the lowest %
satiation level 
will be $\pi_{h+1} = k'+1 = \pi_h + 1$,
since 
${s}_{k'+1, h+1}(u_{k'+1,0:h}) < {s}_{1,h+1}(u_{1,0:h})$. 
Consequently, the greedy policy will select arm $\pi_h + 1$ at time $h+1$. 
\end{proof}

\subsection{Proof of Theorem~\ref{thm:greedy-cumulative-reward}}
\label{appendix:greedy-opt-proofs}
\begin{proof}%
First, when $T \leq K$, greedy policy is optimal since its cumulative expected  reward is $Tb$. 
So, we consider the case of $T > K$. 
Assume for contradiction that there exists another policy $\pi_{1:T}^o$ that is optimal and is not greedy, i.e., $\exists t \in [T], \pi^o_t \notin \argmax_{k \in [K]} b - \lambda s^o_{k,t}$ where $s_{k,t}^o$ denotes the %
satiation level of arm $k$ at time $t$ under the policy $\pi_{1:T}^o$. 
We will construct a new policy $\pi_{1:T}^n$ that obtains a higher cumulative expected  reward than $\pi_{1:T}^o$. 
Throughout the proof, we use $s^n_{k,t}$ to denote the %
satiation levels for the new policy. 

We first note two illustrative facts to give the intuition of the proof.  

\emph{Fact 1:} Any policy $\pi_{1:T}^o$ that does not pick the arm with the lowest %
satiation level (i.e., highest expected reward) at the last time step $T$ is not optimal.\\   
\emph{Proof of Fact 1:} %
In this case, the policy $\pi^n_{1:T} = (\pi^o_1, \ldots, \pi^o_{T-1}, \pi_T)$ where $\pi_T \in \argmax_{k \in [K]} {b}-\lambda s^o_{k,T}$ will obtain a higher cumulative expected reward.

\emph{Fact 2:} If a policy $\pi_{1:T}^o$ picks the lowest %
satiation level for the final pull $\pi^o_T$ but does not pick the arm with the lowest %
satiation level at time $T-1$,
we claim that $\pi^n_{1:T} = (\pi^o_1, \ldots, \pi^o_{T-2}, \pi^o_{T}, \pi_{T-1}^o) \neq \pi^o_{1:T}$ obtains a higher cumulative expected reward. \\
\emph{Proof of Fact 2:} 
First, note that $\pi_{T-1}^o \neq \pi^o_{T}$ because otherwise $\pi_{T-1}^o$ is the arm with the lowest satiation level at $T-1$. 
Moreover, at time $T-1$, $\pi^o_T \in \argmin_k s^o_{k, T-1}$ has the smallest  %
satiation, since if not, then there exists another arm $k \neq \pi_T^o$ and $k \neq \pi_{T-1}^o$ that has a smaller %
satiation level than $\pi_T^o$ at time $T-1$. 
In that case, $\pi_T^o$ will not be the arm with the lowest %
satiation at time $T$, which is a contradiction. 
Then, we deduce $s^o_{\pi^o_{T-1}, T-1} > s^o_{\pi_T^o,T-1}$. 
Combining this with $\pi^o_{T-1} \neq \pi^o_T$, we arrive at 
\begin{align*}
    G_T(\pi_{1:T}^n) - G_T(\pi_{1:T}^o) = \lambda (1-\gamma) \left(s^o_{\pi^o_{T-1}, T-1}  - s^o_{\pi_T^o,T-1} \right)> 0. 
\end{align*}

For the general case, given any policy $\pi_{1:T}^o$ that is not a greedy policy,
we construct the new policy $\pi_{1:T}^n$ that has a higher cumulative expected reward through the following procedure:
\begin{enumerate}
    \item Find $t^* \in [T]$ such that for all $t > t^*$, 
    $\pi^o_t \in \argmax_{k \in [K]} {b} - \lambda s^o_{k,t}$
    and 
    $\pi^o_{t^*} \notin \argmax_{k \in [K]} {b} - \lambda s^o_{k,t^*}$.
    Further, we know that $\pi_{t^*+1}^o \in \argmax_{k \in [K]} {b} - \lambda s^o_{k,t^*}$, using the same reasoning as the above example, i.e., otherwise $\pi_{t^*+1}^o \notin \argmax_{k \in [K]} {b} - \lambda s^o_{k,t^*+1}$.  
    To ease the notation, we use $k_1$ to denote $\pi_{t^*}^o$ and $k_2$ to denote  $\pi_{t^*+1}^o$. 
    \item For the new policy, we choose $\pi^n_{1:{t^*+1}} = (\pi^o_{1}, \ldots, \pi^o_{t^*-1}, k_2, k_1)$. 
    Let $A^o_{t_1, t_2}$ denote the set $\{t' : t^*+2 \leq t' \leq t_2, \pi^o_{t'} = \pi^o_{t_1}\}$. 
    $A^o_{t_1, t_2}$ contains a set of time indices in between  $t^*+2$ and $t_2$ 
    when arm $\pi^o_{t_1}$ is played under policy 
    $\pi^o_{1:T}$. 
    We construct the following three sets $T_A := \{t: t^*+2 \leq t \leq T, |A^o_{t^*, t}| < |A^o_{t^*+1, t}|\}$, $T_B := \{t: t^*+2 \leq t \leq T, |A^o_{t^*, t}| > |A^o_{t^*+1, t}|\}$ and 
    $T_C := \{t: t^*+2 \leq t \leq T, |A^o_{t^*, t}| = |A^o_{t^*+1, t}|\}$.
    For time $t \geq t^*+2$, we consider the following three cases: 
    \begin{enumerate}
    \item[Case I.] 
    $T_B = \varnothing$, which
    means that at any time $t$ in between $t^*+2$ and $T$, 
    arm $k_1$ is played %
    more than arm $k_2$ from $t^*+2$ to $t$. 
    In this case, the new policy follows $\pi^n_{t^*+2:T} = \pi^o_{t^*+2:T}$.
    \item[Case II.] 
    $T_A = \varnothing$, which means that at any time $t$ in between $t^*+2$ and $T$, 
    arm $k_2$ is played %
    more than arm $k_1$ from $t^*+2$ to $t$.
    In this case, the new policy satisfies: 
    for all $t \geq t^*+2 $, 
    1) $\pi^n_t = \pi^o_t$ if $\pi^o_t \neq k_1$ and $\pi^o_t \neq k_2$;
    2) $\pi^n_t = k_2$ if $\pi^o_t = k_1$; and 3)
    $\pi^n_t = k_1$ if $\pi^o_t = k_2$. 
    \item[Case III.] 
    $T_A \neq \varnothing$ and $T_B \neq \varnothing$.
    Then, starting from $t^*+2$, if $t \in T_A$,  $\pi^n_t$ follows the new policy construction in Case I, i.e., $\pi^n_t = \pi^o_t$. 
    If $t \in T_B$, $\pi^n_t$ follows the new policy construction in Case II. 
    Finally, for all $t \in T_C$, 
    define $t'_{A,t} = \max_{\substack{t' \in T_A:\\ t' < t}} t'$ and $t'_{B, t} = \max_{\substack{t' \in T_B:\\ t' < t}} t'$. 
    If $t'_{A,t} > t'_{B,t}$, then $\pi^n_t$ follows the new policy construction as Case I. %
    If $t'_{A,t} < t'_{B,t}$, $\pi^n_t$ follows the new policy construction as Case II. We note that $t'_{A,t} \neq t'_{B,t}$ since $T_A \cap T_B = \varnothing$. 
    \end{enumerate}
\end{enumerate}
When $T_A = \varnothing$ and $T_B = \varnothing$, 
we know that $k_1$ and $k_2$ are not played in $\pi^o_{t^*+2:T}$.
In this case, the new policy construction can follow either Case I or Case II.
To complete the proof, we state some facts first:
\begin{itemize}
    \item From $t^*$, the expected rewards collected by the policies $\pi_{1:T}^o$ and $\pi_{1:T}^n$ only differ at times when arm $k_1$ or arm $k_2$ is played.
    \item $\pi_{1:t^*+1}^n$ obtains a higher cumulative expected reward than $\pi_{1:t^*+1}^o$. 
    \item 
    At time $t^*+2$, the new policy follows that 
    $s^n_{k_1,t^*+2} = \gamma + \gamma^2 s^o_{k_1,t^*}$
    and 
    $s^n_{k_2, t^*+2} = \gamma^2 + \gamma^2 s^o_{k_2,t^*}$. 
    On the other hand, the old policy has 
    $s^o_{k_1,t^*+2} = \gamma^2 + \gamma^2 s^o_{k_1,t^*}$ and  $s^o_{k_2,t^*+2} = \gamma + \gamma^2 s^o_{k_2,t^*}$.
\end{itemize}

Let $N_{k_1} := \{t: t^* +2 \leq t \leq T, \pi^o_t = k_1\}$ 
and $N_{k_2} := \{t: t^* +2 \leq t \leq T, \pi^o_t = k_2\}$ 
denote the sets of time steps when $k_1$ and $k_2$ are played in $\pi^o_{1:T}$. 
For a given %
satiation level $x$ at time $t'$ together with the time steps the arm is pulled $N_k$, we have that at time $t \geq t'$, the arm has %
satiation level $g_{N_k}(x, t, t') =\gamma^{t - t'} x 
+ \sum_{N_{k, i} < t} \gamma^{t - N_{k,i}}$ where $ N_{k,i}$ is the $i$-th smallest element in $N_k$. 

In Case I, the difference of the cumulative expected rewards between the two policies satisfies:
\begin{align*}
    &G_T(\pi_{1:T}^n) - G_T(\pi^o_{1:T})  > 
    \sum_{i=1}^{|N_{k_2}|} -\lambda g_{N_{k_2}}(s^n_{k_2, t^*+2}, N_{k_2, i}, t^*+2)
    + \lambda g_{N_{k_2}}(s^o_{k_2, t^*+2}, N_{k_2, i}, t^*+2)\\
    &\qquad \qquad + \sum_{j=1}^{|N_{k_1}|} -\lambda g_{N_{k_1}}(s^n_{k_1, t^*+2}, N_{k_1, j}, t^*+2) 
    + \lambda g_{N_{k_1}}(s^o_{k_1, t^*+2}, N_{k_1, j}, t^*+2)\\
    &= \lambda
    \left(
    s^o_{k_2, t^*+2}
    - s^n_{k_2, t^*+2}
     \right)
    \sum_{i=1}^{|N_{k_2}|} 
    \gamma^{N_{k_2, i} - (t^*+2)} 
    + 
    \lambda
    \left(
    s^o_{k_1, t^*+2}
    - 
    s^n_{k_1, t^*+2}
     \right)
    \sum_{j=1}^{|N_{k_1}|} 
    \gamma^{N_{k_1, j} - (t^*+2)}
    > 0,
\end{align*}
where we have used the fact that 
$s^o_{k_2, t^*+2}
    - s^n_{k_2, t^*+2} = -\left(s^o_{k_1, t^*+2}
    - 
    s^n_{k_1, t^*+2} \right) > 0$, $|N_{k_2}| \geq |N_{k_1}|$
and for all $j \in [|N_{k_1}|]$, $N_{k_2, j} < N_{k_1, j}$. 
In Case II, similarly, we have that
\begin{align*}
    &G_T(\pi_{1:T}^n) - G_T(\pi_{1:T}^o) 
    > 
    \sum_{j=1}^{|N_{k_1}|} 
    -\lambda g_{N_{k_1}}(s^n_{k_2, t^*+2}, N_{k_1, j}, t^*+2)
    + \lambda g_{N_{k_1}}(s^o_{k_1, t^*+2}, N_{k_1, j}, t^*+2)\\
    &\qquad \qquad + \sum_{i=1}^{|N_{k_2}|} 
    -\lambda g_{N_{k_2}}(s^n_{k_1, t^*+2}, N_{k_2, i}, t^*+2) 
    + \lambda 
    g_{N_{k_2}}(s^o_{k_2, t^*+2}, N_{k_2, i}, t^*+2)\\
     &= \lambda 
     \left(
     s^o_{k_1, t^* +2}
     - s^n_{k_2, t^* +2} \right) 
    \sum_{j=1}^{|N_{k_1}|} \gamma^{N_{k_1, j} - (t^*+2)}
    +  \lambda
    \left(
    s^o_{k_2, t^*+2}
    - s^n_{k_1, t^*+2}
     \right)
    \sum_{i=1}^{|N_{k_2}|} 
    \gamma^{N_{k_2, i} - (t^*+2)} > 0, 
\end{align*}
since 
$s^o_{k_1, t^* +2}
     - s^n_{k_2, t^* +2} = - \left(s^o_{k_2, t^*+2}
    - s^n_{k_1, t^*+2} \right) > 0$,
$|N_{k_2}| \leq |N_{k_1}|$ and   
for all $i \in [|N_{k_2}|]$, $N_{k_1, i} < N_{k_2, i}$.

Finally, for Case III, the new policy construction is a mix of Case I and Case II. 
We represent the time interval $[t^*+2, T]$ to be $[t^*+2, T] = [t_{i_{1}, s_1}, t_{i_1, e_{1}}] \cup [t_{i_2, s_2}, t_{i_2, e_2}] \cup \cdots \cup [t_{i_M, s_M}, t_{i_M, e_M}]$  
where 
$t^*+2 = t_{i_1, s_1} \leq \ldots \leq t_{i_M, s_M} = T$, 
$\cap_{m=1}^M [t_{i_m, s_m}, t_{i_m, e_m}] = \varnothing$
and 
$M-1$ is the number of new policy construction switches happen in between $t^*+2$ and $T$. 
We say that a new policy construction switch happens at time $t$ if 
the policy construction follows Case I at time $t-1$ but follows Case II at time $t$ or vice versa. 
Each $i_m \neq i_{m-1}$ can take values I or II, representing which policy construction rule is used between the time period $t_{i_m, s_m}$ and $t_{i_m, e_m}$. 
For any time index set $V$, we use the notation $V[t_{i_m, s_m}, t_{i_m, e_m}] \coloneqq \{t \in V: t_{i_m, s_m} \leq t \leq t_{i_m, e_m}\}$. 

We notice that at any switching time $t_{i_m, s_m}$, the number of previous pulls of arm $k_1$ and $k_2$ from time $t_{i_{m-1}, s_{m-1}}$ to $t_{i_{m-1}, e_{m-1}}$ are equivalent, which is denoted by $l_m= |N_{k_1}[t_{i_m, s_m}, t_{i_m, e_m}]| = |N_{k_2}[t_{i_m, s_m}, t_{i_m, e_m}]|$ for all $m < M$. 
From our analysis of Case I and Case II, we know that 
to show that $\pi^n_{1:T}$ obtains a higher cumulative expected reward, 
it suffices to prove:
for all $m < M $ such that 
\begin{align*}
    &s^o_{k_2, t_{i_m, s_m}} - s^n_{k_2, t_{i_m, s_m}} = - \left(s^o_{k_1, t_{i_m, s_m}} -  s^n_{k_1, t_{i_m, s_m}}\right) > 0,\\
    &s^o_{k_1, t_{i_m, s_m}} - s^n_{k_2, t_{i_m, s_m}} = - \left(s^o_{k_2, t_{i_m, s_m}} -  s^n_{k_1, t_{i_m, s_m}}\right) > 0,
\end{align*}
we have  
\begin{align*}
    &s^o_{k_2, t_{i_{m+1}, s_{m+1}}} - s^n_{k_2, t_{i_{m+1}, s_{m+1}}} = - \left(s^o_{k_1, t_{i_{m+1}, s_{m+1}}} -  s^n_{k_1, t_{i_{m+1}, s_{m+1}}}\right) > 0,\\
    &s^o_{k_1, t_{i_{m+1}, s_{m+1}}} - s^n_{k_2, t_{i_{m+1}, s_{m+1}}} = - \left(s^o_{k_2, t_{i_{m+1}, s_{m+1}}} -  s^n_{k_1, t_{i_{m+1}, s_{m+1}}}\right) > 0.
\end{align*}

We will establish these facts in Lemma~\ref{lemma:greedy-opt-auxillary}. 
Finally, we note that the above required conditions are held at time $t_{i_1, s_1} = t^*+2$.  
\end{proof}

\begin{lemma} 
Let $N_k[t_s, t_e]$ denote the set of time steps when arm $k$ is pulled in between (and including) time $t_s$ and $t_e$ under policy $\pi_{1:T}^o$. 
Let $s_{k,t}^o$ and $s_{k,t}^n$ represent the %
satiation level 
of arm $k$ at time $t$ when following the policy $\pi_{1:T}^o$ and $\pi_{1:T}^n$, respectively. 
For two different arms $k_1$ and $k_2$, suppose that at time $t_s$ we have 
\begin{align*}
    &s^o_{k_2, t_s} - s^n_{k_2, t_s} = - \left(s^o_{k_1, t_s} -  s^n_{k_1, t_s}\right) > 0,\\
    &s^o_{k_1, t_s} - s^n_{k_2,t_s} = - \left(s^o_{k_2, t_s} -  s^n_{k_1, t_s}\right) > 0. 
\end{align*} 
Further, suppose that %
from time $t_s$ to $t_e$, %
$\pi^n_{1:T}$ follows either Case I (or Case II) of new policy construction (see proof of Theorem~\ref{thm:greedy-cumulative-reward} for their definitions); 
and at time $t'_s = t_e+1$, 
the new policy construction for $\pi^n_{1:T}$ has switched to Case II (or Case I if Case II is used from $t_s$ to $t_e$). 
Then at time $t_s'$, we have that 
\begin{align*}
    &s^o_{k_2, t_s'} - s^n_{k_2,t_s'} = - \left(s^o_{k_1, t_s'} -  s^n_{k_1, t_s'}\right) > 0, \\
    &s^o_{k_1, t_s'} - s^n_{k_2, t_s'} = - \left(s^o_{k_2, t_s'} -  s^n_{k_1, t_s'}\right) > 0.
\end{align*}
\label{lemma:greedy-opt-auxillary}
\end{lemma}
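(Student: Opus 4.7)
The plan is to unroll the deterministic dynamics $s_{k, t+1} = \gamma(s_{k, t} + u_{k, t})$ over $[t_s, t_e]$. This gives, for any arm $k$ and either policy,
\[
s_{k, t_s'} \;=\; \gamma^{h} s_{k, t_s} + \sum_{t=t_s}^{t_e} \gamma^{t_s' - t}\, u_{k,t},
\]
with $h := t_s' - t_s$. For $r \in \{1,2\}$, define $\sigma_r := \sum_{\tau \in P_{k_r, [t_s, t_e]}} \gamma^{t_s' - \tau}$, where $P_{k_r, [t_s, t_e]}$ is the set of times $\pi^o$ plays $k_r$ in the segment. By the structural observation used in the proof of Theorem~\ref{thm:greedy-cumulative-reward}, a one-sided segment contains the same number $\ell := |P_{k_1, [t_s, t_e]}| = |P_{k_2, [t_s, t_e]}|$ of $k_1$-pulls as $k_2$-pulls under $\pi^o$; pulls of any other arm are identical under $\pi^o$ and $\pi^n$, so they do not enter the calculation.

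Next I would read off the pull contribution to $s^{\cdot}_{k_r, t_s'}$ in each case. In Case I ($\pi^n \equiv \pi^o$ on $[t_s, t_e]$), both policies contribute $\sigma_r$ to arm $k_r$. In Case II ($\pi^n$ swaps $k_1$ and $k_2$), the $\pi^n$-contributions to $k_1$ and $k_2$ become $\sigma_2$ and $\sigma_1$, respectively. Substituting yields, for each of the four combinations $(r, r') \in \{1,2\}^2$, an identity of the form
\[
s^o_{k_r, t_s'} - s^n_{k_{r'}, t_s'} \;=\; \gamma^{h}\bigl(s^o_{k_r, t_s} - s^n_{k_{r'}, t_s}\bigr) + \Delta_{r, r'},
\]
where $\Delta_{r, r'} \in \{0,\,\pm(\sigma_1 - \sigma_2)\}$ is determined by whether $(r, r')$ involves swapped indices and by which case is in effect. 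The two required equalities at $t_s'$ then reduce to the two given equalities at $t_s$ multiplied by $\gamma^{h}$, together with matching $\Delta$-residuals on both sides that cancel.

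For positivity, $s^o_{k_2, t_s'} - s^n_{k_2, t_s'} > 0$ is immediate: in Case I the residual vanishes, and in Case II it equals $\sigma_2 - \sigma_1 \geq 0$ (established below), so it adds to the already-positive $\gamma^h(s^o_{k_2, t_s} - s^n_{k_2, t_s})$. The claim $s^o_{k_1, t_s'} - s^n_{k_2, t_s'} > 0$ follows similarly: in Case I the residual is $\sigma_1 - \sigma_2 \geq 0$, while in Case II it vanishes, so positivity propagates from the hypothesis at $t_s$.

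The main obstacle is pinning down the sign of $\sigma_1 - \sigma_2$. In a Case~I segment, contained in $T_A \cup T_C$, the inequality $|A^o_{t^*, t}| \leq |A^o_{t^*+1, t}|$ holds for every $t$ in the segment, a ballot-type condition that forces the $j$-th pull of $k_2$ under $\pi^o$ to occur no later than the $j$-th pull of $k_1$, for every $j \leq \ell$. Since $\gamma \in (0,1)$ makes $\gamma^{t_s' - \tau}$ increasing in $\tau$, earlier pulls contribute less, giving $\sigma_2 \leq \sigma_1$. The symmetric argument in Case II (segment in $T_B \cup T_C$) yields $\sigma_1 \leq \sigma_2$, which completes the proof.
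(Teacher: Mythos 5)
Your proof is correct and follows essentially the same route as the paper's: unrolling the linear dynamics to write $s_{k,t_s'} = \gamma^{h} s_{k,t_s} + \sum_i \gamma^{t_s'-N_{k,i}}$, using the equal pull counts at a switch point together with the interleaving order of the $k_1$- and $k_2$-pulls (the paper's facts that $N_{k_2,i} < N_{k_1,i}$ in Case I and the reverse in Case II) to sign the residual $\sigma_1 - \sigma_2$, and letting the initial differences propagate with factor $\gamma^h$. Your $\sigma_r$ notation and the explicit ballot-type justification of the interleaving are the only cosmetic differences from the paper's argument.
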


\begin{proof}[Proof of Lemma~\ref{lemma:greedy-opt-auxillary}]
Following the definition in the proof of Theorem~\ref{thm:greedy-cumulative-reward}, 
given that at time $t_s$, arm $k$ has %
satiation $s$, 
let $g_{N_k[t_s, t_e]}(s, t_s', t_s)$ denote the %
satiation level of arm $k$ at time $t'_s$ after being pulled at the time steps in the set $N_k[t_s, t_e]$.  
Let $N_{k,i}[t_s, t_e]$ be the $i$-th smallest element in the set $N_{k}[t_s, t_e]$. 
From the definition of the new policy construction given in the proof of Theorem~\ref{thm:greedy-cumulative-reward}, 
we also know that  (1) $N := |N_{k_1}[t_s, t_e]| = |N_{k_2}[t_s, t_e]|$; 
(2) if Case I is applied in between $t_s$ and $t_e$, 
we have that for all $i \in [N]$, 
$N_{k_2, i}[t_{s}, t_{e}] < N_{k_1, i}[t_{s}, t_{e}]$;
and (3)
if Case II is applied in between $t_s$ and $t_e$, 
we have that for all $i \in [N]$, 
$N_{k_2, i}[t_{s}, t_{e}] > N_{k_1, i}[t_{s}, t_{e}]$. 

We first consider the setting when Case I new policy construction is applied,  
then at time $t'_{s}$, we can show that 
\allowdisplaybreaks
\begin{align*}
    s^o_{k_1, t_s'} - s^n_{k_2, t_s'} 
    =& g_{N_{k_1}[t_s, t_e]}\left(s^o_{k_1,t_s},  t'_s, t_s\right) 
    - g_{N_{k_2}[t_s, t_e]}\left(s^n_{k_2, t_s},t'_{s}, t_s\right) \\
    =& \gamma^{t'_{s} - t_{s}} \left( s^o_{k_1, t_{s}} - s^n_{k_2, t_{s}}\right) + \sum_{i=1}^{l} \gamma^{t'_{s} - N_{k_1, i}[t_{s}, t_{e}]} 
    - \gamma^{t'_{s} - N_{k_2, i}[t_{s}, t_{e}]} \\
    =& 
    \gamma^{t'_{s} - t_{s}} \left( s^n_{k_1, t_s} - s^o_{k_2, t_s}\right) + \sum_{i=1}^{l} \gamma^{t'_{s} - N_{k_1, i}[t_{s}, t_{e}]} 
    - \gamma^{t'_{s} - N_{k_2, i}[t_{s}, t_{e}]}\\
    =& s^n_{k_1, t'_{s}} - s^o_{k_2, t'_{s}}
    > 0, 
\end{align*}
where the last inequality has used the fact that when we use Case I construction, we have $N_{k_2, i}[t_{s}, t_{e}] < N_{k_1, i}[t_{s}, t_{e}]$. 
Meanwhile, we also have that 
\begin{align*}
    s^o_{k_2, t'_{s}} - s^n_{k_2, t'_{s}} 
    =& g_{N_{k_2}[t_{s}, t_{e}]}\left(s^o_{k_2, t_{s}},  t'_{s}, t_{s}\right) 
    - g_{N_{k_2}[t_{s}, t_{e}]}\left(s^n_{k_2, t_{s}}, t'_{s} , t_{s}\right)  \\
    =& \gamma^{t'_{s} - t_{s}} \left( s^o_{k_2, t_{s}} - s^n_{k_2, t_{s}}\right) = - \gamma^{t'_{s} - t_{s}} \left( s^o_{k_1, t_{s}} - s^n_{k_1, t_{s}}\right) \\
    =&  - \left(s^o_{k_1, t'_{s}} -  s^n_{k_1, t'_{s}}\right) > 0. 
\end{align*}

When Case II new policy construction is applied, 
then at time $t'_{s}$, we get 
\begin{align*}
    s^o_{k_1, t'_{s}} - s^n_{k_2, t'_{s}} 
    =& g_{N_{k_1}[t_{s}, t_{e}]}\left(s^o_{k_1, t_{s}},  t'_{s}, t_{s}\right)
    - g_{N_{k_1}[t_{s}, t_{e}]}\left(s^n_{k_2, t_{s}}, t'_{s} ,t_{s}\right) \\
    =& \gamma^{t'_{s} - t_{s}} \left( s^o_{k_1, t_{s}} - s^n_{k_2, t_{s}}\right)  
    = -  \gamma^{t'_{s} - t_{s}} \left( s^o_{k_2, t_{s}} - s^n_{k_1, t_{s}}\right) \\
    =& - \left(s^o_{k_2, t'_{s}} -  s^n_{k_1, t'_{s}}\right)
    > 0, 
\end{align*}
since $s^o_{k_1, t_s} - s^n_{k_2, t_s} > 0$. 
On the other hand, we have that 
\begin{align*}
    s^o_{k_2, t'_{s}} - s^n_{k_2, t'_{s}} 
    =& g_{N_{k_2}[t_{s}, t_{e}]}\left(s^o_{k_2, t_{s}},  t'_{s}, t_{s}\right) 
    - g_{N_{k_1}[t_{s}, t_{e}]}\left(s^n_{k_2, t_{s}}, t'_{s} , t_{s}\right)  \\
    =& \gamma^{t'_{s} - t_{s}} \left( s^o_{k_2, t_{s}} - s^n_{k_2,t_{s}}\right) 
    + \sum_{i=1}^{l} \gamma^{t'_{s} - N_{k_2, i}[t_{s}, t_{e}]} 
    - \gamma^{t'_{s} - N_{k_1, i}[t_{s}, t_{e}]} \\
    =& 
    \gamma^{t'_{s} - t_{s}} \left( s^n_{k_1, t_{s}} - s^o_{k_1, t_{s}}\right) 
    + \sum_{i=1}^{l} \gamma^{t'_{s} - N_{k_2, i}[t_{s}, t_{e}]} 
    - \gamma^{t'_{s} - N_{k_1, i}[t_{s}, t_{e}]}\\
    =& s^n_{k_1, t'_{s}} - s^o_{k_1, t'_{s}}
    > 0, 
\end{align*}
where the last inequality is true because  
when Case II new policy construction is applied, we have 
$N_{k_1, i}[t_{s}, t_{e}] < N_{k_2, i}[t_{s}, t_{e}]$.  
\end{proof}

\subsection{Proof of Proposition~\ref{thm:max k-cut}}

\begin{proof}%
If $T \leq K$, a Max K-Cut of $\mathcal{K}_T$ is $\forall k \in [T], P_k = \{k\}$, 
which is the same as an optimal solution to~\eqref{eq:obj}.  
Let $\ind\{\cdot\}$ denote the indicator function. 
When $T > K$, the integer program in~\eqref{eq:obj} is equivalent to
\allowdisplaybreaks
\begin{align*}
    &\max_{\substack{u_{k,t} \in \{0,1\}: \\ \forall t \in [T], \sum_{k} u_{k,t} = 1}}
    \sum_{k=1}^K {b}u_{k,1} +
    \sum_{k =1}^K \sum_{t =2}^T 
    \left({b}u_{k,t} - \lambda  \sum_{i=1}^{t-1} \gamma^{t-i} u_{k,i}  u_{k,t}   \right)\\
    =
    &\max_{\substack{P_1, \ldots, P_K \subseteq [T]: \\ \cup_k P_k = [T], \\\forall k \neq k', P_k \cap P_{k'} = \varnothing}} 
     \sum_{k=1}^K {b}\ind\{1 \in P_k\} +
    \sum_{k =1}^K \sum_{t =2}^T 
    \left({b}\ind\{t \in P_k\} - \lambda  \sum_{i=1}^{t-1} \gamma^{t-i} \ind\{i \in P_k\} \ind\{t \in P_k\}  \right)
    \\
    =
    &\max_{\substack{P_1, \ldots, P_K \subseteq [T]: \\ \cup_k P_k = [T], \\\forall k \neq k', P_k \cap P_{k'} = \varnothing}}
    T{b} - \sum_{k=1}^K  \; \sum_{\substack{t, i \in P_k:\\ i < t}} \lambda \gamma^{t-i}\\
    = 
    &T{b} - \sum_{t=2}^T\sum_{i=1}^{t-1} \lambda  \gamma^{t-i} 
    + \max_{\substack{P_1, \ldots, P_K \subseteq [T]: \\ \cup_k P_k = [T], \\\forall k\neq k', P_k \cap P_{k'} = \varnothing}}
    \sum_{k=1}^{K-1} \sum_{k'=k+1}^K \sum_{\substack{t \in P_k, \\ i \in P_{k'}:\\ i <t}} \lambda \gamma^{t-i},
\end{align*}
where the second equality uses the fact $\sum_{k=1}^K \ind\{t \in P_k\} = 1$ for all $t \in [T]$ 
and the third equality is true because for any $P_1, \ldots P_K$ such that $\forall k \neq k'$, $P_k \cap P_{k'} = \varnothing$ and $\cup_k P_k = [T]$, we have 
\allowdisplaybreaks
$$\text{Total Edge Weights of } \mathcal{K}_T = \sum_{t=2}^T \sum_{i=1}^{t-1} e(t,i) = \sum_{\substack{t, i \in [T]:  i < t, \\
\exists k \in [K], i, t \in P_k}}  e(t, i) \quad +  
 \sum_{\substack{t, i \in [T]:  i < t, \\
\forall k \in [K], i, t \notin P_k}}  e(t, i). $$
\end{proof}

\subsection{Proof of Theorem~\ref{thm:w-lookahead}}

\begin{proof}%
Given $\pi^*_{1:T}$ and $\pi^w_{1:T}$,  
define a set of new policies $\{\tilde{\pi}^i_{1:T}\}_{i=1}^{l-1}$ 
such that for all $i$, $\tilde{\pi}^i_{1:T} = (\pi^w_{1:iw}, \pi^*_{iw+1:T})$. 
Based on this, we have the following decomposition
$$G_T(\pi^*_{1:T}) - G_T(\pi^w_{1:T}) = \underbrace{G_T(\pi^*_{1:T}) - G_T(\tilde{\pi}^1_{1:T})}_{A_0} + \left(\sum_{i=1}^{l-2} 
\underbrace{G_T(\tilde{\pi}^i_{1:T}) - G_T(\tilde{\pi}^{i+1}_{1:T})}_{A_i}\right) 
+ \underbrace{G_T(\tilde{\pi}^{l-1}_{1:T}) -G_T(\pi^w_{1:T})}_{A_{l-1}}.$$

To distinguish the past pull sequences of each arm under different policies, 
we use the following notations:
${\mu}_{k, t}(u_{k,0:t-1}; \pi')$ gives the expected reward of arm $k$
at time $t$ by following pull sequence $\pi'_{1:t-1}$.
By the definition of $\pi^w_{1:T}$, we have that 
\allowdisplaybreaks
\begin{align*}
    A_0 &= \sum_{t=1}^w {\mu}_{\pi^*_t, t}(u_{\pi^*_t,0:t-1}; \pi^*)
     - {\mu}_{\pi^w_t, t}(u_{\pi^w_t,0:t-1};\pi^w)  + 
\sum_{t=w+1}^T {\mu}_{\pi^*_t, t}(u_{\pi^*_t,0:t-1};\pi^*)
- {\mu}_{\pi^*_t, t}(u_{\pi^*_t,0:t-1}; {\tilde{\pi}^1})\\
&\leq \sum_{t=w+1}^T {\mu}_{\pi^*_t, t}(u_{\pi^*_t,0:t-1};\pi^*)
- {\mu}_{\pi^*_t, t}(u_{\pi^*_t,0:t-1}; {\tilde{\pi}^1}),
\end{align*}
where the inequality follows from the fact that $\pi^w_{1:w}$ is optimal for~\eqref{eq:obj} when $T=w$.    
Similarly, we obtain that for all $i \in [l-2]$, 
\begin{align*}
    A_i &= \sum_{t=1}^{iw}
    \underbrace{{\mu}_{\pi^w_t, t}(u_{\pi^w_t,0:t-1};\pi^w)
    -{\mu}_{\pi^w_t, t}(u_{\pi^w_t,0:t-1};\pi^w)
    }_{=0}  + 
    \sum_{t=iw+1}^{(i+1)w}
    \underbrace{{\mu}_{\pi^*_t, t}(u_{\pi^*_t,0:t-1}; \tilde{\pi}^i)
    - {\mu}_{\pi^w_t, t}(u_{\pi^w_t,0:t-1}; {\pi}^w)}_{\leq 0} \\
    &\qquad + \sum_{t=(i+1)w+1}^T
    {\mu}_{\pi^*_t, t}(u_{\pi^*_t,0:t-1}; {\tilde{\pi}^i})
    - {\mu}_{\pi^*_t, t}(u_{\pi^*_t,0:t-1};\tilde{\pi}^{i+1}) \\
    &\leq \sum_{t=(i+1)w+1}^T
    {\mu}_{\pi^*_t, t}(u_{\pi^*_t,0:t-1}; {\tilde{\pi}^i})
    - {\mu}_{\pi^*_t, t}(u_{\pi^*_t,0:t-1};\tilde{\pi}^{i+1}).
\end{align*}
Finally, we have 
$A_{l-1} = \sum_{t=(l-1)w+1}^T {\mu}_{\pi^*_t, t}(u_{\pi^*_t,0:t-1};{\tilde{\pi}^{l-1}}) - {\mu}_{\pi^w_t, t}(u_{\pi_t^w,0:t-1};{{\pi}^w}) \leq 0$. 
To complete the proof, it suffices to use the fact that for all $i \in \{1, \ldots, l-1\}$, 
\begin{align*}
    \max_{\substack{\pi'_{1:T}, \pi_{1: T}: \\
     \pi'_{iw+1:T} = \pi_{iw+1:T}}} 
     \sum_{t=iw+1}^T 
    {\mu}_{\pi_t, t}(u_{\pi_t,0:t-1};\pi)
    - {\mu}_{\pi_t, t}(u_{\pi_t,0:t-1}; \pi')
    &\leq \sum_{t=0}^{T-iw-1} \overline{\lambda}\overline{\gamma}^t \frac{ \overline{\gamma}%
    }{1 - \overline{\gamma}} 
    \leq \frac{\overline{\lambda} \overline{\gamma}%
    (1 - \overline{\gamma}^{T-iw})}{ (1 - \overline{\gamma})^2} \\
    &\leq \frac{\overline{\lambda} \overline{\gamma}
    (1 - \overline{\gamma}^{T-w})}{ (1 - \overline{\gamma})^2} ,
\end{align*}
where the first inequality holds because for any arm,
the maximum satiation level discrepancy 
under two pull sequences (after $iw$ time steps) is $\overline{\gamma}%
/(1-\overline{\gamma})$ 
and from time $iw+1$ till time $T$, 
the objective will be maximized when the arm with the maximum %
satiation discrepancy is played all the time. 
\end{proof}

\newpage
\section{%
More Discussion on Learning with Unknown Dynamics}
\label{appendix:mdp-related}

As we have noted in Section~\ref{sec:new_stochastic_satiation},
when the learner makes a decision on which arm to pull, 
the learner does not observe the hidden satiation level the user has for the arms.
The POMDP the learner faces 
can be cast as a fully observable MDP (Appendix~\ref{sec:mdp-setup}) where
the estimated reward model
(Appendix~\ref{sec:offline-estiamtion})
can be used
for planning (Appendix~\ref{sec:value-difference}). 
In addition to policies that are 
time-dependent (actions taken by time-dependent policies only depend on the time steps at which they are taken)
considered in Section~\ref{sec:etc},
we also consider state-dependent policies 
where the states are continuous. 

\subsection{MDP Setup}
\label{sec:mdp-setup} 
We begin with describing the full MDP setup
of rebounding bandits, 
including the state representation 
and reward function defined in Section~\ref{sec:state}.
Following~\cite{ortner2012regret}, 
at any time $t \in [T]$,
we define our state vector to be $x_t = (x_{1,t}, n_{1,t}, x_{2,t}, n_{2, t}, \ldots, x_{K,t}, n_{K,t})$,  
where 
$n_{k,t} \in \mathbb{N}$
is the number of steps 
since arm $k$ is last selected
and 
$x_{k,t} $ 
is 
the satiation influence%
as of the most recent pull 
of arm $k$.
Since the most recent pull happens at 
$t - n_{k,t}$,
we have  $x_{k,t} =
b_k - \mu_{k, t - n_{k,t}}
= \lambda_k s_{k, t - n_{k,t}}
$. 
We note that $b_k$ can be obtained when arm $k$
is pulled for the first time 
since the satiation effect is $0$ 
if an arm has not been pulled before. 
The initial state is $x_{\text{init}} = (0, \ldots, 0)$. 
Transitions between two states $x_t$ and $x_{t+1}$ are defined as follows:  
If arm $k$ is chosen at time $t$, i.e., 
$\pi_t = k$, and reward $\mu_{k,t}$ is obtained, 
then the next state $x_{t+1}$ will be: 
\begin{enumerate}[label=A.\arabic*,ref=A.\arabic*]
    \item \label{item:a_1} For the pulled arm $k$, %
    $n_{k, t+1} = 1$
    and $x_{k, t+1} = b_k - \mu_{k,t}$. 
    \item \label{item:a_2} For other arms $k' \neq k$, 
    $n_{k', t+1} = n_{k',t} +1$ if $n_{k', t} \neq 0$ 
    and $n_{k', t+1} = 0$ if $n_{k', t} = 0$. 
    The satiation influence remains the same,
    i.e., $x_{k',t+1} = x_{k', t}$. 
\end{enumerate}
For all $ x_t \in \mathcal{X}$ and $k \in [K]$, 
we have that %
$\mathbb{E}[x_{k,t}] \leq \overline{\lambda} \overline{\gamma}/(1 -\overline{\gamma})$ 
and 
$\text{Var}[x_{k,t}]  \leq \overline{\lambda}^2 \sigma_z^2/(1 - \overline{\gamma}^2)$. Hence,   
for any $\delta \in (0,1)$, 
$ \mathbb{P}\left(\max_{k, t} |x_{k,t}| \geq B(\delta) \right) \leq \delta$,
where
\begin{align}
     B(\delta):= \frac{\overline{\lambda} \overline{\gamma}}{1 - \overline{\gamma}}+ \overline{\lambda}\sigma_z \sqrt{\frac{2 \log (2KT/\delta)}{1 - \overline{\gamma}^2}}. 
\label{eq:x_bound}
\end{align}

The MDP the learner faces can be described as a tuple 
$\mathcal{M} := \langle x_\text{init}, [K], \{\gamma_k, \lambda_k, {b}_k\}_{k=1}^K, T \rangle$
of the initial state $x_\text{init}$, 
actions (arms) $[K]$, 
the horizon $T$ and parameters 
$\{\gamma_k, \lambda_k, {b}_k\}_{k=1}^K$. 
Let $\Delta(\cdot)$ denote the probability simplex.
Given $\{\gamma_k, \lambda_k, {b}_k\}_{k=1}^K$, 
the expected reward $r: \mathcal{X} \times [K] \to \mathbb{R}$ 
and transition functions $p: \mathcal{X} \times [K] \times [T] \to \Delta(\mathcal{X})$ 
are defined as follows:
\begin{enumerate}
    \item ${r}:\mathcal{X} \times [K] \to \mathbb{R}$
    gives the \emph{expected} reward of pulling arm $k$ conditioned on $x_t$, 
    i.e., $r(x_t, k) = \mathbb{E}[\mu_{k,t} | x_t]$.\footnote{By conditioning on $x_t$, 
    we mean conditioning on the $\sigma$-algebra generated by past actions and observed rewards.} 
    If $n_{k, t} =0$, 
    then ${r}(x_t, k) = b_{k}$.
    If $n_{k,t} \geq 1$, 
    ${r}(x_t, k) = b_{k} - \gamma_k^{n_{k,t}} x_{k, t} - \lambda_k \gamma_k^{n_{k,t}}$.  
    \item When pulling arm $k$ at time $t$ and state $x_t$, 
    $p\left(x_{t+1}|x_t, k, t \right) = 0$ 
    if $x_{t+1}$ does not satisfy~\ref{item:a_1} or~\ref{item:a_2}. 
    When $x_{t+1}$ fulfills both~\ref{item:a_1} and~\ref{item:a_2}, 
    we consider two cases of $x_t$. 
    If %
    $n_{k,t} \neq 0$, %
    then the transition function $p\left(x_{t+1}|x_t, k, t \right)$ 
    is given by the Gaussian density with mean 
    $\gamma_k^{n_{k,t}} (x_{k, t} + \lambda_k)$
    and variance $\lambda_k^2 \sigma_z^2 \sum_{i=0}^{n_{k,t}-1} \gamma_k^{2i}$,
    as illustrated in~\eqref{eq:influence-dynamics}. 
    If $n_{k,t} = 0$, %
    then $p(x_{t+1}|x_t, k, t) = 1$ 
    since for the first pull of arm $k$, the obtained reward $\mu_{k,t} = b_k$. 
\end{enumerate}
At time $t$, the learner follows
an action $\pi_t:\mathcal{X} \to [K]$ 
that depends on the state. 
We use $V_{t,\mathcal{M}}^{\pi}: \mathcal{X} \to \mathbb{R}$ 
to denote the value function of policy $\pi_{1:T}$
at time $t$ under MDP $\mathcal{M}$:   
$V_{t,\mathcal{M}}^{\pi}(x_t)
= {r}(x_t, \pi_t(x_t))%
+ \mathbb{E}_{x_{t+1} \sim p(\cdot | x_t, \pi_t(x_t), t)}[V_{t+1, \mathcal{M}}^{\pi}(x_{t+1})]$ %
and $V_{T+1, \mathcal{M}}^\pi(x) = 0$ for all $x \in \mathcal{X}$. 
To restate our goal~\eqref{eq:obj_G} 
in terms of the value function: %
for an MDP $\mathcal{M}$, 
we would like to find a policy $\pi_{1:T}$ 
that maximizes
\begin{align*}
    V^{\pi}_{1, \mathcal{M}}({x}_\text{init}) = \mathbb{E}\left[
     \sum_{t=1}^T {r}(x_t, \pi_t(x_t)) \Bigg| x_1  = {x}_\text{init}\right]. %
\end{align*}
To simplify the notation, we %
use $\pi$ to refer to a policy $\pi_{1:T}$. 
Given an MDP $\mathcal{M}$, 
we denote its optimal policy by $\pi^*_{\mathcal{M}}$ 
and the value function for the optimal policy 
by $V^*_{t, \mathcal{M}}$, i.e., $V^*_{t, \mathcal{M}}({x}):= V^{\pi^*_\mathcal{M}}_{t, \mathcal{M}}({x})$. 

\subsection{Exploration and Estimation of the Reward Model}
\label{sec:offline-estiamtion}

As we have discussed in \S~\ref{sec:exploration},
based on our satiation and reward models, 
the satiation influence $x_{k,t}$ of arm $k$   
forms a dynamical system 
where we only observe the value of the system when arm $k$ is pulled. %
When arm $k$ is pulled at time $t$ %
and $n_{k,t} \neq 0$,  
we observe the satiation influence 
$\lambda_k s_{k,t}$ which becomes
the next state $x_{k,t+1}$, i.e., 
\begin{align}
    x_{k,t+1} &= \lambda_{k} s_{k, t} = 
    \lambda_{k} \gamma_k^{n_{k,t}} s_{k, t -n_{k,t}} +  \lambda_{k}\gamma_k^{n_{k,t}} +  \lambda_{k}\sum_{i=0}^{n_{k,t} - 1} \gamma_k^{i}  z_{k, t-1-i} \nonumber \\
    &= \gamma_k^{n_{k,t}} x_{k, t+1-n_{k,t}} + \lambda_k \gamma_k^{n_{k,t}}  + \lambda_k  \sum_{i=0}^{n_{k,t}-1} \gamma_k^{i} z_{k, t-1-i}.   
\label{eq:influence-dynamics}
\end{align}
We note that %
the current state 
$x_{k,t}$ equals to $x_{k, t+1-n_{k,t}}$  
since 
$x_{k, t+1-n_{k,t}}$ is the last observed satiation influence for arm $k$ and $n_{k,t}$ is the number of steps since arm $k$ is last pulled. 

\paragraph{Exploration Settings}
Depending on the nature of the recommendation domain, 
we consider two types of exploration settings:
one where the users only 
interact with the recommendation 
systems for a short time  
after they log in to the service (Appendix~\ref{sec:multiple-trajectory})
and the other where the users tend to 
interact with the system 
for a much longer time, e.g., 
automated music playlisting (Appendix~\ref{sec:single-trajectory}).
In the first case, 
the learner collects 
multiple ($n$) short trajectories 
of user utilities,
while in the second case,
similar to \S~\ref{sec:system-identification-main-text},
the learner obtains  
a single trajectory of 
user utilities that has length $n$.
In both settings, we obtain that 
under some mild conditions, 
the estimation errors of our estimators for 
$\gamma_k$ and $\lambda_k$ 
are $O(1/\sqrt{n})$. 

\paragraph{Exploration Strategies}
Generalizing from the case where arms are pulled 
repeatedly, 
we explore by pulling the same arm at a fixed interval $m$.
In particular, 
when $m=1$, the exploration strategy is the same as 
repeatedly pulling the same arm for multiple times,
which is the exploration strategy used in \S~\ref{sec:exploration}.
When $m=K$, the exploration strategy 
is to pull the arms in a cyclic order. 
We present the estimator for $\gamma_k, \lambda_k$
using the dataset collected by 
this exploration strategy in both 
the multiple trajectory and single trajectory settings.

\subsubsection{Estimation using Multiple Trajectories}
\label{sec:multiple-trajectory}
For each arm $k \in [K]$, we use $\mathcal{D}^{n,m}_k$ 
to denote a dataset containing 
$n$ trajectories 
of evenly spaced observed satiation influences 
that are collected by our exploration phase.
The time interval between two pulls of an arm 
is denoted by $m$. 
Each trajectory is of length at least $T_\text{min}+1$ 
for $T_\text{min} > 1$. 
For trajectory $i \in [n]$, the observed satiation influences 
are denoted by 
$\tilde{x}^{(i)}_{k,1}, \ldots, \tilde{x}^{(i)}_{k, T_\text{min}+1}, \ldots$, 
where $\tilde{x}^{(i)}_{k,1} = 0$ is the initial satiation influence 
and the rest of the satiation influences 
$\tilde{x}^{(i)}_{k,j}$ ($j > 1$) 
is the difference between the first received reward, i.e.,
the base reward ${b}_k$, 
and the reward from the $j$-th pull of arm $k$.  
In other words, for $\tilde{x}^{(i)}_{k,j}, 
\tilde{x}^{(i)}_{k,j+1} \in \mathcal{D}^{n,m}_{k}$, 
it follows that 
\begin{align}
    \tilde{x}^{(i)}_{k,j+1} = a_k \tilde{x}^{(i)}_{k, j} + d_k + \tilde{z}^{(i)}_{k,j},
\label{eq:multiple-traj-dynamics}
\end{align}
where $a_k = \gamma_k^m$, $d_k = \lambda_k \gamma_k^m$ and 
$\tilde{z}^{(i)}_{k,j}$ are the independent samples 
from $\mathcal{N}\left(0, \sigma_{z,k}^2\right)$ 
with $\sigma_{z,k}^2 = \lambda_k^2 \sigma_z^2 (1-\gamma_k^{2m})/(1-\gamma_k^2)$. 

To estimate $d_k$, we use the estimator 
$\widehat{d}_k = \frac{1}{n} \sum_{i=1}^n \tilde{x}^{(i)}_{k,2} = d_k + 
\frac{1}{n} \sum_{i=1}^n \tilde{z}^{(i)}_{k,1} $. 
By the standard Gaussian tail bound, we obtain that 
for $\delta \in (0,1)$, 
with probability $1-\delta$, 
\begin{align}
    |\widehat{d}_k - d_k| \leq \sqrt{\frac{2 \sigma_{z,k}^2 \log(2/\delta)}{n}} =: \epsilon_{d}(n, \delta, k).
\label{eq:d_error}
\end{align}
When estimating $a_k$, 
we first take the difference between 
the first $\Tmin + 1$ entries of 
two trajectories $i$ and $2i$ 
for $i \in \lfloor n/2 \rfloor$ 
and obtain a new trajectory 
$\tilde{y}^{(i)}_{k,1}, \ldots, \tilde{y}^{(i)}_{k, \Tmin+1}$
where $\tilde{y}^{(i)}_{k,j} = \tilde{x}^{(i)}_{k,j} - \tilde{x}^{(2i)}_{k,j}$ for $j \in [\Tmin + 1]$. 
We note that the new trajectory forms a linear dynamical system 
without the bias term $d_k$, i.e., 
\begin{align*}
    \tilde{y}^{(i)}_{k, j+1} = a_k \tilde{y}^{(i)}_{k,j} + \tilde{w}^{(i)}_{k,j},
\end{align*}
where $\tilde{w}^{(i)}_{k,j}$ are samples from $\mathcal{N}(0, 2 \sigma_{z,k}^2)$. 
We use the ordinary least squares estimator to estimate $a_k$:  
\begin{align}
    \widehat{a}_k &= \argmin_{a} \sum_{i=1}^{\lfloor n/2 \rfloor} \left(\tilde{y}^{(i)}_{k,\Tmin+1} - a\tilde{y}^{(i)}_{k,\Tmin}\right)^2 \nonumber \\
    &= \frac{\sum_{i=1}^{\lfloor n/2 \rfloor} \tilde{y}^{(i)}_{k,\Tmin} \tilde{y}^{(i)}_{k,\Tmin+1}}{\sum_{i=1}^{\lfloor n/2 \rfloor} \left(\tilde{y}^{(i)}_{k,\Tmin}\right)^2}. \label{eq:estimator_multiple_traj_a}
\end{align}

\begin{theorem}\citep[Theorem II.4]{matni2019tutorial}\label{thm:multiple_a_b}
Fix $\delta \in (0,1)$. Given $n \geq 64 \log(2/\delta)$, 
with probability $1-\delta$, we have that 
\allowdisplaybreaks
\begin{align}
    |\widehat{a}_k - a_k| \leq %
    4\sqrt{\frac{2\log(4/\delta)}{n\sum_{t=0}^{\Tmin} a_k^{2t}}} =: \epsilon_{a}(n, \delta, k).
\label{eq:a_error}
\end{align}
\end{theorem}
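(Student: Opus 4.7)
The plan is to exploit the crucial structural fact that, because $\widehat{a}_k$ only uses the two entries $\tilde{y}^{(i)}_{k,T_{\min}}$ and $\tilde{y}^{(i)}_{k,T_{\min}+1}$ from each of the $\lfloor n/2 \rfloor$ paired trajectories, the data feeding the estimator is genuinely i.i.d.\ across $i$. This lets us sidestep any martingale machinery and prove the bound via two standard concentration inequalities plus a union bound.

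First, I would substitute the dynamics $\tilde{y}^{(i)}_{k,T_{\min}+1} = a_k \tilde{y}^{(i)}_{k,T_{\min}} + \tilde{w}^{(i)}_{k,T_{\min}}$ into the definition of $\widehat{a}_k$ to obtain the error decomposition
\begin{equation*}
    \widehat{a}_k - a_k
    = \frac{\sum_{i=1}^{\lfloor n/2\rfloor} \tilde{y}^{(i)}_{k,T_{\min}}\,\tilde{w}^{(i)}_{k,T_{\min}}}{\sum_{i=1}^{\lfloor n/2\rfloor} \bigl(\tilde{y}^{(i)}_{k,T_{\min}}\bigr)^2}
    \;=:\; \frac{N}{D}.
\end{equation*}
Unrolling the recursion from $\tilde{y}^{(i)}_{k,1}=0$ shows $\tilde{y}^{(i)}_{k,T_{\min}}\sim \mathcal{N}(0,\sigma_\star^2)$ is independent of $\tilde{w}^{(i)}_{k,T_{\min}}\sim\mathcal{N}(0,2\sigma_{z,k}^2)$, with $\sigma_\star^2 = 2\sigma_{z,k}^2 \sum_{t=0}^{T_{\min}-2} a_k^{2t}$, and these pairs are independent across $i$ because they are built from distinct pairs of original independent trajectories.

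Next, I would bound $|N|$ and $D$ separately. Conditional on the denominator, $N$ is Gaussian with mean zero and variance $2\sigma_{z,k}^2 D$, so a standard Gaussian tail bound yields $|N| \le \sqrt{4\sigma_{z,k}^2 D\log(4/\delta)}$ with probability at least $1-\delta/2$. For $D$, note that $D/\sigma_\star^2$ is $\chi^2$-distributed with $\lfloor n/2\rfloor$ degrees of freedom, so by a Laurent--Massart lower-tail bound one obtains $D \ge \sigma_\star^2 \lfloor n/2\rfloor/2$ with probability at least $1-\delta/2$ whenever $\lfloor n/2\rfloor \gtrsim \log(2/\delta)$; the hypothesis $n \ge 64\log(2/\delta)$ in the statement is precisely what absorbs the Laurent--Massart constants. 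A union bound then delivers
\begin{equation*}
    |\widehat{a}_k - a_k| \;\le\; \sqrt{\frac{4\sigma_{z,k}^2\log(4/\delta)}{D}} \;\le\; \sqrt{\frac{8\log(4/\delta)}{\lfloor n/2\rfloor \,\sum_{t=0}^{T_{\min}-2} a_k^{2t}}},
\end{equation*}
and the stated $4\sqrt{2\log(4/\delta)/(n\sum_t a_k^{2t})}$ follows after using $\lfloor n/2\rfloor \ge n/4$ and absorbing indexing into constants.

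The main obstacle is book-keeping rather than anything conceptual: one must carefully track the constants through both concentration bounds so that they line up with the claimed $4\sqrt{2\log(4/\delta)}$ prefactor, and verify that $n\ge 64\log(2/\delta)$ suffices to guarantee the Laurent--Massart lower-tail event. A mild subtlety worth flagging is the exact cancellation of $\sigma_{z,k}^2$: the noise variance appears both in the conditional variance of $N$ and in the scale $\sigma_\star^2$ of the denominator, and these must cancel so that the final rate depends only on the controllability Gramian $\sum_{t} a_k^{2t}$ and not on the noise level. Once that algebra is performed cleanly, the rest of the argument is mechanical.
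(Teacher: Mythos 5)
The paper does not actually prove this statement: it is imported (with adapted constants) from Matni and Tu's tutorial via the citation, so there is no in-paper argument to compare against. Your proposal supplies a correct, self-contained derivation, and it is essentially the standard route to this kind of ``last-transition, independent-rollouts'' bound: the error ratio $N/D$, a conditional Gaussian tail for the numerator (valid because $\tilde{y}^{(i)}_{k,T_{\min}}$ is a function of $\tilde{w}^{(i)}_{k,1},\dots,\tilde{w}^{(i)}_{k,T_{\min}-1}$ only, hence independent of $\tilde{w}^{(i)}_{k,T_{\min}}$), and a $\chi^2$ lower tail for the denominator, with the noise scale cancelling so that only the Gramian survives. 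The numerology also checks out: Laurent--Massart gives $D \ge \sigma_\star^2\lfloor n/2\rfloor/2$ once $\lfloor n/2\rfloor \ge 16\log(2/\delta)$, which $n\ge 64\log(2/\delta)$ comfortably implies.

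Two caveats, neither fatal. First, the constants do not land exactly on the stated $4\sqrt{2\log(4/\delta)/(n\sum_{t=0}^{T_{\min}}a_k^{2t})}$: since $\tilde{y}^{(i)}_{k,1}=0$, the regressor $\tilde{y}^{(i)}_{k,T_{\min}}$ has variance $2\sigma_{z,k}^2\sum_{t=0}^{T_{\min}-2}a_k^{2t}$, and replacing $\sum_{t=0}^{T_{\min}-2}$ by the larger $\sum_{t=0}^{T_{\min}}$ in the denominator makes the target bound \emph{smaller}, so ``absorbing the indexing into constants'' goes in the unfavorable direction (the slack is at most a factor of $\sqrt{3}$, and arguably reflects an imprecision in the statement as transcribed rather than in your argument). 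Second, your chi-squared step needs the differenced trajectories to be independent across $i$; the paper's literal pairing $(i,2i)$ reuses trajectories (trajectory $2$ appears in both the $i=1$ and $i=2$ pairs), so independence only holds under the evidently intended disjoint pairing. You assume the intended construction, which is reasonable, but it is worth stating explicitly.
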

We notice that as the minimum length of the trajectory
gets greater, 
the upper bound of the estimation error of $a_k$ gets smaller. 
Using our estimators for $a_k$ and $d_k$, 
we estimate $\gamma_k$ and $\lambda_k$ 
through $\widehat{\gamma}_k = |\widehat{a}_k|^{1/m}$
and $\widehat{\lambda}_k = |\widehat{d}_k/\widehat{a}_k|$. 

\begin{corollary} \label{thm:estimation}
Fix $\delta \in (0,1)$. 
Suppose that for all $k \in [K]$, we are given $\mathcal{D}_k^{n,m}$ 
where  $n \geq 64 \log(2/\delta)$ 
and $\widehat{a}_k > 0$ where $\widehat{a}_k$ is defined in~\eqref{eq:estimator_multiple_traj_a}. 
Then, 
with probability $1 - \delta$, we have that for all $k \in [K]$,   
\begin{align*}
    |\widehat{\gamma}_k - \gamma_k| \leq 
    \frac{\epsilon_a(n,\delta/K, k)}{\gamma_k^{m-1}}
    = O\left(\frac{1}{\sqrt{n}} \right)
    &\quad\text{ and }\quad |\widehat{\lambda}_k - \lambda_k| \leq O\left(\frac{1}{\sqrt{n}} \right)
    .
\end{align*}
\end{corollary}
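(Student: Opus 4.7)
The plan is to transport the already-established single-arm error bounds on $\widehat{a}_k$ (Theorem~\ref{thm:multiple_a_b}) and on $\widehat{d}_k$ (the Gaussian tail bound~\eqref{eq:d_error}) through the two nonlinear maps $a \mapsto |a|^{1/m}$ and $(a,d) \mapsto |d/a|$. First I would apply Theorem~\ref{thm:multiple_a_b} at confidence level $\delta/(2K)$ for each $k \in [K]$, and independently invoke~\eqref{eq:d_error} at level $\delta/(2K)$ for each $k$. A union bound then produces an event $\mathcal{E}$ of probability at least $1-\delta$ on which $|\widehat{a}_k - a_k| \leq \epsilon_a(n, \delta/(2K), k)$ and $|\widehat{d}_k - d_k| \leq \epsilon_d(n, \delta/(2K), k)$ hold simultaneously for every $k$. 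All the remaining arguments take place on $\mathcal{E}$.

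For the $\gamma_k$ bound, the assumption $\widehat{a}_k > 0$ combined with $m \geq 1$ gives $\widehat{\gamma}_k = \widehat{a}_k^{1/m} \geq 0$, and obviously $\gamma_k \geq 0$. I then exploit the elementary factorization
\begin{equation*}
\widehat{a}_k - a_k \;=\; \widehat{\gamma}_k^{\,m} - \gamma_k^{m} \;=\; (\widehat{\gamma}_k - \gamma_k)\sum_{j=0}^{m-1} \widehat{\gamma}_k^{\,j}\, \gamma_k^{\,m-1-j}.
\end{equation*}
Every term in the sum is nonnegative, and the $j=0$ summand alone equals $\gamma_k^{m-1}$, so the sum is bounded below by $\gamma_k^{m-1}$. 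Dividing through and inserting the high-probability bound on $|\widehat{a}_k - a_k|$ yields $|\widehat{\gamma}_k - \gamma_k| \leq \epsilon_a(n, \delta/(2K), k)/\gamma_k^{m-1} = O(1/\sqrt{n})$, matching the claimed rate.

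For the $\lambda_k$ bound, I use the identity $\widehat{d}_k/\widehat{a}_k - d_k/a_k = (\widehat{d}_k - d_k)/\widehat{a}_k - \lambda_k(\widehat{a}_k - a_k)/\widehat{a}_k$ (obtained by common denominators together with $d_k = \lambda_k a_k$). For sufficiently large $n$ the error $\epsilon_a$ is strictly smaller than $a_k/2 = \gamma_k^m/2$, so on $\mathcal{E}$ one has $\widehat{a}_k \geq a_k/2 > 0$ and moreover $\widehat{d}_k$ retains the sign of $d_k > 0$, rendering the absolute value in $\widehat{\lambda}_k = |\widehat{d}_k/\widehat{a}_k|$ inactive. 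The triangle inequality then gives $|\widehat{\lambda}_k - \lambda_k| \leq 2\epsilon_d/a_k + 2\lambda_k \epsilon_a/a_k = O(1/\sqrt{n})$.

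The only genuine technical subtlety is the division by $\widehat{a}_k$ in the $\lambda_k$ step, which requires controlling $\widehat{a}_k$ away from zero; this is handled uniformly by exploiting that $\epsilon_a = O(1/\sqrt{n})$ while $a_k = \gamma_k^m$ is a strictly positive constant, so the relevant perturbation is small for $n$ in the regime $n \geq 64\log(2/\delta)$ indicated by the hypothesis. Beyond that, the proof reduces to elementary algebraic manipulations and the triangle inequality.
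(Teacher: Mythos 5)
Your proposal is correct and follows essentially the same route as the paper's proof: split the failure probability across the two estimators and the $K$ arms, bound $|\widehat{\gamma}_k-\gamma_k|$ via the factorization $\widehat{a}_k-a_k=(\widehat{\gamma}_k-\gamma_k)\sum_{j=0}^{m-1}\widehat{\gamma}_k^{\,j}\gamma_k^{m-1-j}\geq(\widehat{\gamma}_k-\gamma_k)\gamma_k^{m-1}$, and bound $|\widehat{\lambda}_k-\lambda_k|$ by the add-and-subtract decomposition $\widehat{d}_k/\widehat{a}_k-d_k/\widehat{a}_k+d_k/\widehat{a}_k-d_k/a_k$. If anything you are slightly more careful than the paper in justifying that $\widehat{a}_k$ stays bounded away from zero (so that dividing by it preserves the $O(1/\sqrt{n})$ rate), a point the paper leaves implicit here and only assumes explicitly in later propositions.
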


The proof of Corollary~\ref{thm:estimation}
can be found in Appendix~\ref{appendix:multiple-trajectories}.
In the case where we are have collected $n$ trajectories of 
evenly spaced user utilities for each arm, 
when the sample size $n$ is sufficient large, 
the estimation errors of 
$\widehat{\gamma}_k$ and $\widehat{\lambda}_k$ are  
$O(1/\sqrt{n})$.

\subsubsection{Estimation using a Single Trajectory}
\label{sec:single-trajectory}
In the case where the learner 
gets to interact with the user for a long period of time 
(which is the setting considered in \S~\ref{sec:new_stochastic_satiation} and \S~\ref{sec:etc}), 
we collect a single trajectory of evenly spaced arm pulls
for each arm: 
for each arm $k \in [K]$, 
we use $\mathcal{P}^{n,m}_k$ 
to denote a dataset containing a single trajectory of 
$n+1$ observed satiation influences $\tilde{x}_{k,1}, \ldots, \tilde{x}_{k, n+1}$, 
where similar to the multiple trajectories case, 
$\tilde{x}_{k,1} = 0$,
$\tilde{x}_{k,j}$ ($j > 1$) 
is the difference between the first received reward 
and the $j$-th received reward
and the time interval between two consecutive pulls is $m$. 
Thus, for $\tilde{x}_{k,j}, \tilde{x}_{k,j+1} \in \mathcal{P}^{n,m}_{k}$, 
it follows that 
\begin{align}
    \tilde{x}_{k,j+1} = a_k \tilde{x}_{k, j} + d_k + \tilde{z}_{k,j},
\label{eq:affine}
\end{align}
where $a_k, d_k$ and 
$\tilde{z}_{k,j}$ are defined the same as 
the ones in~\eqref{eq:multiple-traj-dynamics}. 
For all $k \in [K]$, given $\mathcal{P}^{n,m}_{k}$, 
we use the following estimators to estimate $A_k = (a_k, d_k)^\top$, 
\begin{align}
    \widehat{A}_k
    =\begin{pmatrix}
           \widehat{a}_k\\
           \widehat{d}_k
    \end{pmatrix}
    = (\mathbf{\overline{X}_k}^\top \mathbf{\overline{X}_k})^{-1} \mathbf{\overline{X}_k}^\top \mathbf{Y_k},
\label{eq:estimator}
\end{align}
where $\mathbf{Y_k} \in \mathbb{R}^n$ is an $n$-dimensional vector 
whose $j$-th entry is $\tilde{x}_{k, j+1}$ 
and $\mathbf{\overline{X}_k} \in \mathbb{R}^{n \times 2}$ 
has its $j$-th row to be the vector $\overline{x}_{k,j} = (\tilde{x}_{k, j}, 1)^\top$.  
Finally, we take $\widehat{\gamma}_k = |\widehat{a}_k|^{1/m}$
and $\widehat{\lambda}_k = |\widehat{d}_k/\widehat{a}_k|$. 
We note that $\widehat{A}_k = \argmin_{A_k \in \mathbb{R}^2}\|\mathbf{Y_k} - \mathbf{\overline{X}_k} A_k\|_2^2$, i.e., it is %
the ordinary least squares estimator for $A_k$ 
given the dataset that treats $\tilde{x}_{k, j+1}$ 
to be the response of the covariates $\overline{x}_{k,j}$.

As we have noted earlier (\S~\ref{sec:system-identification-main-text}), 
unlike the multiple trajectories setting, 
in the single trajectory case, 
the difficulty in analyzing the ordinary least squares estimator~\eqref{eq:estimator} comes from the fact that the samples are not independent. 
Asymptotic guarantees of the ordinary least squares estimators 
in this case  
have been studied previously in control theory and time series community 
\citep{hamilton1994time,ljung1999system}. 
The recent work on system identifications for 
linear dynamical systems 
focuses on studying 
the sample complexity 
of the problem~\citep{simchowitz2018learning,sarkar2019near}. 
Adapting the proof of~\cite[Theorem 2.4]{simchowitz2018learning}, we derive the following theorem for identifying our affine dynamical system~\eqref{eq:affine}. 

\begin{theorem} 
Fix $\delta \in (0, 1)$. %
For all $k \in [K]$, 
there exists a constant  
$n_0(\delta, k)$ 
such that if the dataset $\mathcal{P}_k^{n,m}$ satisfies $n \geq n_0(\delta, k)$,
then 
\allowdisplaybreaks
\begin{align*}
    \mathbb{P}\left(\|\widehat{A}_k - A_k\|_2 \gtrsim \sqrt{1/(\psi n)} \right) \leq \delta, 
\end{align*}
where 
$
   \psi = \sqrt{\min\left\{\frac{\sigma_{z,k}^2(1-a_{k})^2}{{16 d_k^2(1-a_k^2)} + (1-a_{k})^2{\sigma_{z,k}^2}},  \frac{\sigma_{z,k}^2}{4(1-a_k^2)} \right\}}.
$
\label{lemma:a_b_estimation} 
\end{theorem}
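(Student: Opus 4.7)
The strategy is to adapt the self-normalized / block martingale small-ball (BMSB) framework of Simchowitz et al.~(2018, Thm.~2.4), which was developed for purely linear systems $x_{j+1} = Ax_j + w_j$. The only essential change here is that the dynamics~\eqref{eq:affine} carry an additive bias $d_k$, so the OLS is run against the augmented covariate $\overline{x}_{k,j} = (\tilde{x}_{k,j},1)^\top$ and one must quantify anti-concentration in a mixed linear/constant coordinate system. This is exactly what the two-term minimum defining $\psi$ encodes: the second argument captures the stationary variance of the state, while the first accounts for the worst direction in which the constant coordinate cancels against the nonzero stationary mean $d_k/(1-a_k)$ of $\tilde{x}_{k,j}$.

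The starting point is the OLS identity $\widehat{A}_k - A_k = (\mathbf{\overline{X}_k}^\top\mathbf{\overline{X}_k})^{-1}\mathbf{\overline{X}_k}^\top\mathbf{Z}_k$, where $\mathbf{Z}_k\in\mathbb{R}^n$ stacks the i.i.d.\ Gaussian innovations $\tilde{z}_{k,j}$; this yields
$$\|\widehat{A}_k - A_k\|_2 \;\le\; \frac{\bigl\|(\mathbf{\overline{X}_k}^\top\mathbf{\overline{X}_k})^{-1/2}\mathbf{\overline{X}_k}^\top\mathbf{Z}_k\bigr\|_2}{\sqrt{\lambda_{\min}(\mathbf{\overline{X}_k}^\top\mathbf{\overline{X}_k})}}.$$
I would bound the numerator by the standard self-normalized vector-martingale tail inequality (Abbasi-Yadkori et al.~2011, or the version used in Simchowitz et al.~(2018)). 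Since $\tilde{z}_{k,j}$ is a mean-zero Gaussian adapted to $\sigma(\overline{x}_{k,1},\dots,\overline{x}_{k,j})$, this gives a bound of polylogarithmic order in $n$ and $1/\delta$ on an event of probability at least $1-\delta/2$; this step is indifferent to whether the dynamics are linear or affine.

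The substantive work lies in the denominator: establishing $\lambda_{\min}(\mathbf{\overline{X}_k}^\top\mathbf{\overline{X}_k}) \gtrsim \psi\, n$ on an event of probability at least $1-\delta/2$, after a burn-in of length $n_0(\delta,k)$ that absorbs mixing to near-stationarity and the covering-net overhead. I would verify a BMSB condition directly for the augmented rows: for every unit vector $v=(v_1,v_2)^\top$, there exist absolute constants $c,p>0$ with $\mathbb{P}(|v_1\tilde{x}_{k,j}+v_2|\ge c\sqrt{\psi}\mid\mathcal{F}_{j-1})\ge p$. Two sources of anti-concentration compete. When $|v_1|$ is not small, the one-step innovation $v_1\tilde{z}_{k,j-1}$ alone delivers conditional variance $v_1^2\sigma_{z,k}^2$, producing a small ball on the scale $|v_1|\sigma_{z,k}$; this matches the second term $\sigma_{z,k}^2/(4(1-a_k^2))$ in the $\min$, which is the stationary variance up to constants. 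When $|v_1|$ is very small, the small ball is driven by the constant coordinate $v_2$ directly. The hard case is the intermediate direction $v\propto(1,-d_k/(1-a_k))$, in which $v_2$ precisely cancels the stationary mean of $\tilde{x}_{k,j}$; the only surviving fluctuation is the centered stationary variance measured against the drift, and the resulting worst-case ratio is exactly the first term $\sigma_{z,k}^2(1-a_k)^2/(16 d_k^2(1-a_k^2)+(1-a_k)^2\sigma_{z,k}^2)$. Plugging the resulting BMSB parameters into Proposition~2.5 of Simchowitz et al.~(2018) then delivers the required minimum-eigenvalue lower bound.

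Combining the numerator and denominator bounds via a union bound and absorbing polylogarithmic factors into $\lesssim$ yields the theorem. The main obstacle, by a wide margin, is the BMSB verification for the augmented affine regressor: Simchowitz et al.'s original argument exploits zero-mean dynamics and is rotation-invariant, but the nonzero drift $d_k/(1-a_k)$ breaks that symmetry and creates a single worst-case direction in which the constant coordinate can annihilate the stationary mean of the state. Controlling that direction is what forces the first argument of the $\min$ into the definition of $\psi$ and is essentially the only place where the affine structure leaves a real footprint on the analysis.
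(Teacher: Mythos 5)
Your overall strategy is the same as the paper's: reduce to \cite[Theorem 2.4]{simchowitz2018learning}, observe that conditions (a) and (c) carry over, and locate the real work in the small-ball/anti-concentration analysis for the augmented regressor $\overline{x}_{k,j}=(\tilde{x}_{k,j},1)^\top$. You also correctly identify the worst-case direction (the one in which $w_2$ cancels the drift of $\tilde{x}_{k,j}$) and correctly read off both arguments of the $\min$ defining $\psi$. However, there is a genuine gap in the step ``verify a BMSB condition directly \ldots then plug into Proposition~2.5.'' The BMSB condition of Definition~2.1 requires the small-ball inequality to hold \emph{almost surely} conditional on $\mathcal{F}_j$ with a \emph{deterministic} radius. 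For the affine system this fails: conditional on $\mathcal{F}_{j-1}$, the quantity $v_1\tilde{x}_{k,j}+v_2$ is Gaussian with mean $v_1(a_k\tilde{x}_{k,j-1}+d_k)+v_2$ and standard deviation $|v_1|\sigma_{z,k}$, and since $\tilde{x}_{k,j-1}$ is an unbounded Gaussian there is positive probability that it lands where this mean vanishes while $|v_1|$ is small, so no fixed radius of order $\sqrt{\psi}$ works uniformly over unit vectors and sample paths. This is not a burn-in or mixing issue, so it is not ``absorbed'' by $n_0(\delta,k)$. The paper's resolution is to (i) prove a modified version of Proposition~2.5 (Lemma~\ref{lemma:varying-nu}) in which the small-ball radius $\nu_t$ is allowed to be $\mathcal{F}_t$-measurable, with only a high-probability lower bound $\min_t\nu_t\ge\nu$, the latter obtained from a uniform Gaussian tail bound on $\max_t|a_k^j\tilde{x}_{k,t}+d_{k,j}|$; and (ii) work with blocks of length $j_\star$ chosen so that the influence of the initial state of each block has decayed and the accumulated variance satisfies $\Gamma_{k,j}\ge\Gamma_{k,\infty}/2$.

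A secondary, quantitative consequence of the same omission: your one-step argument only supplies conditional variance $v_1^2\sigma_{z,k}^2$, whereas the second argument of the $\min$ in $\psi$ is $\sigma_{z,k}^2/(4(1-a_k^2))$, i.e.\ the \emph{accumulated} block variance $\Gamma_{k,\infty}$ up to a constant. These differ by a factor $1/(1-a_k^2)$ that is unbounded as $a_k\to 1$ and cannot be hidden in the $\lesssim$, since $\psi$ appears explicitly in the statement. So even setting aside the almost-sure issue, a single-step small ball would prove a strictly weaker bound than the one claimed; the multi-step block is needed to recover the stated $\psi$. With Lemma~\ref{lemma:varying-nu} (or an equivalent high-probability truncation of the state) and the block construction in place, the rest of your outline goes through exactly as in the paper.
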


As shown in Theorem~\ref{lemma:a_b_estimation}, 
when $d_k = \lambda_k \gamma_k^m $ gets larger, 
the rates of convergence for $\widehat{A}_k$ gets slower.
Given that we have a single trajectory of sufficient length, 
$|\widehat{a}_k - a_k| \leq O(1/\sqrt{n})$
and $|\widehat{d}_k - d_k| \leq O(1/\sqrt{n})$.
Similar to the multiple trajectories case,
as shown in Corollary~\ref{thm:estimation-single}, 
the estimators of $\gamma_k$ and $\lambda_k$ 
also achieve $O(1/\sqrt{n})$ estimation error. 

\begin{corollary} \label{thm:estimation-single}
Fix $\delta \in (0,1)$. 
Suppose that for all $k \in [K]$, %
we have $\mathbb{P}(\|\widehat{A}_k - A_k\|_2 \gtrsim 1/\sqrt{n} ) \leq \delta$ 
and $\widehat{a}_k > 0$ where
$\widehat{A}_k$ and $\widehat{a}_k$ are defined in~\eqref{eq:estimator}. 
Then, 
with probability $1 - \delta$, we have that for all $k \in [K]$,   
\begin{align*}
    |\widehat{\gamma}_k - \gamma_k|
    \leq O\left(\frac{1}{\sqrt{n}} \right)
    &\quad\text{ and }\quad |\widehat{\lambda}_k - \lambda_k| \leq O\left(\frac{1}{\sqrt{n}} \right)
    .
\end{align*}
\end{corollary}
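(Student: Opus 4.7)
The plan is to treat $\widehat{\gamma}_k = \widehat{a}_k^{1/m}$ and $\widehat{\lambda}_k = |\widehat{d}_k|/\widehat{a}_k$ as smooth functions of the pair $(\widehat{a}_k,\widehat{d}_k)$ and run a first-order perturbation argument on the high-probability event provided by the hypothesis. Since the analogous multiple-trajectory version (Corollary~\ref{thm:estimation}) is proved the same way, the work reduces to checking that on the good event the denominators of both estimators are bounded away from zero and that the relevant maps are locally Lipschitz with constants independent of $n$.

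First I would union-bound the hypothesis over $k\in[K]$ to obtain an event $\mathcal{E}$ of probability at least $1-K\delta$ on which $\|\widehat{A}_k-A_k\|_2 \lesssim 1/\sqrt{n}$ for every arm simultaneously; the factor $K$ is absorbed into the $O(\cdot)$ constants in the conclusion. On $\mathcal{E}$, in particular $|\widehat{a}_k-a_k|\lesssim 1/\sqrt{n}$ and $|\widehat{d}_k-d_k|\lesssim 1/\sqrt{n}$. Because $\gamma_k\in(0,1)$ implies $a_k=\gamma_k^m>0$, for all $n$ past some threshold (absorbed in the constant), we have $|\widehat{a}_k-a_k|\le a_k/2$, hence $\widehat{a}_k\ge a_k/2>0$. (The hypothesis $\widehat{a}_k>0$ plus this concentration pins down the correct sign so that $\widehat{a}_k^{1/m}$ is well defined.)

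For $\widehat{\gamma}_k$, apply the mean value theorem to $f(x)=x^{1/m}$ on $[a_k/2,\,a_k + a_k/2]$:
\[
|\widehat{\gamma}_k-\gamma_k| \;=\; |f(\widehat{a}_k)-f(a_k)| \;\le\; \tfrac{1}{m}\bigl(a_k/2\bigr)^{1/m-1}|\widehat{a}_k-a_k| \;\le\; O(1/\sqrt{n}),
\]
where the prefactor depends only on $m$ and $\gamma_k$, not on $n$. For $\widehat{\lambda}_k$, expand the difference of ratios:
\[
|\widehat{\lambda}_k-\lambda_k| \;=\; \left|\frac{\widehat{d}_k}{\widehat{a}_k}-\frac{d_k}{a_k}\right| \;\le\; \frac{|\widehat{d}_k-d_k|}{\widehat{a}_k}+\frac{\lambda_k\,|\widehat{a}_k-a_k|}{\widehat{a}_k} \;\le\; O(1/\sqrt{n}),
\]
using $\widehat{a}_k\ge a_k/2$ and $d_k/a_k=\lambda_k$. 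This gives both desired bounds uniformly over $k\in[K]$ on $\mathcal{E}$.

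The only place that needs care is ensuring $\widehat{a}_k$ is bounded below by a positive constant, since both $f(x)=x^{1/m}$ and $x\mapsto 1/x$ have unbounded derivatives near $0$. The assumption $\widehat{a}_k>0$ is what rules out taking the wrong real root, and the concentration bound $|\widehat{a}_k-a_k|\lesssim 1/\sqrt{n}$ together with $a_k>0$ promotes this to a uniform lower bound for large $n$. Apart from this bookkeeping, the argument is a direct Lipschitz perturbation of $\widehat{A}_k$.
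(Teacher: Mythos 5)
Your error-bound arguments essentially reproduce the paper's proof: the bound for $\widehat{\lambda}_k$ uses exactly the same add-and-subtract decomposition $\widehat{d}_k/\widehat{a}_k - d_k/\widehat{a}_k + d_k/\widehat{a}_k - d_k/a_k$ with $d_k/a_k=\lambda_k$, and your mean-value-theorem bound for $\widehat{\gamma}_k$ is a minor variant of the paper's algebraic identity $\widehat{\gamma}_k^m-\gamma_k^m=(\widehat{\gamma}_k-\gamma_k)(\widehat{\gamma}_k^{m-1}+\cdots+\gamma_k^{m-1})$, which gives $|\widehat{\gamma}_k-\gamma_k|\le |\widehat{a}_k-a_k|/\gamma_k^{m-1}$ without needing $\widehat{a}_k$ bounded away from zero; a positive lower bound on $\widehat{a}_k$ is still needed for the $\lambda$ step, and your handling of it via the concentration of $\widehat{a}_k$ around $a_k>0$ is if anything more explicit than the paper's.

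The one genuine flaw is the probability bookkeeping. Union bounding the per-arm hypothesis at level $\delta$ yields a simultaneous event of probability $1-K\delta$, and a deficit of $(K-1)\delta$ in probability cannot be ``absorbed into the $O(\cdot)$ constants'' of the error bound: confidence level and error magnitude are not interchangeable, since on the complementary event no bound on $|\widehat{\gamma}_k-\gamma_k|$ or $|\widehat{\lambda}_k-\lambda_k|$ holds at all. To obtain the stated $1-\delta$, instantiate the underlying concentration result (Theorem~\ref{lemma:a_b_estimation}) at level $\delta/K$ per arm, which is harmless because the required trajectory length and the constants depend on the confidence level only logarithmically, so the $O(1/\sqrt{n})$ rate is unaffected; this is precisely what the paper's proof does before combining arms (the per-arm datasets are independent, so either a union bound or the product $(1-\delta/K)^K\ge 1-\delta$ closes the argument). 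With that correction, the rest of your argument goes through.
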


In the next section, we assume that the satiation and reward %
models are estimated using the 
dataset collected by the proposed 
exploration strategies
and 
estimators for  
multiple trajectories or 
a single trajectory of user utilities. 
We will show that performing planning based on these estimated models will %
give us policies that perform well 
for the true MDP.

\subsection{Planning}
\label{sec:value-difference}

For a continuous-state MDP, 
planning can be done through either 
dynamic programming with a 
discretized state space 
or approximate dynamic programming
that uses function approximations. 
In Appendix~\ref{sec:state-dependent}, 
we consider the case where we are given a 
continuous-state MDP planning oracle 
and provide guarantees of the optimal state-dependent 
policy planned under the estimated satiation 
dynamics and reward model. 
Within the state-dependent policies, 
we also consider a set of policies 
that only depend on time (Appendix~\ref{sec:time-dependent-policy}), i.e.,
the time-dependent competitor class defined in
\S~\ref{sec:regrets}. 
In addition to not requiring discretization 
of the state space to solve the planning problem, 
such policies can be deployed to settings where 
user utilities are hard to attain
after the exploration stage. 
We will show that 
using the dataset 
(collected by our exploration strategy in Appendix~\ref{sec:offline-estiamtion})
with sufficient trajectories 
(or a sufficient long trajectory)
to estimate $\{\gamma_k, \lambda_k\}_{k=1}^K$,
the optimal policy
$\pi^*_{\widehat{\mathcal{M}}}$ 
for 
$\widehat{\mathcal{M}} = \langle x_1, [K], \{\widehat{\gamma}_k, \widehat{\lambda}_k, {b}_k\}_{k=1}^K, T \rangle$
also performs well
in the original MDP $\mathcal{M}$. 
We note that $b_k$ is known exactly 
since it is the same as the 
first observed reward for arm $k$, 
as discussed in Appendix~\ref{sec:offline-estiamtion}. 

\subsubsection{Time-dependent Policy}
\label{sec:time-dependent-policy}
We first show that finding the optimal time-dependent policy 
is equivalent to solving the bilinear program~\eqref{eq:obj}. 

\begin{lemma}
Consider a policy $\pi$ that depends only on the time step $t$ but not the state $x_t$, i.e., $\pi$ satisfies $\pi_t = \pi_t(x_t) = \pi_t(x_t')$ for all $t \in [T]$ and $x_t, x_t' \in \mathcal{X}$. 
Then, we have 
\allowdisplaybreaks
\begin{align*}
    V_{1,\mathcal{M}}^{\pi}({x}_\text{init}) 
    = \sum_{t=1}^T {\mu}_{\pi_t, t}(u_{\pi_t, 0:t-1}),
\end{align*}
where $u_{\pi_t, 0:t-1}$ is the corresponding pull sequence of arm $\pi_t$ 
under policy $\pi$ and ${\mu}_{k,t}$ 
is defined in~\eqref{eq:expected-reward}. 
\label{lemma:deterministic-stochastic-transfer}
\end{lemma}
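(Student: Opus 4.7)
The plan is to unfold the value function using linearity of expectation and the tower property, then exploit the fact that a time-dependent policy makes the pull sequence $\{u_{k, 0:T}\}_{k=1}^K$ a \emph{deterministic} quantity rather than a random variable.

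First I would write
\[
V_{1,\mathcal{M}}^{\pi}(x_\text{init}) = \mathbb{E}\!\left[\sum_{t=1}^T r(x_t, \pi_t(x_t)) \,\Big|\, x_1 = x_\text{init}\right] = \sum_{t=1}^T \mathbb{E}[r(x_t, \pi_t)],
\]
where the equality $\pi_t(x_t) = \pi_t$ follows from the hypothesis that $\pi$ depends only on $t$. Because the chosen arm at each step does not react to the realized state, the induced pull indicator $u_{k,t}$ is a fixed $\{0,1\}$-sequence determined by $\pi$ alone, independent of the realized noises $\{z_{k,s}\}$.

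Next I would apply the tower property to each term. Using the MDP reward function definition from Section~\ref{sec:state}, $r(x_t, k) = \mathbb{E}[\mu_{k,t} \mid x_t]$, so
\[
\mathbb{E}[r(x_t, \pi_t)] = \mathbb{E}\!\left[\mathbb{E}[\mu_{\pi_t, t} \mid x_t]\right] = \mathbb{E}[\mu_{\pi_t, t}] = b_{\pi_t} - \lambda_{\pi_t}\,\mathbb{E}[s_{\pi_t, t}].
\]
It then remains to verify that $\mathbb{E}[s_{\pi_t, t}] = \sum_{i=1}^{t-1} \gamma_{\pi_t}^{t-i} u_{\pi_t, i}$. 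This I would prove by induction on $t$ using the satiation recurrence $s_{k,t} = \gamma_k(s_{k,t-1} + u_{k,t-1}) + z_{k,t-1}$: since $u_{k,t-1}$ is deterministic under a time-dependent policy and $\mathbb{E}[z_{k,t-1}] = 0$, taking expectations unrolls exactly into the geometric sum, matching equation~\eqref{eq:expected-reward}. Summing over $t$ yields the claim.

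The only subtle point — and the thing I would state carefully — is why $u_{k,t-1}$ can be pulled out of the expectation as a constant. Under a state-dependent policy this would fail, because the realized pull at time $s < t$ would be a random variable coupled to $\{z_{k',r}\}_{r<s}$, and $\mathbb{E}[\gamma_k^{t-i}u_{k,i}]$ would no longer equal $\gamma_k^{t-i}u_{k,i}$ evaluated along the deterministic sequence. The time-dependence hypothesis severs this coupling, reducing the stochastic MDP value to the deterministic expression in~\eqref{eq:expected-reward}. The rest is bookkeeping: substitute back, sum over $t$, and identify the result with $\sum_{t=1}^T \mu_{\pi_t, t}(u_{\pi_t, 0:t-1})$.
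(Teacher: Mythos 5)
Your proof is correct, and it reaches the result by a somewhat different mechanism than the paper's. Both arguments start from the same decomposition $V_{1,\mathcal{M}}^{\pi}(x_\text{init}) = \sum_{t=1}^T \mathbb{E}[r(x_t,\pi_t)]$ and reduce to showing that each expected per-step reward equals ${\mu}_{\pi_t,t}(u_{\pi_t,0:t-1})$; the difference is in how that per-term identity is established. The paper stays inside the MDP formalism: it unrolls the iterated conditional expectations through the transition kernel $p_{\mathcal{M}}$, substituting the Gaussian transition mean at each of the previous pull times $t_1,\ldots,t_n$ of arm $k$ until the telescoping produces $b_k - \lambda_k(\gamma_k^{n_{k,t}} + \gamma_k^{n_{k,t}+n_{k,t_n}} + \cdots)$. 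You instead collapse $\mathbb{E}[r(x_t,\pi_t)] = \mathbb{E}[\mathbb{E}[\mu_{\pi_t,t}\mid x_t]] = \mathbb{E}[\mu_{\pi_t,t}]$ by the tower property (legitimate, since the paper's footnote defines the conditioning as conditioning on the $\sigma$-algebra of past actions and rewards) and then compute $\mathbb{E}[s_{k,t}]$ by a one-line induction on the raw satiation recurrence~\eqref{eq:satiation-dynamics}, using that $u_{k,t-1}$ is deterministic under a time-dependent policy and the noise is mean zero. Your route is shorter and makes the role of the time-dependence hypothesis more transparent --- it is exactly what lets the pull indicators be pulled out of the expectation, and you correctly flag that this is where a state-dependent policy would break the argument. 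What the paper's longer unrolling buys is that the same kernel-telescoping computation is reused essentially verbatim in Lemma~\ref{lemma:eep_obj_transformation} and in the regret analysis, so the authors get those for free. One cosmetic point: you should note (as you implicitly do via the base case of the induction) that for $t \le t_0^k$ both sides are $b_k$, matching the $n_{k,t}=0$ branch of the reward function.
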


\begin{remark}
We denote the policy 
obtained by solving~\eqref{eq:obj} 
using model parameters in $\mathcal{M}$
by $\pi^T_\mathcal{M}$. 
Because solving~\eqref{eq:obj} 
is equivalent to maximizing $\sum_{t=1}^T {\mu}_{\pi_t,t}(u_{\pi_t, 0:t-1})$, Lemma~\ref{lemma:deterministic-stochastic-transfer} 
suggests that, for MDP $\mathcal{M}$, the best policy $\pi$
that depends only on the time step $t$ but not the exact state $x_t$
(which we refer as time-dependent policies), 
is $\pi^T_\mathcal{M}$. 
\end{remark}

\begin{proposition} \label{proposition:mdp-invariant-policy-gaurantee}
Fix $\delta \in (0,1)$. 
Suppose that for all $k \in [K]$,  
we are given $\mathcal{D}_k^{n,m}$ such that 
$n \geq 64 \log (2/\delta)$ 
and $\widehat{a}_k \in (\underline{a},\overline{a})$
for some $0 < \underline{a} < \overline{a} < 1$ almost surely
where $\widehat{a}_k$ is defined in~\eqref{eq:estimator_multiple_traj_a}. 
Consider a policy $\pi$ that depends on only the time step $t$ but not the state $x_t$. %
Then, with probability $1-\delta$, we have that 
$$
    |V_{1,\mathcal{M}}^{\pi}({x}_\text{init}) - V_{1, \widehat{\mathcal{M}}}^{\pi} ({x}_\text{init}) | \leq 
    O\left(\frac{T}{\sqrt{n}} \right). 
$$
\end{proposition}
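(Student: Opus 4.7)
My plan is to reduce the value-function comparison to a deterministic per-step reward comparison via Lemma~\ref{lemma:deterministic-stochastic-transfer}, then propagate the parameter estimation error from Corollary~\ref{thm:estimation} through the geometric satiation expression~\eqref{eq:expected-reward}.

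\textbf{Step 1 (reduction to deterministic rewards).} Because $\pi$ is time-dependent, the same pull sequence $u_{k,0:T}$ is realized under any noise trajectory, so Lemma~\ref{lemma:deterministic-stochastic-transfer} applies to both MDPs:
\begin{align*}
V_{1,\mathcal{M}}^{\pi}(x_\text{init}) - V_{1,\widehat{\mathcal{M}}}^{\pi}(x_\text{init})
=\sum_{t=1}^T\bigl(\mu_{\pi_t,t}(u_{\pi_t,0:t-1})-\widehat{\mu}_{\pi_t,t}(u_{\pi_t,0:t-1})\bigr),
\end{align*}
where the hatted quantity uses $(\widehat{\lambda}_k,\widehat{\gamma}_k,b_k)$. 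Since $b_k$ is observed exactly on the first pull of arm $k$, the base-reward terms cancel, and the residual is driven entirely by $|\lambda_k\gamma_k^{t-i}-\widehat{\lambda}_k\widehat{\gamma}_k^{t-i}|$ summed against the binary pulls.

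\textbf{Step 2 (per-step error bound).} On the high-probability event from Corollary~\ref{thm:estimation}, I have $|\widehat{\gamma}_k-\gamma_k|\le C_1/\sqrt{n}$ and $|\widehat{\lambda}_k-\lambda_k|\le C_1/\sqrt{n}$ simultaneously for all $k$. A triangle-inequality split gives
\begin{align*}
|\lambda_k\gamma_k^{j}-\widehat{\lambda}_k\widehat{\gamma}_k^{j}|
\le |\lambda_k-\widehat{\lambda}_k|\,\gamma_k^{j}+\widehat{\lambda}_k\,|\gamma_k^{j}-\widehat{\gamma}_k^{j}|,
\end{align*}
and the elementary inequality $|a^j-b^j|\le j\,\rho^{j-1}|a-b|$ with $\rho:=\max(\gamma_k,\widehat{\gamma}_k)$ turns the second factor into $j\rho^{j-1}\cdot C_1/\sqrt{n}$. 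The assumption $\widehat{a}_k\le\overline{a}<1$ guarantees $\widehat{\gamma}_k=\widehat{a}_k^{1/m}\le\overline{a}^{1/m}<1$, so setting $\overline{\rho}:=\max\{\overline{\gamma},\overline{a}^{1/m}\}<1$ bounds $\rho\le\overline{\rho}$ uniformly.

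\textbf{Step 3 (summing the geometric tails).} Combining Steps~1--2 and using $\sum_{j\ge 1}\overline{\rho}^{\,j}\le\overline{\rho}/(1-\overline{\rho})$ and $\sum_{j\ge 1}j\,\overline{\rho}^{\,j-1}\le 1/(1-\overline{\rho})^2$ to absorb the inner $i$-sum into a constant $C_2=C_2(\overline{\rho},\overline{\lambda})$, each time step contributes at most $C_2/\sqrt{n}$. Summing over $t\in[T]$ yields $|V_{1,\mathcal{M}}^{\pi}-V_{1,\widehat{\mathcal{M}}}^{\pi}|\le C_2 T/\sqrt{n}=O(T/\sqrt{n})$ on the $1-\delta$ event.

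\textbf{Anticipated obstacle.} The only nontrivial point is controlling $|\gamma_k^{j}-\widehat{\gamma}_k^{j}|$ uniformly in $j$ up to $T$: a naive factor of $j$ would blow up, so the argument really relies on $\overline{\rho}<1$ (provided by the assumed bound $\overline{a}<1$ together with $\gamma_k<1$) to make $\sum_j j\overline{\rho}^{\,j-1}$ summable. Everything else is a straightforward application of Corollary~\ref{thm:estimation} and the triangle inequality.
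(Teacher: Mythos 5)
Your proposal is correct and follows essentially the same route as the paper: reduce to a sum of deterministic per-step reward differences via Lemma~\ref{lemma:deterministic-stochastic-transfer}, invoke the high-probability estimation bounds from Corollary~\ref{thm:estimation}, and control the geometric satiation terms using the assumption $\widehat{a}_k \leq \overline{a} < 1$. The only cosmetic difference is that you bound $|\gamma_k^j - \widehat{\gamma}_k^j| \leq j\overline{\rho}^{\,j-1}|\gamma_k - \widehat{\gamma}_k|$ and sum $\sum_j j\overline{\rho}^{\,j-1} = (1-\overline{\rho})^{-2}$, whereas the paper bounds the difference of the two full geometric series directly as $|\widehat{\gamma}_k - \gamma_k|/\big((1-\widehat{\gamma}_k)(1-\gamma_k)\big)$; these yield the same $O(1/\sqrt{n})$ per-step contribution.
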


\begin{remark}\label{remark:sqrt_T_opt_offline}
Proposition~\ref{proposition:mdp-invariant-policy-gaurantee}  
applies to time-dependent policies. %
Such policies can be constructed from an optimal solution to~\eqref{eq:obj} 
or the $w$-lookahead policy~\eqref{eq:lookahead}. 
From these results, we deduce that when the historical trajectory is of size $n=O(T)$, 
the $\sqrt{T}$-lookahead policy $\pi^w_{\widehat{\mathcal{M}}}$
obtained from solving~\eqref{eq:lookahead} with the parameters from the estimated MDP $\widehat{\mathcal{M}}$ 
will be $O(\sqrt{T})$-separated from 
the optimal time-dependent policy $\pi_\mathcal{M}^T$ 
obtained by solving~\eqref{eq:obj} with the true 
parameters of $\mathcal{M}$. 
That is, 
\begin{align*}
0 &\leq  V^{\pi_\mathcal{M}^T}_{1, \mathcal{M}}({x}_\text{init})
- V^{\pi^w_{\widehat{\mathcal{M}}}}_{1,\mathcal{M}}({x}_\text{init})
= 
V^{\pi_\mathcal{M}^T}_{1, \mathcal{M}}({x}_\text{init})
- V^{\pi_\mathcal{M}^T}_{1, \widehat{\mathcal{M}}}({x}_\text{init})
+
V^{\pi_\mathcal{M}^T}_{1, \widehat{\mathcal{M}}}({x}_\text{init})
- V^{\pi^T_{\widehat{\mathcal{M}}}}_{1,\widehat{\mathcal{M}}}({x}_\text{init}) \\
&\qquad\qquad\qquad +  V^{\pi_{\widehat{\mathcal{M}}}^T}_{1, \widehat{\mathcal{M}}}({x}_\text{init})
- V^{\pi^w_{\widehat{\mathcal{M}}}}_{1,\widehat{\mathcal{M}}}({x}_\text{init})
+
V^{\pi^w_{\widehat{\mathcal{M}}}}_{1,\widehat{\mathcal{M}}}({x}_\text{init})
- V^{\pi^w_{\widehat{\mathcal{M}}}}_{1,\mathcal{M}}({x}_\text{init})\\
&\leq 
|V^{\pi_\mathcal{M}^T}_{1, \mathcal{M}}({x}_\text{init})
- V^{\pi_\mathcal{M}^T}_{1, \widehat{\mathcal{M}}}({x}_\text{init})|
+
|V^{\pi_{\widehat{\mathcal{M}}}^T}_{1, \widehat{\mathcal{M}}}({x}_\text{init})
- V^{\pi^w_{\widehat{\mathcal{M}}}}_{1,\widehat{\mathcal{M}}}({x}_\text{init})|
+
|V^{\pi^w_{\widehat{\mathcal{M}}}}_{1,\widehat{\mathcal{M}}}({x}_\text{init})
- V^{\pi^w_{\widehat{\mathcal{M}}}}_{1,\mathcal{M}}({x}_\text{init})|\\
&\leq O(\sqrt{T}), 
\end{align*}
where %
the second inequality follows from the fact that $V^{\pi_\mathcal{M}^T}_{1, \widehat{\mathcal{M}}}({x}_\text{init})
- V^{\pi^T_{\widehat{\mathcal{M}}}}_{1,\widehat{\mathcal{M}}}({x}_\text{init}) \leq0$ (since 
for the MDP $\widehat{\mathcal{M}}$, $\pi^T_{\widehat{\mathcal{M}}}$
is the optimal time-dependent policy), 
and the third (last) inequality is derived by applying
Proposition~\ref{proposition:mdp-invariant-policy-gaurantee} twice and using Remark~\ref{rem:w-lookahead}. 
\end{remark}

\subsubsection{State-dependent Policy}
\label{sec:state-dependent}

In Proposition~\ref{proposition:mdp-gaurantee},
we show that the difference between 
the value of the optimal state-dependent policy $\pi_{{\mathcal{M}}}^*$,
and the value 
of the optimal state-dependent policy $\pi_{\widehat{\mathcal{M}}}^*$
planned under the estimated $\widehat{\mathcal{M}}$ 
is of order $O(T^2/\sqrt{n})$ where $n$ 
is the number of historical trajectories 
if we use multiple trajectories to estimate $\gamma_k$ and $\lambda_k$.

\begin{proposition}\label{proposition:mdp-gaurantee}
Fix $\delta \in (0,1)$. 
Suppose that for all $k \in [K]$,  
we are given $\mathcal{D}_k^{n,m}$ such that 
$n \geq 64 \log (2/\delta)$ 
and $\widehat{a}_k \in (\underline{a},\overline{a})$ 
for some $0 < \underline{a} < \overline{a} < 1$ almost surely
where $\widehat{a}_k$ is defined in~\eqref{eq:estimator_multiple_traj_a}. 
Then, with probability $1-\delta$, 
$$
    |V_{1,\mathcal{M}}^{*}({x}_\text{init}) - V_{1, \mathcal{M}}^{\pi_{\widehat{\mathcal{M}}}^*} ({x}_\text{init}) | \leq 
    O\left(\frac{T^2}{\sqrt{n}} \right).  
$$
\end{proposition}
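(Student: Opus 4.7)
The plan is to apply the standard policy-mismatch decomposition together with a simulation-lemma style value-difference bound, then plug in the parameter estimation rates from Corollary~\ref{thm:estimation}. Write $\pi^* = \pi^*_{\mathcal{M}}$ and $\widehat{\pi}^* = \pi^*_{\widehat{\mathcal{M}}}$. Since $\widehat{\pi}^*$ is optimal in $\widehat{\mathcal{M}}$, we have $V^{\pi^*}_{1,\widehat{\mathcal{M}}}(x_{\text{init}}) - V^{\widehat{\pi}^*}_{1,\widehat{\mathcal{M}}}(x_{\text{init}}) \leq 0$, and therefore
\[
V^{*}_{1,\mathcal{M}}(x_{\text{init}}) - V^{\widehat{\pi}^*}_{1,\mathcal{M}}(x_{\text{init}})
\;\leq\;
\bigl|V^{\pi^*}_{1,\mathcal{M}} - V^{\pi^*}_{1,\widehat{\mathcal{M}}}\bigr|(x_{\text{init}})
\;+\;
\bigl|V^{\widehat{\pi}^*}_{1,\widehat{\mathcal{M}}} - V^{\widehat{\pi}^*}_{1,\mathcal{M}}\bigr|(x_{\text{init}}).
\]
So it suffices to bound $\bigl|V^{\pi}_{1,\mathcal{M}}(x) - V^{\pi}_{1,\widehat{\mathcal{M}}}(x)\bigr|$ uniformly in $\pi$ and $x$.

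For this I would iterate the Bellman equation. Writing $\Delta_t(x) \coloneqq V^{\pi}_{t,\mathcal{M}}(x) - V^{\pi}_{t,\widehat{\mathcal{M}}}(x)$ and $a_t = \pi_t(x)$, a standard one-step expansion gives
\[
\Delta_t(x) = \bigl(r_{\mathcal{M}} - r_{\widehat{\mathcal{M}}}\bigr)(x,a_t)
+ \mathbb{E}_{p_{\mathcal{M}}}\!\bigl[\Delta_{t+1}(x')\bigr]
+ \bigl(\mathbb{E}_{p_{\mathcal{M}}} - \mathbb{E}_{p_{\widehat{\mathcal{M}}}}\bigr)\!\bigl[V^{\pi}_{t+1,\widehat{\mathcal{M}}}(x')\bigr].
\]
Unrolling from $t=1$ to $T$ yields the simulation-lemma bound
\[
\bigl|V^{\pi}_{1,\mathcal{M}} - V^{\pi}_{1,\widehat{\mathcal{M}}}\bigr|
\;\lesssim\; T\,\varepsilon_r + T\,V_{\max}\,\varepsilon_p,
\]
where $\varepsilon_r \coloneqq \sup_{x,a}|r_{\mathcal{M}}(x,a) - r_{\widehat{\mathcal{M}}}(x,a)|$, $\varepsilon_p \coloneqq \sup_{x,a}\mathrm{TV}\bigl(p_{\mathcal{M}}(\cdot|x,a,t),\, p_{\widehat{\mathcal{M}}}(\cdot|x,a,t)\bigr)$, and $V_{\max} = O(T)$ (using $|r(x,k)| \leq b_k + \lambda_k\bigl(B(\delta) + \lambda_k\bigr)$ together with the high-probability state bound $B(\delta)$ from~\eqref{eq:x_bound}).

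Now I would plug in Corollary~\ref{thm:estimation}, which on an event of probability $1-\delta$ gives $|\widehat{\gamma}_k - \gamma_k|, |\widehat{\lambda}_k - \lambda_k| = O(1/\sqrt{n})$ for every arm. For the reward error, since $r(x,k) - \widehat{r}(x,k) = (\gamma_k^{n_{k,t}} - \widehat{\gamma}_k^{n_{k,t}})x_{k,t} + \lambda_k\gamma_k^{n_{k,t}} - \widehat{\lambda}_k\widehat{\gamma}_k^{n_{k,t}}$ and since $\gamma_k, \widehat{\gamma}_k$ are uniformly bounded away from $1$ (using the assumption $\widehat{a}_k \in (\underline a,\overline a)$ with $\overline a < 1$), the inequality $|\gamma^n - \widehat{\gamma}^n| \lesssim |\gamma - \widehat{\gamma}|$ yields $\varepsilon_r = O(1/\sqrt{n})$. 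For the transition error, the kernels are Gaussian with means and variances that differ by $O(1/\sqrt n)$ and variances bounded below by $\lambda_k^2\sigma_z^2$; standard Gaussian TV inequalities then give $\varepsilon_p = O(1/\sqrt n)$. Substituting produces $T\cdot O(1/\sqrt n) + T\cdot T\cdot O(1/\sqrt n) = O(T^2/\sqrt n)$, which matches the claim.

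The main obstacle will be handling the non-compactness of the state space $\mathcal{X}$: the Gaussian transitions have unbounded support, so the simulation-lemma inequality does not literally hold in terms of supremum norms. I would address this by conditioning on the high-probability event $\{\max_{k,t}|x_{k,t}|\leq B(\delta)\}$ from~\eqref{eq:x_bound}, and separately controlling the contribution of the complementary event by Gaussian tails. A secondary technical point is ensuring $\widehat{\gamma}_k < 1$ holds uniformly so that $V^{\pi}_{t,\widehat{\mathcal{M}}}$ stays $O(T)$; this is again guaranteed by the $\widehat{a}_k \in (\underline a, \overline a)$ hypothesis. Everything else — the Bellman unrolling, the Gaussian TV bound, and the derivative bound on $\gamma \mapsto \gamma^n$ — is routine.
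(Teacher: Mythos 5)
Your proposal is correct and follows essentially the same route as the paper's proof: the same three-term decomposition exploiting optimality of $\pi^*_{\widehat{\mathcal{M}}}$ in $\widehat{\mathcal{M}}$, a simulation-lemma unrolling giving $T\epsilon_R + T^2\epsilon_P\max_k b_k$, Gaussian total-variation bounds for the transition kernels, the bound $|\gamma_k^{n} - \widehat{\gamma}_k^{n}| \lesssim |\gamma_k - \widehat{\gamma}_k|$, and conditioning on the high-probability state bound $B(\delta/2)$ together with the $O(1/\sqrt{n})$ estimation rates. The technical caveats you flag (unbounded state space handled via the event from~\eqref{eq:x_bound}, and $\widehat{\gamma}_k$ bounded away from $1$ via $\widehat{a}_k \in (\underline{a},\overline{a})$) are exactly the devices the paper uses.
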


\begin{remark}
The assumptions in Proposition~\ref{proposition:mdp-invariant-policy-gaurantee}
and~\ref{proposition:mdp-gaurantee}
correspond to the case where we use multiple trajectories
to estimate the satiation dynamics and reward model.
They can be replaced by conditions on 
single trajectory datasets %
when one uses a single trajectory to estimate the parameters.
\end{remark}

In summary, as Proposition~\ref{proposition:mdp-gaurantee} suggests,
when given a continuous-state MDP planning oracle,
our algorithm obtain a policy $\pi^*_{\widehat{\mathcal{M}}}$ that 
is $O(T^2/\sqrt{n})$ away from the optimal policy $\pi^*_{\mathcal{M}}$ under the true MDP $\mathcal{M}$
where the size of the exploration stage for our algorithm (EEP) is $O(Kn)$
and the horizon of the exploitation/planning stage is $T$.
We also note that the optimal state-dependent policy $\pi^*_{\mathcal{M}}$ 
is the optimal competitor policy 
when the competitor class (\S~\ref{sec:regrets}) contains 
all measurable functions from $\mathcal{X}$ to $[K]$.

\newpage
\section{Proofs of Section~\ref{sec:system-identification-main-text}
and Appendix~\ref{sec:single-trajectory}}
\label{appendix:etc_related_proofs}

\subsection{Proof of Theorem~\ref{thm:a_b_estimation_m=1} and Theorem~\ref{lemma:a_b_estimation}}
\label{appendix:estimation}

We notice that Theorem~\ref{thm:a_b_estimation_m=1}
is a consequence of Theorem~\ref{lemma:a_b_estimation}
when $m=1$.
More specifically, 
the dataset $\mathcal{P}_k^n$ and the parameter $A_k = (\gamma_k, \lambda_k \gamma_k)^\top$ in Theorem~\ref{thm:a_b_estimation_m=1}
is a special case of the 
dataset  $\mathcal{P}_k^{n,m}$ 
and parameter $A_k = (\gamma_k^m, \lambda_k \gamma^m_k)^\top$
considered in Theorem~\ref{lemma:a_b_estimation} by taking $m=1$. 
Thus, below we directly present the proof of Theorem~\ref{lemma:a_b_estimation}
where we use the notation from Theorem~\ref{lemma:a_b_estimation}
(and Appendix~\ref{sec:single-trajectory}), i.e., 
$a_k = \gamma_k^m$
and $d_k = \lambda_k\gamma_k^m$.

We begin with presenting some key results  from~\cite{simchowitz2018learning}; 
we utilize these results in establishing the 
sample complexity of our estimator 
for identifying an affine dynamical system in Appendix~\ref{sec:single-trajectory}.

\begin{definition}
\citep[Definition 2.1]{simchowitz2018learning}
Let $\{\phi_t\}_{t \geq 1}$ be an $\{\mathcal{F}_t\}_{t \geq 1}$-adapted random process taking values in $\mathbb{R}$. 
We say $(\phi_t)_{t \geq 1}$ satisfies the $(k, \nu, p)$-block martingale small-ball (BMSB) condition if, for any $j \geq 0$, one has $\frac{1}{k} \sum_{i=1}^k \mathbb{P}(|\phi_{j+i}| \geq \nu | \mathcal{F}_j) \geq p$ almost surely. 
Given a process $(X_t)_{t \geq 1}$ taking values in $\mathbb{R}^d$, we say that it satisfies the $(k, \Gamma_{\text{sb}}, p)$-BMSB condition for $\Gamma_{\text{sb}} \succ 0$ if for any fixed $w$ in the unit sphere of $\mathbb{R}^d$, the process $\phi_t := \langle w, X_t \rangle$ satisfies $(k, \sqrt{w^\top \Gamma_{\text{sb}} w}, p)$-BMSB.
\label{defn:bmsb}
\end{definition}

\begin{proposition}
\citep[Proposition 2.5]{simchowitz2018learning}
Fix a unit vector $w \in \mathbb{R}^d$,
define $\phi_t = w^\top X_t $. 
If the scalar process $\{\phi_t\}_{t \geq 1}$ 
satisfies the $(l, \sqrt{w^\top \Gamma_\text{sb}w}, p)$-BMSB condition for some $\Gamma_\text{sb} \in \mathbb{R}^{d \times d}$,
then 
\begin{align*}
    \mathbb{P}\left(\sum_{t=1}^n \phi_t^2 \leq \frac{{w^\top \Gamma_\text{sb}w} p^2}{8} l \lfloor T/l \rfloor \right)
    \leq \exp{\left(-\frac{\lfloor T/l \rfloor p^2}{8}\right)}.
\end{align*}
\label{prop:lwm-key-prop}
\end{proposition}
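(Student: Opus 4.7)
The plan is to prove this small-ball lower bound via the standard blocking-and-concentration argument pioneered by Mendelson and adapted here to martingale-dependent data. First I would partition $[T]$ into $N := \lfloor T/l \rfloor$ disjoint blocks of length $l$, discarding any leftover tail, where block $j \in \{0, \ldots, N-1\}$ corresponds to the time indices $\{jl+1, \ldots, (j+1)l\}$. Set $\nu := \sqrt{w^\top \Gamma_{\text{sb}} w}$. The BMSB hypothesis, applied with reference filtration $\mathcal{F}_{jl}$ at the start of each block, guarantees that the conditional expected fraction of indices $i \in [l]$ for which $|\phi_{jl+i}| \geq \nu$ is at least $p$ almost surely.

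The key step is to convert this lower bound on a conditional expectation into a lower bound on a conditional probability, so that each block contributes a controllable amount to $\sum_t \phi_t^2$ with positive probability. I would define the block count $Y_j := \sum_{i=1}^l \mathbf{1}\{|\phi_{jl+i}| \geq \nu\}$ and the thresholded indicator $Z_j := \mathbf{1}\{Y_j \geq lp/2\}$, which is $\mathcal{F}_{(j+1)l}$-measurable. A short Markov-type decomposition,
\[
lp \;\leq\; \mathbb{E}[Y_j \mid \mathcal{F}_{jl}] \;\leq\; l\cdot \mathbb{P}(Y_j \geq lp/2 \mid \mathcal{F}_{jl}) + \tfrac{lp}{2},
\]
yields $\mathbb{E}[Z_j \mid \mathcal{F}_{jl}] \geq p/2$. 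Moreover, whenever $Z_j = 1$ at least $lp/2$ of the squared summands in the block are $\geq \nu^2$, so that block contributes at least $\tfrac{lp\nu^2}{2}$ to the target sum. Chaining over blocks gives the deterministic inequality
\[
\sum_{t=1}^T \phi_t^2 \;\geq\; \frac{lp\nu^2}{2}\sum_{j=0}^{N-1} Z_j.
\]

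It then remains to concentrate $\sum_j Z_j$ below its conditional mean, which is at least $Np/2$. The martingale differences $M_j := \mathbb{E}[Z_j \mid \mathcal{F}_{jl}] - Z_j$ are bounded by $1$ in absolute value and adapted to $\{\mathcal{F}_{(j+1)l}\}$, so a one-sided Chernoff-style argument applies. Concretely, for $\lambda > 0$ the conditional MGF satisfies
\[
\mathbb{E}\bigl[e^{-\lambda Z_j}\,\big|\,\mathcal{F}_{jl}\bigr] \;\leq\; 1 - \tfrac{p}{2}(1 - e^{-\lambda}),
\]
which chains across blocks by the tower property to give $\mathbb{E}\!\left[e^{-\lambda \sum_j Z_j}\right] \leq \exp\!\bigl(-N\tfrac{p}{2}(1-e^{-\lambda})\bigr)$. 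Combined with Markov at the threshold $Np/4$ and optimized in $\lambda$, this controls $\mathbb{P}(\sum_j Z_j \leq Np/4)$ by the advertised $\exp(-Np^2/8)$; plugging back into the previous display gives $\sum_t \phi_t^2 \geq \tfrac{lp\nu^2}{2}\cdot\tfrac{Np}{4} = \tfrac{lNp^2\nu^2}{8}$, as required.

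The main obstacles are two. First, the measurability/filtration bookkeeping: $\phi_t$ depends on the full past, but the BMSB condition is only a statement about the per-block conditional distribution given $\mathcal{F}_{jl}$, so the Bernoulli-like variables $Z_j$ are only conditionally tractable, which is why the Chernoff chaining via the tower property (rather than independence) is essential. Second, extracting the exact constant $1/8$ in the exponent: a generic Azuma--Hoeffding on the $M_j$ loses a factor and yields only $\exp(-Np^2/32)$, so the sharper MGF chaining with an optimal choice of $\lambda$ (of order $p$) is what tightens the bound to match the advertised constant.
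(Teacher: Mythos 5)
Your proposal is essentially the argument behind this result. Note first that the paper does not prove Proposition~\ref{prop:lwm-key-prop} at all --- it is imported verbatim from Simchowitz et al.\ --- but the paper's proof of the closely related Lemma~\ref{lemma:varying-nu} uses exactly your architecture: partition $[T]$ into $\lfloor T/l\rfloor$ blocks, use the Markov-type step to turn the BMSB bound on a conditional expectation into a conditional Bernoulli variable with success probability at least $p/2$, lower-bound $\sum_t\phi_t^2$ by a multiple of the number of successful blocks, and chain conditional MGFs through the tower property. The only structural difference is cosmetic: you threshold the count $Y_j$ of large increments per block, while the paper's variant thresholds the block sum of squares $\sum_{i=1}^l\phi_{jl+i}^2$ directly (and settles for the cruder exponent $\exp(-3\lfloor T/l\rfloor p/4)$ rather than chasing the constant $p^2/8$).

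One caveat on your final step. The chain $\mathbb{E}\bigl[e^{-\lambda\sum_j Z_j}\bigr]\leq\exp\bigl(-N\tfrac{p}{2}(1-e^{-\lambda})\bigr)$ followed by optimization in $\lambda$ does \emph{not} reach $\exp(-Np^2/8)$ for all $p$: the infimum of $\lambda Np/4-N\tfrac{p}{2}(1-e^{-\lambda})$ is $\tfrac{Np}{4}(\ln 2-1)\approx-0.077Np$, which is weaker than $-Np^2/8$ once $p\gtrsim 0.61$. To land on the advertised constant you must either (i) keep the product form $\bigl(1-\tfrac{p}{2}(1-e^{-\lambda})\bigr)^N$, optimize exactly to get $\exp\bigl(-N\,\mathrm{KL}(p/4\,\|\,p/2)\bigr)$, and invoke Pinsker's inequality $\mathrm{KL}(p/4\,\|\,p/2)\geq 2(p/4)^2=p^2/8$; or (ii) apply Hoeffding's lemma to the centered increments $Z_j-\mathbb{E}[Z_j\mid\mathcal{F}_{jl}]$ (which have range $1$), giving the conditional MGF bound $e^{\lambda^2/8}$ and the exponent $-\lambda Np/4+N\lambda^2/8$, minimized at $\lambda=p$ to give exactly $-Np^2/8$. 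Route (ii) is the sharp Azuma--Hoeffding bound, not the lossy $\exp(-s^2/(2N))$ version you dismiss; your parenthetical ``$\lambda$ of order $p$'' matches route (ii), whereas your displayed MGF bound sets up route (i), so the two halves of your closing argument need to be reconciled. With either fix the proof is complete.
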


\begin{theorem}%
\citep[Theorem 2.4]{simchowitz2018learning}
Fix $\delta \in (0,1)$, $T \in \mathbb{N}$ and $0 \prec \Gamma_\text{sb} \preceq \overline{\Gamma}$. 
Then if $(X_t, Y_t)_{t \geq 1} \in (\mathbb{R}^d \times \mathbb{R}^n)^n$ is a random sequence such that (a) $Y_t= A X_t + \eta_t$, where 
$\mathcal{F}_t = \sigma(\eta_1, \ldots, \eta_t)$ and 
$\eta_t|\mathcal{F}_{t-1}$ is $\sigma^2$-sub-Gaussian and mean zero, (b) $X_1, \ldots, X_T$ satisfies the $(l, \Gamma_\text{sb}, p)$-BMSB condition, and (c) $\mathbb{P}(\sum_{t=1}^n X_t X_t^\top \npreceq T \overline{\Gamma}) \geq \delta$. 
Then if 
\begin{align*}
    T \geq \frac{10 l}{p^2} \left( \log\left(1/\delta\right)
    + 2d \log(10/p) + \log \det(\overline{\Gamma} \Gamma_\text{sb}^{-1})\right),
\end{align*}
we have that for $\widehat{A} = \argmin_{A \in \mathbb{R}^{n \times d}} \sum_{t=1}^T \|Y_t -  A X_t\|_2^2$, 
\begin{align*}
    \mathbb{P} \left( \|\widehat{A} - A\|_\text{op} > \frac{90 \sigma}{p} 
    \sqrt{\frac{n + d \log (10/p) + \log \det \left( \overline{\Gamma} \Gamma_\text{sb}^{-1} \right) + \log(1/\delta)}{T \lambda_{\min}(\Gamma_{\text{sb}})}}\right) \leq 3 \delta.
\end{align*}
\label{thm:max}
\end{theorem}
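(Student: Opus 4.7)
The plan is to follow the self-normalized plus block-martingale-small-ball recipe. Since $\widehat{A}$ is an ordinary-least-squares estimator, the normal equations give
\begin{align*}
\widehat{A} - A = \left(\sum_{t=1}^{T} \eta_t X_t^{\top}\right)\left(\sum_{t=1}^{T} X_t X_t^{\top}\right)^{-1}
\end{align*}
whenever the empirical Gram $G := \sum_{t=1}^T X_t X_t^\top$ is invertible. Writing $S := \sum_{t=1}^T \eta_t X_t^\top \in \mathbb{R}^{n \times d}$ and factoring through $G^{1/2}$, I would use the submultiplicativity estimate
\begin{align*}
\|\widehat{A} - A\|_\text{op} = \|S G^{-1}\|_\text{op} \leq \|S G^{-1/2}\|_\text{op} \cdot \|G^{-1/2}\|_\text{op} = \frac{\|S G^{-1/2}\|_\text{op}}{\sqrt{\lambda_{\min}(G)}}.
\end{align*}
The proof then splits cleanly into two pieces: a high-probability lower bound on $\lambda_{\min}(G)$ via BMSB and a high-probability upper bound on the self-normalized quantity $\|S G^{-1/2}\|_\text{op}$ via a matrix self-normalized martingale inequality.

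For the Gram lower bound, the starting point is Proposition~\ref{prop:lwm-key-prop}: for each fixed unit $w \in \mathbb{R}^d$, hypothesis (b) yields $\sum_t (w^\top X_t)^2 \gtrsim T p^2 (w^\top \Gamma_\text{sb} w)$ except with probability $\exp(-T p^2 /(8l))$. To upgrade this pointwise statement into the PSD lower bound $G \succeq cT p^2 \Gamma_\text{sb}$, I would combine it with assumption (c): off an event of probability $\delta$, the envelope $G \preceq T\overline{\Gamma}$ confines $G$ to a bounded PSD set whose effective volume is governed by $\log\det(\overline{\Gamma}\Gamma_\text{sb}^{-1})$. An anisotropic $(10/p)$-net of the unit sphere in the $\Gamma_\text{sb}$-geometry, restricted to this envelope set, has cardinality at most roughly $(10/p)^d \det(\overline{\Gamma}\Gamma_\text{sb}^{-1})^{1/2}$; the union bound over this net together with the sample-size hypothesis $T \gtrsim (l/p^2)\bigl(d\log(10/p) + \log\det(\overline{\Gamma}\Gamma_\text{sb}^{-1}) + \log(1/\delta)\bigr)$ gives $\lambda_{\min}(G) \gtrsim T p^2 \lambda_{\min}(\Gamma_\text{sb})$ with probability at least $1 - 2\delta$.

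For the noise piece, $M_t := \sum_{s=1}^{t} \eta_s X_s^\top$ is a matrix-valued martingale with respect to $\{\mathcal{F}_t\}$ because $X_t$ is $\mathcal{F}_{t-1}$-measurable and $\eta_t \mid \mathcal{F}_{t-1}$ is mean-zero and $\sigma^2$-sub-Gaussian. I would apply a matrix self-normalized Laplace inequality in the spirit of Abbasi-Yadkori et al.\ (cited in the excerpt): for any deterministic $V_0 \succ 0$,
\begin{align*}
\mathbb{P}\!\left( \|M_T\|_{(V_0 + G)^{-1}}^2 \gtrsim \sigma^2\bigl(n + \log\det(V_0 + G) - \log\det V_0 + \log(1/\delta)\bigr) \right) \leq \delta,
\end{align*}
where the additive $n$ arises from covering the unit ball in the $n$-dimensional response space. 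Picking $V_0$ proportional to $T\overline{\Gamma}$ and invoking (c) to replace $V_0 + G$ by something of size $O(T\overline{\Gamma})$ turns the log-determinant into $\log\det(\overline{\Gamma}\Gamma_\text{sb}^{-1})$ once we divide by the Gram lower bound from the previous step.

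Combining the two high-probability events by a union bound and substituting into the submultiplicativity estimate yields
\begin{align*}
\|\widehat{A} - A\|_\text{op} \lesssim \frac{\sigma}{p}\sqrt{\frac{n + d\log(10/p) + \log\det(\overline{\Gamma}\Gamma_\text{sb}^{-1}) + \log(1/\delta)}{T \lambda_{\min}(\Gamma_\text{sb})}},
\end{align*}
which matches the theorem up to the explicit constant. The hardest step is the anisotropic discretization behind the Gram lower bound: a naive isotropic $\epsilon$-net of $\mathbb{S}^{d-1}$ would charge a full $d\log(1/p)$ to the tail and would not exploit the envelope $\overline{\Gamma}$, whereas the advertised bound only pays $\log\det(\overline{\Gamma}\Gamma_\text{sb}^{-1})$ for anisotropy. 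Constructing the net so that the union-bound exponent matches $\log\det(\overline{\Gamma}\Gamma_\text{sb}^{-1})$ exactly --- and coupling it cleanly with the small-ball lower bound of Proposition~\ref{prop:lwm-key-prop} --- is the delicate technical ingredient; by contrast, once the Gram lower bound is in hand, the self-normalized deviation for $M_T$ is a fairly routine application of the cited matrix martingale machinery.
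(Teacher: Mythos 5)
This statement is quoted verbatim from \cite[Theorem~2.4]{simchowitz2018learning}; the paper gives no proof of it and only applies it, so there is no internal argument to compare against. Your outline is a faithful high-level reconstruction of the source's actual proof: the OLS error decomposition, the split into a BMSB-driven lower bound on $\lambda_{\min}(G)$ via Proposition~\ref{prop:lwm-key-prop} plus an anisotropic net, and a self-normalized matrix-martingale bound for $\sum_t \eta_t X_t^\top$ is exactly the architecture used there, and you correctly read condition (c) as the high-probability envelope $\mathbb{P}(\sum_t X_t X_t^\top \npreceq T\overline{\Gamma}) \leq \delta$ (the ``$\geq \delta$'' in the statement as printed is a typo carried over from transcription). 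The one caveat is that what you flag as the delicate step --- constructing the net so that the union-bound exponent pays only $d\log(10/p)+\log\det(\overline{\Gamma}\Gamma_\text{sb}^{-1})$, and coupling it with the small-ball lower bound --- is precisely the content of the cited proof and is left unexecuted here, so the proposal is a correct plan rather than a complete proof.
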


We note that in the proof of Theorem~\ref{thm:max} in~\cite{simchowitz2018learning}, 
condition (b) is used through applying Proposition~\ref{prop:lwm-key-prop} 
to ensure that for any unit vector $w \in \mathbb{R}^d$, 
\begin{align}
    \mathbb{P}\left(\sum_{t=1}^T \langle w, X_t \rangle^2 \leq \frac{(w^\top \Gamma_\text{sb} w) p^2}{8} l \lfloor T/l \rfloor \right)
    \leq \exp{\left(-\frac{\lfloor T/l \rfloor p^2}{8}\right)}.
\label{eq:condition-b-surrogate}
\end{align}
To apply Theorem~\ref{thm:max} in our setting 
to obtain Theorem~\ref{lemma:a_b_estimation}, 
we verify condition $(a)$ and $(c)$. 
For condition $(b)$, 
we show a result similar to~\eqref{eq:condition-b-surrogate}.
The below technical lemmas are used in our 
proof of Theorem~\ref{lemma:a_b_estimation}. 

\begin{lemma}
Let $a, b$ be scalars with $b > 0$.
Suppose that $X \sim N(a, b)$.
Then for any $\theta \in [0, 1]$,
\begin{align*}
    \mathbb{P}( \abs{X} \geq \sqrt{\theta (a^2 + b)} ) \geq \frac{(1-\theta)^2}{9}  .
\end{align*}
\label{prop:paley_zygmund}
\end{lemma}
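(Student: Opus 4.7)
The plan is to apply the classical Paley--Zygmund inequality to the nonnegative random variable $Z := X^2$, after computing its first two moments explicitly using the Gaussianity of $X$. Recall that Paley--Zygmund says that for any nonnegative $Z$ with finite second moment and any $\theta \in [0,1]$,
\begin{equation*}
\mathbb{P}\bigl(Z \geq \theta\, \mathbb{E}[Z]\bigr) \;\geq\; (1-\theta)^2 \frac{(\mathbb{E}[Z])^2}{\mathbb{E}[Z^2]}.
\end{equation*}
The event $\{Z \geq \theta\, \mathbb{E}[Z]\}$ is exactly $\{|X| \geq \sqrt{\theta(a^2+b)}\}$ once we plug in $\mathbb{E}[Z] = \mathrm{Var}(X) + (\mathbb{E} X)^2 = a^2 + b$, so it suffices to show that $(\mathbb{E}[Z])^2/\mathbb{E}[Z^2] \geq 1/9$ (in fact $1/3$ will hold, which is a stronger bound than what the lemma requires).

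The one nontrivial computation is the fourth moment. For $X \sim N(a,b)$, expanding $X^4 = ((X-a)+a)^4$, using that odd central moments vanish, and invoking the standard identity $\mathbb{E}[(X-a)^4] = 3b^2$, I obtain
\begin{equation*}
\mathbb{E}[X^4] \;=\; a^4 + 6 a^2 b + 3 b^2.
\end{equation*}
Since $3(a^2 + b)^2 = 3 a^4 + 6 a^2 b + 3 b^2 \geq a^4 + 6 a^2 b + 3 b^2 = \mathbb{E}[X^4]$, it follows that
\begin{equation*}
\frac{(\mathbb{E}[Z])^2}{\mathbb{E}[Z^2]} \;=\; \frac{(a^2+b)^2}{a^4 + 6 a^2 b + 3b^2} \;\geq\; \frac{1}{3} \;\geq\; \frac{1}{9}.
\end{equation*}

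Putting the pieces together, Paley--Zygmund yields
\begin{equation*}
\mathbb{P}\bigl(X^2 \geq \theta (a^2+b)\bigr) \;\geq\; \frac{(1-\theta)^2}{3} \;\geq\; \frac{(1-\theta)^2}{9},
\end{equation*}
and taking square roots inside the probability gives the claim. There is no real obstacle here: the only point requiring care is the identity $\mathbb{E}[X^4] = a^4 + 6 a^2 b + 3 b^2$ for a non-centered Gaussian, which is routine. The looser constant $1/9$ (rather than the $1/3$ the proof actually yields) presumably leaves slack for downstream applications where the bound is composed with other estimates.
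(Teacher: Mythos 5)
Your proof is correct and follows essentially the same route as the paper's: apply Paley--Zygmund to $X^2$ and use the moments $\mathbb{E}[X^2]=a^2+b$, $\mathbb{E}[X^4]=a^4+6a^2b+3b^2$ of a non-central Gaussian. The only difference is cosmetic: you bound $\mathbb{E}[X^4]\leq 3(a^2+b)^2$, yielding the sharper constant $(1-\theta)^2/3$, whereas the paper uses the looser bound $\mathbb{E}[X^4]\leq(a^2+3b)^2\leq 9(a^2+b)^2$ to state the result with $1/9$; both imply the lemma as stated.
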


\begin{proof}
By the Paley-Zygmund inequality,
\begin{align*}
   \mathbb{P}( \abs{X} \geq \sqrt{\theta \mathbb{E}[X^2]}) =  \Pr( X^2 \geq \theta \mathbb{E}[X^2] ) \geq (1-\theta)^2 \frac{\mathbb{E}[ X^2 ]^2}{\mathbb{E}[ X^4 ]}.
\end{align*}
Using the mean and variance of non-central chi-squared distributions, we obtain that 
\begin{align*}
    \mathbb{E}[X^2] &= a^2 + b, \\
    \mathbb{E}[X^4] &= a^4 + 6 a^2 b + 3 b^2 = (a^2 + 3b)^2 - 6b^2.
\end{align*}
Plugging them back to the Paley-Zygmund inequality, we have that 
\begin{align*}
   \mathbb{P}( \abs{X} \geq \sqrt{\theta (a^2 + b)}  \geq \frac{(1-\theta)^2}{9},
\end{align*}
where the last inequality uses 
the fact that $\mathbb{E}[X^4] \leq (a^2 + 3b)^2 \leq 9 (a^2 + b)^2 = 9 \mathbb{E}[X^2]^2$. 
\end{proof}

\allowdisplaybreaks
\begin{lemma}
Let $\{\phi_t\}_{t \geq 1}$ be a scalar process 
satisfying that 
\begin{align*}
    \frac{1}{l}
    \sum_{i=1}^l \mathbb{P}(|\phi_{t+i}| \geq \nu_t | \mathcal{F}_t) \geq p,
\end{align*}
for $\nu_t$ depending on $\mathcal{F}_t$.
If $\mathbb{P}(\min_{t} \nu_t \geq \nu) \geq 1 - \delta$
for $\nu > 0$ that depends on $\delta$, 
then 
\begin{align*}
    \mathbb{P}\left(\sum_{t=1}^T \phi_t^2 \leq \frac{\nu^2 p^2}{8} l \lfloor T/l \rfloor \right)
    \leq \exp{\left(-\frac{3\lfloor T/l \rfloor p}{4}\right)} + \delta.
\end{align*}
\label{lemma:varying-nu}
\end{lemma}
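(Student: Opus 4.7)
The plan is to reduce to a deterministic-threshold small-ball condition by conditioning on the high-probability event $E := \{\min_t \nu_t \geq \nu\}$, and then to run a block-martingale argument in the spirit of Proposition~\ref{prop:lwm-key-prop}. Conditioning on $E$ pays the additive $\delta$ in the statement via $\mathbb{P}(E^c) \leq \delta$, while the exponential term comes from a Chernoff bound at the block level.

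On $E$ the hypothesis specializes to
\[
\frac{1}{l}\sum_{i=1}^{l} \mathbb{P}\bigl(|\phi_{t+i}| \geq \nu \,\bigm|\, \mathcal{F}_t\bigr) \geq p
\]
for every $t$, now with a \emph{deterministic} threshold. I would then partition $\{1,\ldots,T\}$ into $M := \lfloor T/l \rfloor$ consecutive blocks of length $l$, and for each block $b$ let $N_b := \sum_{i=1}^{l} \mathbbm{1}\{|\phi_{bl+i}| \geq \nu\}$ count the hits. The small-ball bound gives $\mathbb{E}[N_b \mid \mathcal{F}_{bl}] \geq lp$, and since $N_b \leq l$ deterministically, a reverse-Markov inequality applied to $l - N_b$ yields $\mathbb{P}(N_b \geq lp/2 \mid \mathcal{F}_{bl}) \geq c\,p$ for an explicit constant $c$. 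Setting $Z_b := \mathbbm{1}\{N_b \geq lp/2\}$ makes $\{Z_b\}$ stochastically dominate an i.i.d.\ Bernoulli$(cp)$ sequence, by a standard coupling adapted to the filtration $\{\mathcal{F}_{bl}\}$.

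Applying a multiplicative Chernoff bound to this dominating Bernoulli sequence then yields $\mathbb{P}\bigl(\sum_b Z_b \leq \alpha M\bigr) \leq \exp(-3pM/4)$ for an appropriate $\alpha$. On the complementary event, at least $\alpha M$ blocks each contribute at least $(lp/2)\,\nu^2$ to $\sum_{t=1}^T \phi_t^2$, so that
\[
\sum_{t=1}^{T} \phi_t^2 \;\geq\; \nu^2 \cdot \alpha M \cdot \frac{lp}{2} \;\geq\; \frac{\nu^2 p^2 l M}{8}
\]
once $\alpha$ is tuned. Combining with $\mathbb{P}(E^c) \leq \delta$ yields the stated bound.

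The main obstacle is matching the precise constants $3/4$ in the exponent and $1/8$ in the threshold. The fact that the exponent is linear in $p$ (rather than the $p^2$ one would obtain from a naive quadratic-block indicator, as in Proposition~\ref{prop:lwm-key-prop}) strongly suggests using the hit-count block indicator above rather than a quadratic one; getting the exact constants requires calibrating the reverse-Markov threshold and the Chernoff deviation so that their product recovers $\nu^2 p^2 l M/8$ while keeping the Chernoff exponent at $3pM/4$. This is a bookkeeping step, not a conceptual one.
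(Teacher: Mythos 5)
Your proposal is correct and follows essentially the same route as the paper's proof: both partition $[T]$ into $\lfloor T/l\rfloor$ blocks of length $l$, reduce the per-block event to the hit count $\frac{1}{l}\sum_{i=1}^{l}\mathbf{1}\{|\phi_{jl+i}|\geq \nu_{jl}\}$ exceeding $p/2$ via the reverse-Markov inequality for $[0,1]$-valued variables (yielding conditional block-success probability at least $p/2$, hence the exponent linear in $p$), apply a conditional MGF/Chernoff bound at the block boundaries $\mathcal{F}_{jl}$ to obtain $\exp(-3p\lfloor T/l\rfloor/4)$, and pay the additive $\delta$ for $\{\min_t \nu_t < \nu\}$ by a union bound. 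The one point to phrase carefully in a full write-up is that the block indicators should be defined with the $\mathcal{F}_{jl}$-measurable random thresholds $\nu_{jl}$ and the event $\{\min_t\nu_t\geq\nu\}$ invoked only at the end to replace $\nu_{jl}$ by $\nu$ (since that event is not adapted, one cannot literally condition on it before running the martingale argument), which is exactly what your final union-bound combination amounts to.
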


\begin{proof}
We begin with partitioning $Z_1, \ldots, Z_T$
into $S:= \lfloor T/l \rfloor$ blocks of size $l$. 
Consider the random variables 
\begin{align*}
    B_{j} = \ind\left(\sum_{i=1}^l \phi^2_{jl+i} \geq \frac{\nu_{jl}^2 pk}{2} \right),
    \; \text{for } 0 \leq j \leq S-1.
\end{align*}
We observe that 
\begin{align*}
    \Prob\left(\sum_{t=1}^T \phi_t^2 \leq \frac{\nu^2 p^2}{8} l \lfloor T/l \rfloor \right)
     &= \Prob\left( \left\{\sum_{t=1}^T \phi_t^2 \leq \frac{\nu^2 p^2}{8} l \lfloor T/l \rfloor\right\} \cap \{ \min_t \nu_t \geq \nu \} \right) \\
     &\qquad +  \Prob\left( \left\{\sum_{t=1}^T \phi_t^2 \leq \frac{\nu^2 p^2}{8} l \lfloor T/l \rfloor\right\} \cap \{ \min_t \nu_t < \nu \} \right) \\
     &\leq  \Prob\left( \left\{\sum_{t=1}^T \phi_t^2 \leq \frac{\nu_{\lfloor t/l \rfloor l}^2 p^2}{8} l S \right\} \cap \{ \min_t \nu_t \geq \nu \} \right) + \Prob( \min_t \nu_t < \nu )  \\
     &\leq \mathbb{P}\left(\sum_{t=1}^T \phi_t^2 \leq \frac{\nu_{\lfloor t/l \rfloor l}^2 p^2}{8} k S \right) + \delta. 
\end{align*}

Using Chernoff bound, we obtain that 
\begin{align*}
     \mathbb{P}\left(\sum_{t=1}^T \phi_t^2 \leq \frac{\nu_{\lfloor t/l \rfloor l}^2 p^2}{8} k S \right)
     &\leq \Prob\left(\sum_{j=0}^{S-1} \sum_{i=1}^l \phi_{jl+i}^2 \leq \frac{\nu_{jl}^2 p^2}{8} l S  \right)
     = \Prob\left(\sum_{j=0}^{S-1} \sum_{i=1}^l \phi_{jl+i}^2 \leq \frac{\nu_{jl}^2 p^2}{8} l S  \right)\\
     &\leq \Prob \left(\sum_{j=0}^{S-1} B_j \leq \frac{p}{4} S \right)
     \leq \inf_{\lambda \leq 0} e^{-\frac{pS}{4}} 
     \mathbb{E}[e^{\lambda \sum_{j=0}^{S-1} B_j}],
\end{align*}
where the second to the last inequality 
uses the fact that 
$\frac{\nu_{jl}^2 pl}{2} B_j \leq \sum_{i=1}^l \phi_{jl+i}^2 $
Further, we have that 
\begin{align*}
    \mathbb{E}[B_j | \mathcal{F}_{jl}]
    &= \Prob\left(\sum_{i=1}^l \phi^2_{jl+i} \geq \frac{\nu_{jl}^2 pl}{2} \Big| \mathcal{F}_{jl} \right)
    \geq \Prob\left(\frac{1}{l} \sum_{i=1}^l \ind\left\{|\phi_{jl+i}| \geq \nu_{jl} \right\} \geq \frac{p}{2}\Big| \mathcal{F}_{jl}\right)\\
    &\geq \frac{p}{2},
\end{align*}
where 
the first inequality uses the fact that 
$\frac{1}{\nu_{jl}^2} \phi_{jl+i}^2 \geq \ind\{\phi_{jl+i}| \geq \nu_{jl}\}$
and 
the last inequality uses 
the fact that for a random variable $X$ supported on 
$[0,1]$ almost surely such that $\mathbb{E}[X] \geq p$
for some $p \in (0,1)$, then for all $t \in [0,p]$, 
$\Prob\left(X \geq t\right) \geq \frac{p-t}{1-t}$.
This is true because 
\begin{align*}
    \Prob\left(X \geq t \right) = \int_t^1 d\Prob(x)
    \geq \int_t^1 x d\Prob(x) 
    =  \int_0^1 x d\Prob(x) - \int_0^t x d\Prob(x)
    = p - t\left(1 - \Prob\left(X \geq t \right) \right).
\end{align*}
In our case, $\E\left[\frac{1}{l} \sum_{i=1}^l \ind\left\{|\phi_{jl+i}| \geq \nu_{jl} \right\}\Big| \mathcal{F}_{jl} \right]
=\frac{1}{l} \sum_{i=1}^l \Prob\left( |\phi_{jl+i}| \geq \nu_{jl} \Big| \mathcal{F}_{jl}  \right) \geq p$. 
Thus, we obtain that for $\lambda \leq 0$, i.e., $e^\lambda \leq 1$,
\begin{align*}
    \mathbb{E}[e^{\lambda B_j} | \mathcal{F}_{jl}]
    = e^\lambda \Prob\left( B_j = 1 \Big| \mathcal{F}_{jl}\right) + \Prob\left(B_j = 0\right)
    = (e^\lambda - 1)\mathbb{E}[B_j | \mathcal{F}_{jl}] + 1 \leq  (e^\lambda - 1) \frac{p}{2} + 1.
\end{align*}
By law of iterated expectation, we obtain that \begin{align*}
    \mathbb{E}[e^{\lambda \sum_{j=0}^{S-1} B_j}]
    = \mathbb{E}\left[e^{\lambda \sum_{j=0}^{S-2} B_j}\mathbb{E}[e^{\lambda  B_j}|\mathcal{F}_{(S-1)k}]\right]
    \leq \left((e^\lambda - 1) \frac{p}{2} + 1 \right)
    \mathbb{E}\left[ e^{\lambda \sum_{j=0}^{S-2} B_j}\right]
    \leq \left((e^\lambda - 1) \frac{p}{2} + 1 \right)^{S}.
\end{align*}
Finally, we need to find 
\begin{align*}
    \inf_{\lambda \leq 0} e^{-pS/4} \left((e^\lambda - 1) \frac{p}{2} + 1 \right)^{S}.
\end{align*}
We can see that $\lambda^* = -\infty$, which gives that 
\begin{align*}
    \inf_{\lambda \leq 0} e^{-pS/4} \left((e^\lambda - 1) \frac{p}{2} + 1 \right)^{S}
    = e^{-pS/4} \left(1 - \frac{p}{2}  \right)^{S}
    \leq e^{-pS/4}  e^{-pS/2}
    = e^{-3 pS/4},
\end{align*}
where we have used the fact that $1+x \leq e^x$ for all real-valued $x$. 
\end{proof}

To apply Theorem~\ref{thm:max}, 
we first recall that the affine dynamical system 
we aim to identify is as follows:
\begin{align*}
    \tilde{x}_{k,j+1} = a_k \tilde{x}_{k, j} + d_k + \tilde{z}_{k,j},
\end{align*}
where $\tilde{x}_{k, 1} =0$, 
$a_k \in (0,1)$ and $\tilde{z}_{k,j} \sim \mathcal{N}(0, \sigma_{z,k}^2)$. 
We define the following quantities  
\begin{align*}
    \Gamma_{k,j} := \sigma_{z,k}^2 \sum_{i=0}^{j-1} a_k^{2i}, 
    \qquad
    d_{k,j} := \sum_{i=0}^{j-1} a_k^j d_k,
\end{align*}
and $\Gamma_{k,\infty} = \sigma_{z,k}^2 \sum_{i=0}^{\infty} a_k^{2i}
= \frac{\sigma_{z,k}^2}{1-a_k^2}$. 
We notice that for all $t \in [T]$, 
$j \geq 1$, 
\begin{align*}
    \tilde{x}_{k,t+j}|\tilde{x}_{k,t}
    \sim \mathcal{N} \left(a_k^j \tilde{x}_{k,t} + d_{k,j}, \Gamma_{k,j}\right).
\end{align*}

\begin{lemma}
Fix $t \geq 0$ and $j \geq 1$.
Recall that $\overline{x}_{k,t} := (\tilde{x}_{k,t}, 1) \in \mathbb{R}^{2}$.
Fix a unit vector $w \in \mathbb{R}^{2}$.
For any $\epsilon \in (0, 1)$, we have 
\begin{align*}
   \Prob\left(\abs{ \langle w, \overline{x}_{k,t+j} \rangle}  \geq \frac{1}{\sqrt{2}}\sqrt{\min\left\{1 - \epsilon, \Gamma_{k,j} - \left(\frac{1}{\epsilon} - 1\right)(a_k^j \tilde{x}_{k,t}  + d_{k,j})^2 \right\}}\right) \geq \frac{1}{36}
\end{align*}
\label{lemma:bmsb_nu_t}
\end{lemma}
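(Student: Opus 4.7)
The plan is to apply the Paley--Zygmund bound from Lemma~\ref{prop:paley_zygmund} to the conditional distribution of $\langle w, \overline{x}_{k,t+j}\rangle$ given $\tilde{x}_{k,t}$, and then to show that the resulting mean-squared quantity admits the claimed lower envelope.

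First I would write $w = (w_1, w_2)$ with $w_1^2 + w_2^2 = 1$, so that $\langle w, \overline{x}_{k,t+j}\rangle = w_1 \tilde{x}_{k,t+j} + w_2$. Using the conditional distribution $\tilde{x}_{k,t+j} \mid \tilde{x}_{k,t} \sim \mathcal{N}(a_k^j \tilde{x}_{k,t} + d_{k,j},\, \Gamma_{k,j})$ recalled just before the lemma, the scalar $\langle w, \overline{x}_{k,t+j}\rangle$ is Gaussian conditional on $\tilde{x}_{k,t}$ with mean $m := w_1 \mu + w_2$ and variance $v := w_1^2 \Gamma_{k,j}$, where $\mu := a_k^j \tilde{x}_{k,t} + d_{k,j}$. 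Applying Lemma~\ref{prop:paley_zygmund} with $\theta = 1/2$ yields
\begin{align*}
\Prob\!\left(\bigl|\langle w, \overline{x}_{k,t+j}\rangle\bigr| \geq \tfrac{1}{\sqrt{2}}\sqrt{m^2 + v}\,\Big|\, \tilde{x}_{k,t}\right) \geq \frac{(1-1/2)^2}{9} = \frac{1}{36}.
\end{align*}

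The main (and only nontrivial) step will be to show that $m^2 + v$ dominates the inner minimum in the statement uniformly in $w$ (so that we can later take expectations over $\tilde{x}_{k,t}$ to obtain the unconditional bound). For this I would use the elementary Young-type inequality $(a+b)^2 \geq (1-\epsilon)a^2 - (1/\epsilon - 1)b^2$, valid for all $\epsilon \in (0,1)$ and derived from $2ab \geq -\epsilon a^2 - b^2/\epsilon$. Applied with $a = w_2$ and $b = w_1 \mu$, this gives
\begin{align*}
m^2 + v \;\geq\; (1-\epsilon)w_2^2 - \bigl(\tfrac{1}{\epsilon} - 1\bigr) w_1^2 \mu^2 + w_1^2 \Gamma_{k,j} \;=\; (1-w_1^2)(1-\epsilon) + w_1^2\!\left[\Gamma_{k,j} - \bigl(\tfrac{1}{\epsilon}-1\bigr)\mu^2\right].
\end{align*}
Since $w_1^2 \in [0,1]$ and $1 - w_1^2 + w_1^2 = 1$, the right-hand side is a convex combination of $1-\epsilon$ and $\Gamma_{k,j} - (1/\epsilon - 1)\mu^2$, hence is at least the minimum of the two terms. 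Combining this with the conditional probability bound and integrating over $\tilde{x}_{k,t}$ yields the claim.

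I expect no serious obstacle: the only point requiring some care is the correct use of the $w_1^2 + w_2^2 = 1$ constraint so that the estimate is tight in both extremes (pure $w_2$ direction, where only $1-\epsilon$ is effective, and pure $w_1$ direction, where the variance and squared mean drive the bound). The parameter $\epsilon$ will be left free so that, when this lemma is invoked in the proof of Theorem~\ref{lemma:a_b_estimation}, it can be tuned against high-probability envelopes on $\mu^2$ via the BMSB machinery (Definition~\ref{defn:bmsb} and Lemma~\ref{lemma:varying-nu}), producing the $\psi$ appearing in the sample-complexity bound.
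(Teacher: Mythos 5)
Your proposal is correct and follows essentially the same route as the paper's proof: Paley--Zygmund (Lemma~\ref{prop:paley_zygmund}) applied conditionally with $\theta=1/2$ to get the $1/36$, followed by the Young-type inequality $2w_2 w_1\mu \geq -\epsilon w_2^2 - w_1^2\mu^2/\epsilon$ and the observation that the resulting expression is a convex combination of $1-\epsilon$ and $\Gamma_{k,j}-(1/\epsilon-1)\mu^2$. Your write-up is in fact slightly more explicit than the paper's about the convex-combination step and about integrating out $\tilde{x}_{k,t}$.
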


\begin{proof}
By Lemma~\ref{prop:paley_zygmund}, 
we have that for any unit vector $w \in \mathbb{R}^2$, 
\begin{align*}
    \Prob\left\{ \abs{ \langle w, \overline{x}_{k,t+j} \rangle}  \geq \frac{1}{\sqrt{2}} \sqrt{ \left( w_1 \left(a_k^j \tilde{x}_{k,t}  + d_{k,j}\right) + w_2\right)^2 +  w^2_1\Gamma_{k,j}} \; \bigg|\; \overline{x}_{k,t} \right\}  \geq \frac{1}{36}.
\end{align*}
For all $\epsilon \in (0,1)$, we have 
\begin{align*}
    (( w_1 (a_k^j \tilde{x}_{k,t} + d_{k,j}) + w_2)^2 + w^2_1 \Gamma_{k,j} 
    &= \left( w_1 \left(a_k^j \tilde{x}_{k,t} + d_{k,j}\right)\right)^2 
    + w_2^2 + 2 w_2 w_1 \left(a_k^j \tilde{x}_{k,t} + d_{k,j}\right) + w^2_1 \Gamma_{k,j} \\
    &\geq (1 - \epsilon) w_2^2 
    - \left(\frac{1}{\epsilon} - 1\right)\left( w_1 \left(a_k^j \tilde{x}_{k,t}+ d_{k,j}\right)\right)^2 
    + w_1^2 \Gamma_{k,j} \\
    &\geq \min\left\{1 - \epsilon, \Gamma_{k,j} - \left(\frac{1}{\epsilon} - 1\right)(a_k^j \tilde{x}_{k,t} + d_{k,j})^2 \right\}.
\end{align*}
\end{proof}

\begin{lemma}
Fix $\delta \in (0,1)$. 
$\{\overline{x}_{k,t}\}_{t=1}^n$ satisfy
that for any unit vector $w \in \mathbb{R}^2$, 
\begin{align*}
    \mathbb{P}\left(\sum_{t=1}^n { \langle w, \overline{x}_{k,t} \rangle}^2 \leq \frac{\psi^2 p^2}{16} j_\star \lfloor n/j_\star \rfloor \right)
    \leq \exp{\left(-\frac{3\lfloor n/j_\star \rfloor p}{4}\right)} + \delta
\end{align*}
with $p = 1/72$, 
\begin{align*}
       j_\star &:= \left\lceil \max\left\{ 
       - \log_{a_k} \left( 1 + (1-a_k)\frac{\sqrt{2 \Gamma_{k,\infty} \log(n/\delta)}}{d_k}\right)
       , -\log_{a_k}\sqrt{2} \right\} \right\rceil, \\
       \psi &:= \sqrt{\min\left\{ \frac{ \Gamma_{k,\infty}}{\frac{16 d_k^2}{(1-a_{k})^2} + \Gamma_{k,\infty}}, \frac{ \Gamma_{k,\infty}}{4} \right\}}.
\end{align*}
\label{lemma:block-size}
\end{lemma}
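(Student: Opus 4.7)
The plan is to verify a block martingale small-ball (BMSB) condition for the covariate process $\{\overline{x}_{k,t}\}$ and then invoke Lemma~\ref{lemma:varying-nu}. I would apply that lemma with block size of order $j_\star$, with $\mathcal{F}_t$-measurable thresholds $\nu_t$ that admit a deterministic lower bound $\nu = \psi/\sqrt{2}$, and with BMSB probability $p = 1/72$. Under these choices the factor $\nu^2 p^2/8 \cdot l\lfloor n/l\rfloor$ from Lemma~\ref{lemma:varying-nu} matches the target $\psi^2 p^2/16 \cdot j_\star\lfloor n/j_\star\rfloor$, and the failure exponent $3\lfloor n/l\rfloor p/4$ matches $3\lfloor n/j_\star\rfloor p/4$.

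Fix an arbitrary unit vector $w \in \mathbb{R}^{2}$ and set $\phi_t := \langle w, \overline{x}_{k,t}\rangle$. The first task is to show that the random thresholds $\nu_t$ admit a uniform lower bound with high probability. Since $\tilde{x}_{k,t}$ is Gaussian with mean $d_{k,t-1}$ and variance $\Gamma_{k,t-1}\leq \Gamma_{k,\infty}$, a Gaussian tail bound together with a union bound over $t\in [n]$ gives $|\tilde{x}_{k,t}| \leq B := d_k/(1-a_k) + \sqrt{2\Gamma_{k,\infty}\log(n/\delta)}$ simultaneously for all $t$ with probability at least $1-\delta$. On this good event I take $\nu_t := \psi/\sqrt{2}$, so $\min_t \nu_t \geq \nu$ automatically; off the good event, the exact value of $\nu_t$ does not matter, as the failure contributes the $+\delta$ term in the conclusion of Lemma~\ref{lemma:varying-nu}.

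The second task is to verify the BMSB averaging inequality $\tfrac{1}{l}\sum_{i=1}^l \mathbb{P}(|\phi_{t+i}|\geq \nu_t \mid \mathcal{F}_t) \geq p$ almost surely. I would apply Lemma~\ref{lemma:bmsb_nu_t} with the specific choice $\epsilon := 16 d_k^2/[16 d_k^2 + (1-a_k)^2 \Gamma_{k,\infty}]$, tuned so that $1-\epsilon$ equals the first branch of the min defining $\psi^2$ and $1/\epsilon - 1 = \Gamma_{k,\infty}(1-a_k)^2/(16 d_k^2)$. The two parts of the max in the definition of $j_\star$ are engineered precisely for this step: the second part forces $a_k^{j_\star} \leq 1/\sqrt{2}$, hence $\Gamma_{k,i} \geq \Gamma_{k,\infty}/2$ for all $i \geq j_\star$, while the first part forces $a_k^{j_\star} B \leq d_k/(1-a_k)$, hence $(a_k^i \tilde{x}_{k,t} + d_{k,i})^2 \leq 4 d_k^2/(1-a_k)^2$ on the good event for those $i$. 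Plugging these estimates back into Lemma~\ref{lemma:bmsb_nu_t} shows that its threshold satisfies $\nu_{t,i}(\epsilon) \geq \psi/\sqrt{2}$ and yields per-index small-ball probability at least $1/36$ for burnt-in indices.

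The main technical obstacle is to ensure that enough indices in the BMSB block participate in the averaging, since naively only $i = j_\star$ itself lies in the burnt-in regime within a block of size $j_\star$, and a single-index contribution gives average $1/(36 j_\star)$ rather than the desired $1/72$. I would handle this either by taking the effective block size to be a small constant multiple of $j_\star$ (so that at least half of the block clears the burn-in and the constants can be reabsorbed into $p$ and $\psi$), or by sharpening the estimate on $(a_k^i \tilde{x}_{k,t} + d_{k,i})^2$ via the identity $a_k^i \tilde{x}_{k,t} + d_{k,i} = d_{k,i+t-1} + a_k^i(\tilde{x}_{k,t} - d_{k,t-1})$, exploiting the Gaussian concentration of $\tilde{x}_{k,t} - d_{k,t-1}$ around zero to widen the range of eligible $i$ to a constant fraction of $[1, j_\star]$. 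With the BMSB condition in hand, Lemma~\ref{lemma:varying-nu} delivers the claim.
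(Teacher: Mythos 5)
Your proposal follows essentially the same route as the paper: a union-bounded Gaussian tail estimate to control $\max_t q_{t,j}$ on a good event, Lemma~\ref{lemma:bmsb_nu_t} with the algebraically identical choice of $\epsilon$ so that both branches of the min reduce to $\psi^2$, the definition of $j_\star$ used exactly to guarantee $\Gamma_{k,j}\geq\Gamma_{k,\infty}/2$ and $q_{t,j}\leq 2d_k/(1-a_k)$ past the burn-in, and then Lemma~\ref{lemma:varying-nu}. The "main technical obstacle" you flag is resolved in the paper precisely by your first suggested fix — taking the block size to be $2j_\star$ and averaging only over $j\in[j_\star,2j_\star]$, which is where $p=1/36\cdot 1/2=1/72$ comes from — so the proposal is correct and matches the paper's argument.
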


\begin{proof}
Fix $\delta \in (0,1)$. 
Recall that from Lemma~\ref{lemma:varying-nu}, 
we have shown that 
for all $t \geq 0$ and $k \geq 1$,
given a unit vector $w \in \mathbb{R}^{2}$,
for any $\epsilon \in (0, 1)$, we have
\begin{align*}
     \Prob\left\{ \abs{ \langle w, \overline{x}_{k,t+j} \rangle}  \geq \frac{1}{\sqrt{2}}\sqrt{\min\left\{1 - \epsilon, \Gamma_{k,j} - \left(\frac{1}{\epsilon} - 1\right)(a_k^j \tilde{x}_{k,t} + d_{k,j})^2 \right\}}\right\}  \geq \frac{1}{36}.
\end{align*}
Denote $q_{t,j} = a_k^j \tilde{x}_{k,t}  + d_{k,j}$ 
where $\tilde{x}_{k,t} \sim \mathcal{N}(d_{k,t}, \Gamma_{k,t})$. 
Fix $\delta \in (0,1)$. 
Using the standard Gaussian tail bound and the union bound, 
we have that with probability $1-\delta$, 
\begin{align*}
    \max_{t \in [T]} {q_{t,j}} 
    \leq a_k^j \left( \frac{ d_k}{1-a_k} + \sqrt{2 \Gamma_\infty \log(n/\delta)} \right)
    + \frac{d_k}{1-a_k}.
\end{align*}
When $j \geq j_\star$, $\Gamma_{k,j} \geq \Gamma_{k,\infty}/2$,  
and with probability $1-\delta$, 
$\max_{t \in [T]} q_{t,j} \leq \frac{2 d_k}{1-a_k}$. 
Thus, for $j \geq j_\star$, and 
\begin{align*}
    \epsilon = \frac{\frac{4 d_k^2}{(1-a_k)^2}}{\frac{4 d_k^2}{(1-a_k)^2} + \Gamma_\infty/4 },
\end{align*}
we have 
\begin{align*}
    \nu^2_{t,j} := &\min\left\{ 1-\varepsilon, \Gamma_{k,j} - \left(\frac{1}{\varepsilon} - 1\right) q_{t,j}^2 \right\}\\
    \geq&\min\left\{ 1-\varepsilon, \Gamma_{k,\infty}/2 - \left(\frac{1}{\varepsilon} - 1\right) \frac{4 d_k^2}{(1-a_k)^2} \right\}\\
    \geq& \min\left\{ 
     \frac{\Gamma_{k,\infty}}{\frac{16 d_k^2}{(1-a_k)^2} + \Gamma_{k,\infty} },
     \frac{\Gamma_{k,\infty}}{4}
    \right\} = \psi^2. 
\end{align*}

Putting it altogether, we have 
\begin{align*}
    \frac{1}{2j_\star} \sum_{j=1}^{2j_\star}\Prob\left( \abs{ \langle w, \overline{x}_{k,t+j} \rangle}  \geq \nu_{t,j}/\sqrt{2} | \calF_t \right)
    \geq \frac{1}{2j_\star} \sum_{j=j_\star}^{2j_\star}\Pr(  \abs{ \langle w, \overline{x}_{k,t+j} \rangle}  \geq \nu_{t,j_\star}/\sqrt{2} | \calF_t) \geq \frac{1}{72}.
\end{align*}
Further, we have 
\begin{align*}
    \Prob\left(\min_{t \in [T]} \nu^2_{t, j_\star} \geq \psi^2 \right) \geq 1-\delta. 
\end{align*} 
Applying Lemma~\ref{lemma:varying-nu}, 
we have that for $p = \frac{1}{72}$, 
\begin{align*}
    \mathbb{P}\left(\sum_{t=1}^n { \langle w, \overline{x}_{k,t} \rangle}^2 \leq \frac{\psi^2 p^2}{16} j_\star \lfloor n/j_\star \rfloor \right)
    \leq \exp{\left(-\frac{3\lfloor n/j_\star \rfloor p}{4}\right)} + \delta.
\end{align*}
\end{proof}

\begin{proof}[Proof of Theorem~\ref{lemma:a_b_estimation}] 
Based on our setup, 
condition $(a)$ of Theorem~\ref{thm:max} is satisfied. 
For any $n$, 
using Lemma~\ref{lemma:block-size} with $\delta = \text{exp}(-n)$, 
we have that 
\begin{align*}
    \forall w \in \mathbb{R}^2, 
    \quad
    \mathbb{P}\left(\sum_{t=1}^n { \langle w, \overline{x}_{k,t} \rangle}^2 \leq \frac{\psi^2 p^2}{16} j_\star \lfloor n/j_\star \rfloor \right)
    \leq \exp{\left(-\frac{3\lfloor n/j_\star \rfloor p}{4}\right)}
    + \delta 
    \leq 2 \exp{\left(-\frac{3\lfloor n/j_\star \rfloor p}{4}\right)},
\end{align*}
with $p = 1/72$, 
\begin{align*}
       j_\star &:= \left\lceil \max\left\{ 
       - \log_{a_k} \left( 1 + (1-a_k)\frac{\sqrt{2 \Gamma_{k,\infty} (\log(n) +n)}}{d_k}\right)
       , -\log_{a_k}\sqrt{2} \right\} \right\rceil, \\
       \psi &:= \sqrt{\min\left\{ \frac{ \Gamma_{k,\infty}}{\frac{16 d_k^2}{(1-a_{k})^2} + \Gamma_{k,\infty}}, \frac{ \Gamma_{k,\infty}}{4} \right\}}.
\end{align*}
Thus, we have provided 
a similar result to~\eqref{eq:condition-b-surrogate},
which is what condition (b) of Theorem~\ref{thm:max} is used for. 
In this case, we have $\Gamma_\text{sb} = \psi I$ where $I$
is a $2 \times 2$ identity matrix. 
Finally, to verify condition (c), 
we notice that we have 
\begin{align*}
    {\overline{\Gamma}_{k,j}} := \mathbb{E}[\overline{x}_{k,j} \overline{x}_{k,j}^\top] =  
    \begin{pmatrix}
        \frac{b_k^2(1-a_k^{j-1})^2}{(1-a_k)^2}+ \frac{\sigma_{z,k}^2(1 - a_k^{2j - 2})}{1 - a_k^2} & \frac{(1-a^{j-1}_k)b_k}{1-a_k} \\
         \frac{(1-a^{j-1}_k)b_k}{1-a_k}  & 1
    \end{pmatrix}.
\end{align*} 
and we denote 
\begin{align*}
    \overline{\Gamma} := \overline{\Gamma}_{k,n} + 
    \begin{pmatrix}
    0 & 0 \\
    0 & 1
    \end{pmatrix} + \Gamma_\text{sb}, 
\end{align*}
which gives that 
$0 \prec \Gamma_\text{sb} \prec \overline \Gamma$ and 
for all $j \geq 1$, $0 \preceq \overline{\Gamma}_{k, j} \prec \overline{\Gamma}$. 
Then, we have that  
\begin{align*}
    \mathbb{P}\left( \mathbf{\overline{X}}_k^\top \mathbf{\overline{X}}_k
    \npreceq  \frac{2n}{\delta} \overline{\Gamma}
    \right) 
    &= 
    \mathbb{P} \left( 
    \lambda_{\max}\left(
        (n \overline{\Gamma})^{-1/2} 
        \mathbf{\overline{X}}_k^\top \mathbf{\overline{X}}_k
        (n \overline{\Gamma})^{-1/2} 
    \right) \geq \frac{2}{\delta}
    \right)\\
    &\leq \frac{\delta}{2} \mathbb{E}
    \left[\lambda_{\max}\left(
        (n\overline{\Gamma})^{-1/2} 
        \mathbf{\overline{X}}_k^\top \mathbf{\overline{X}}_k
        (n \overline{\Gamma})^{-1/2} 
    \right) \right]\\
    &\leq \frac{\delta}{2}  \mathbb{E}
    \left[\text{tr}\left(
        (n \overline{\Gamma})^{-1/2} 
        \mathbf{\overline{X}}_k^\top \mathbf{\overline{X}}_k
        (n \overline{\Gamma})^{-1/2} 
    \right) \right] \leq \delta,
\end{align*}
where the last inequality is true since 
$
   \mathbb{E}
    \left[
        \mathbf{\overline{X}}_k^\top \mathbf{\overline{X}}_k
     \right]
    = 
    \sum_{j=1}^n {\Gamma}_{k,j}
    \preceq n \overline{\Gamma}
$
(for all $j \in [n]$, $\text{trace}(\overline{\Gamma} - \overline{\Gamma}_{k,j})>0$
and $\text{det}(\overline{\Gamma} - \overline{\Gamma}_{k,j})>0$). 
Following Theorem~\ref{thm:max}, for $\delta \in (0,1)$, 
when the number of samples satisfy that 
\begin{align*}
    \frac{n}{ j_\star} \geq \frac{10}{p^2} \left( \log\left(1/\delta\right)
    + 4 \log(10/p) + \log \det(\overline{\Gamma} \Gamma_\text{sb}^{-1})\right),
\end{align*}
we have that  
\begin{align*}
    \mathbb{P} \left( \|\widehat{A}_k - A_k\|_2 > \frac{90 \sigma_{z,k}}{p} 
    \sqrt{\frac{1 + 2 \log (10/p) + \log \det \left( \overline{\Gamma} \Gamma_\text{sb}^{-1} \right) + \log(1/\delta)}{n \psi}}\right) \leq 3 \delta.
\end{align*} 
\end{proof}

\subsection{Proof of Corollary~\ref{cor:estimation-single_m=1}
and Corollary~\ref{thm:estimation-single}} 

Similar to Appendix~\ref{appendix:estimation}, 
Corollary~\ref{cor:estimation-single_m=1}
is a special case of Corollary~\ref{thm:estimation-single}
when $m=1$. 
Hence, we directly present the proof of Corollary~\ref{thm:estimation-single} below.

\begin{proof}[Proof of Corollary~\ref{thm:estimation-single}]%
Fix $\delta \in (0,1)$. 
We have that with probability $1-\delta$,  
$
     \epsilon(n, \delta, k):= \|\widehat{A}_k - A_k\|_2 \leq O(1/\sqrt{n}). 
$
With probability at least $1 - \frac{\delta}{K}$, 
$\epsilon_{a_k} := |\widehat{a}_k - a_k| \leq \|\widehat{A}_k - A_k\|_2 = \epsilon(n, \delta/K, k) = O(1/\sqrt{n})$ and $\epsilon_{d_k} := |\widehat{d}_k - b_k|\leq \|\widehat{A}_k - A_k\|_2 = \epsilon(n, \delta/K, k) = O(1/\sqrt{n})$. 
When $m=1$, then $|\widehat{\gamma}_k - \gamma_k| = ||\widehat{a}_k| - a_k| \leq  |\widehat{a}_k - a_k| = \epsilon_{a_k} \leq \epsilon(n, \delta/K, k)$. 
When $m \geq 2$, since $\gamma_k \neq 0$, we have that 
\begin{align*}
    |\widehat{\gamma}_k - \gamma_k| 
    &= \left|\frac{|\widehat{a}_k| - a_k}{|\widehat{a}_k|^{(m-1)/m} + |\widehat{a}_k|^{(m-2)/m} \gamma_k + \ldots + \gamma_k^{m-1}}\right| \leq \frac{|\widehat{a}_k - a_k|}{\gamma_k^{m-1}}. 
\end{align*}
On the other hand,
we obtain that 
\begin{align*}
    |\widehat{\lambda}_k - \lambda_k|
    &= \left| \left| \frac{\widehat{d}_k}{\widehat{a}_k}\right| - \frac{d_k}{a_k} \right|
    \leq 
    \left|\frac{\widehat{d}_k}{\widehat{a}_k} - \frac{{d}_k}{\widehat{a}_k} 
    +\frac{{d}_k}{\widehat{a}_k}- \frac{d_k}{a_k}\right| 
    \leq 
     \frac{\epsilon_{d_k}}{\widehat{a}_k} +
    \frac{\lambda_k \epsilon_{a_k}}{\widehat{a}_k} 
    \leq O\left(\frac{1}{\sqrt{n}}\right). 
\end{align*}
The proof completes as follows:
\begin{align*}
     &\mathbb{P}\left(\forall k \in [K], |\widehat{\gamma}_k - \gamma_k| \leq O(1/\sqrt{n}), |\widehat{\lambda}_k - \lambda_k|  \leq O(1/\sqrt{n}) \right)
    \geq \prod_{k=1}^K \left(1 - \frac{\delta}{K}\right)
    \geq 1-\delta,
\end{align*}
where the last inequality follows from Bernoulli's inequality. 
\end{proof}

\newpage
\section{Additional Proofs and Discussion of Section~\ref{sec:etc}}
\label{appendix:etc-related}

\subsection{Proof of Theorem~\ref{thm:regret-upper}}
\begin{lemma}\label{lemma:eep_obj_transformation}
Consider any episode $i+1$ (from time $t_{i}+1$ to $t_{i+1}$) 
where 
the initial state 
$x^{i} = (\mu_{1,t_{i}+1}(u_{1,0:t_{i}}), n_{1,t_{i}+1}, \ldots,  
\mu_{K,t_{i}+1}(u_{K,0:t_{i}}), n_{K,t_{i}})$
and $\{u_{k, 0:t_{i}}\}_{k=1}^K$ are the past pull sequences 
of the proposed policy $\pi_{1:t_i}$. 
For all $\Tilde{\pi}_{t_{i}+1:t_{i+1}}$
such that $\Tilde{\pi}_t = \Tilde{\pi}_t(x_t) 
= \Tilde{\pi}_t(x_t'), \Tilde{\pi}_t \in [K], 
\forall t \in [t_{i}+1, t_{i+1}], x_t, x_t' \in \mathcal{X}$,
we have that 
\begin{align*}
& \sum_{t=t_{i}+1}^{t_{i+1}} \mathbb{E}_{x_{t_{i}+2},\ldots, x_{t_i}}
\left[  r(x_{t}, \Tilde{\pi}_{t}(x_{t})) | x_{t_{i}+1}  = x^{i} \right] 
= \sum_{t=t_{i}+1}^{t_{i+1}} {\mu}_{k, t}(u_{k, 0:t-1}),
\end{align*}
where $\{u_{k,t_{i}+1:t_{i+1}}\}_{k=1}^K$ 
is the arm pull sequence of $\Tilde{\pi}_{t_{i}+1:t_{i+1}}$.
\end{lemma}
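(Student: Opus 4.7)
The plan is to exploit the fact that $\Tilde{\pi}$ is a time-indexed policy to make the entire pull sequence over the episode deterministic, then reduce the expectation over states to the expectation over the (mean-zero) satiation noises and read off the closed form in~\eqref{eq:expected-reward}.

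First, I would argue that the pull sequence inside the window is nonrandom. Because each $\Tilde{\pi}_t$ depends only on $t$ and not on $x_t$, the identity of the pulled arm at every $t \in [t_i+1, t_{i+1}]$ is fixed before the episode begins. Concatenating this with the already-fixed prefix $\{u_{k,0:t_i}\}_{k=1}^K$ coming from $\pi_{1:t_i}$ produces fully determined pull histories $\{u_{k,0:t-1}\}_{k=1}^K$ for every such $t$. In particular, the right-hand side quantity ${\mu}_{k,t}(u_{k,0:t-1})$ from~\eqref{eq:expected-reward} is a deterministic scalar, and it suffices to show, for each $t$,
\[
\mathbb{E}\bigl[r(x_t, \Tilde{\pi}_t(x_t)) \,\big|\, x_{t_i+1}=x^i\bigr] \;=\; {\mu}_{\Tilde{\pi}_t,\, t}(u_{\Tilde{\pi}_t,\,0:t-1}).
\]

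Second, I would peel off the inner conditional expectation using the tower property. By the definition of the MDP reward in Section~\ref{sec:state}, $r(x_t, k) = \mathbb{E}[\mu_{k,t} \mid x_t]$. Since $\Tilde{\pi}_t$ is deterministic given $t$, the tower property gives $\mathbb{E}[r(x_t,\Tilde{\pi}_t(x_t))\mid x_{t_i+1}=x^i] = \mathbb{E}[\mu_{\Tilde{\pi}_t, t}\mid x_{t_i+1}=x^i]$. So I only need to evaluate the unconditional mean of $\mu_{\Tilde{\pi}_t,t}$ from the reset initial state.

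Third, I would unroll the satiation recursion~\eqref{eq:satiation-dynamics} from the start of the episode. The reset state $x^i$ stores, by construction, the deterministic (noise-free) satiation influences $\lambda_k\sum_{i=1}^{t_i}\gamma_k^{t_i+1-i}u_{k,i}$; this is the same initial condition one would obtain under the fully deterministic dynamics used in~\eqref{eq:expected-reward}. Iterating~\eqref{eq:satiation-dynamics} with the fixed pull pattern and then taking expectations eliminates every noise term $z_{k,\cdot}$ because they are independent and mean-zero, leaving $\mathbb{E}[s_{k,t}] = \sum_{i=1}^{t-1}\gamma_k^{t-i}u_{k,i}$. Multiplying by $-\lambda_k$ and adding $b_k$ gives $\mathbb{E}[\mu_{k,t}] = b_k - \lambda_k\sum_{i=1}^{t-1}\gamma_k^{t-i}u_{k,i} = {\mu}_{k,t}(u_{k,0:t-1})$. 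Summing over $t\in[t_i+1,t_{i+1}]$ and invoking linearity of expectation finishes the proof.

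The only delicate point is aligning the MDP's reset initial state with the noise-free expression in~\eqref{eq:expected-reward}: one must verify that $x^i$ indeed encodes the deterministic satiation influence rather than a sampled one, so that \emph{from the beginning of the episode} $\mathbb{E}[s_{k,t_i+1}]$ matches its deterministic counterpart. Once this base case is checked, the rest is a routine unrolling of the linear recursion combined with the mean-zero property of the noise and linearity of expectation.
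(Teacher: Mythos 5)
Your proposal is correct and follows essentially the same route as the paper's proof: reduce to a per-time-step identity, use that the reward function is the conditional mean of $\mu_{k,t}$, and telescope the expectations through the linear Gaussian dynamics so that all mean-zero noise terms vanish, anchored by the fact that the reset state encodes the deterministic (noise-free) satiation influence. The only cosmetic difference is that you unroll the satiation recursion~\eqref{eq:satiation-dynamics} forward and invoke the tower property, whereas the paper peels off nested expectations over the MDP transition kernel backward from $t$ through the pull times; these are the same computation.
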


\begin{proof}
Let $k$ denote $\Tilde{\pi}_t$ 
where $t \in \{t_{i}+1, \ldots, t_{i+1}\}$.
Recall that we use 
$u_{k, 0:t-1}$ to denote the pull sequence 
of arm $k$ under policy $\Tilde{\pi}_{1:t_{i+1}}
= (\pi_{1:t_i}, \Tilde{\pi}_{t_{i}+1:t_{i+1}})$.
If $k$ has not been pulled before time $t$ by $\Tilde{\pi}_{1:t_{i+1}}$, 
then 
$ \mathbb{E}_{x_{t_i+2},\ldots, x_{t_{i+1}}}\left[  r(x_{t}, 
\Tilde{\pi}_{t}) | x_{t_i+1} = x^{i}\right] 
= b_{\pi_t} = {\mu}_{\pi_t, t}(u_{\pi_t, 0:t-1})$. 
If $k$ has been pulled before, 
then 
let $q_{1}, \ldots, q_n$ denote the time steps that arm $k$ 
has been pulled before time $t$ by $\Tilde{\pi}_{1:t_{i+1}}$, i.e., 
$u_{k,q_i} = 1$ for $i \in [n]$ 
and $u_{k,t'}=0$ for $t' \notin \{q_1, \ldots, q_n\}$.
We have that for $t \in \{t_{i}+1, \ldots, t_{i+1}\}$, 
\begin{align*}
    &\mathbb{E}_{x_{t_i+2},\ldots, x_{t_{i+1}}}\left[  r(x_{t}, 
    \Tilde{\pi}_{t}) | x_{t_i+1} = x^{i}\right]\\
    =&  b_k - \left( \mathbb{E}_{x_{t_i+2}, \ldots, x_{t_{i+1}-1}}
    \left[\mathbb{E}_{x_{t_{i+1}}} \left[
    \gamma_k^{n_{k,t_{i+1}}} x_{k, t_{i+1}} 
    + \lambda_k \gamma_k^{n_{k,t_{i+1}}}
    \right] | x_{t_i+1} = x^{i}\right] \right)\\ 
    =& b_k - \left(\mathbb{E}_{x_{t_i+2}, \ldots, x_{q_n}}
    \left[\mathbb{E}_{x_{q_n+1}} \left[ \gamma_k^{n_{k,t_{i+1}}} 
    x_{k,q_n+1}
    + \lambda_k \gamma_k^{n_{k,t_{i+1}}}\right] 
    | x_{t_i+1} = x^{i}\right] \right)\\
    =& b_k - \left(\mathbb{E}_{x_{t_i+2}, \ldots, x_{q_n}}
    \left[ \gamma_k^{n_{k,t_{i+1}}} 
    \left( \gamma_k^{n_{k, q_n}} x_{k, q_n} 
    + \lambda_k \gamma_k^{n_{k, q_n}}
    \right) 
    + \lambda_k \gamma_k^{n_{k,t_{i+1}}} 
    | x_{t_i+1} = x^{i}\right] \right)\\
    =& \ldots = b_k - \lambda_k \left(\gamma_k^{n_{k,t_{i+1}}} 
    + \gamma_k^{n_{k,t_{i+1}} + n_{k,q_n}} + \ldots 
    + \gamma_k^{n_{k,t{i+1}} + n_{k,q_n} + \ldots n_{k, q_{1}}}
    \right)\\
    =& {\mu}_{k, t}(u_{k, 0:t-1}),
\end{align*}
where the second equality is true because 
when arm $k$ is not pulled for example at time $t_{i+1}-1$, 
the state for arm $k$ at time $t_{i+1}-1$ will satisfy that  
$x_{k, t_{i+1}} = x_{k,t_{i+1}-1}$ 
and $n_{k,t_{i+1}} = n_{k,t_{i+1}-1}+1$ with probability $1$. 
In this case, we have that 
\begin{align*}
\mathbb{E}_{x_{t_{i+1}}} \left[
\gamma_k^{n_{k,t_{i+1}}} x_{k, t_{i+1}} + \lambda_k \gamma_k^{n_{k,t_{i+1}}}
\big|x_{t_{i+1}-1}\right]
&= 
\gamma_k^{n_{k,t_{i+1}-1} + 1} x_{k, t_{i+1}-1} 
+ \lambda_k \gamma_k^{n_{k,t_{i+1}-1}+1}\\
&= 
\gamma_k^{n_{k,t_{i+1}}} x_{k, t_{i+1}-1} + \lambda_k \gamma_k^{n_{k,t_{i+1}}}.
\end{align*}
The third equality is true since 
when arm $k$ is pulled for example at time $q_n$, 
then we have that
\begin{align*}
&\mathbb{E}_{q_{n+1} \sim p_\mathcal{M}(\cdot| x_{q_n}, k, q_n)} \left[
\gamma_k^{n_{k,t_{i+1}}} x_{k, q_{n}+1} + \lambda_k \gamma_k^{n_{k,t_{i+1}}}
\right] \\
=& \gamma_k^{n_{k,t_{i+1}}} \left( \gamma_k^{n_{k, q_n}} x_{k, q_n} 
+ \lambda_k \gamma_k^{n_{k, q_n}} \right)+ \lambda_k \gamma_k^{n_{k,t_{i+1}}},
\end{align*}
where $p_\mathcal{M}$ is given in Appendix~\ref{sec:mdp-setup}. 
The second to last last equality holds because 
$x_{k,t_i+1} = \mu_{k,t_i+1}(u_{k,0:t_i})$ 
where $\mu_{k, t}(\cdot)$ is defined in~\eqref{eq:expected-reward}.
\end{proof}

\begin{lemma}\label{lemma:simplify_w_lookahead_regret}
For any episode $i+1$ (from time $t_{i}+1$ to $t_{i+1}$), %
given the past arm pull sequences $\{u_{k, 0:t_{i}}\}_{k=1}^K$ 
of the proposed policy $\pi_{1:t_{i}}$, 
the optimal time-dependent competitor policy $\Tilde{\pi}_{t_{i}+1:t_{i+1}}$,
where $\Tilde{\pi}_t = \Tilde{\pi}_t(x_t) 
= \Tilde{\pi}_t(x_t'), \Tilde{\pi}_t \in [K], 
\forall t \in [t_{i}+1, t_{i+1}], x_t, x_t' \in \mathcal{X}$, 
for this episode
is given by  $\texttt{Lookahead}(\{\lambda_k, \gamma_k, b_k\}_{k=1}^K, 
\{u_{k, 0:t_{i}}\}_{k=1}^K,t_{i}, t_{i+1})$
where $\{\lambda_k, \gamma_k, b_k\}_{k=1}^K$ are the true reward parameters 
for the rebounding bandits instance.
\end{lemma}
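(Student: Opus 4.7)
The plan is to reduce the optimization defining the best time-dependent competitor on the episode to the deterministic bilinear program solved by $\texttt{Lookahead}$, by applying Lemma~\ref{lemma:eep_obj_transformation} to the objective. The proof is essentially an equivalence of two optimization problems, so there is no probabilistic obstacle once the identity from Lemma~\ref{lemma:eep_obj_transformation} is in hand.

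First, I would fix an arbitrary time-dependent policy $\tilde{\pi}_{t_i+1:t_{i+1}}$ in the competitor class $\mathcal{C}^w$. Since $\tilde{\pi}_t$ depends only on $t$, the actions taken on the episode are a deterministic sequence (conditional on the history through time $t_i$), and they determine a unique extension of the arm pull sequences $\{u_{k, 0:t_i}\}_{k=1}^K$ to pull sequences $\{u_{k, 0:t_{i+1}}\}_{k=1}^K$ satisfying $\sum_k u_{k,t} = 1$ for each $t \in \{t_i+1, \ldots, t_{i+1}\}$. Conversely, any such extension defines a unique time-dependent policy. Hence, optimizing over time-dependent policies on this episode is the same as optimizing over admissible extensions of the pull sequences.

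Next, I would apply Lemma~\ref{lemma:eep_obj_transformation} to rewrite the expected cumulative reward on the episode, using the initial state $x^i$ described above, as
\begin{equation*}
\sum_{t=t_i+1}^{t_{i+1}} \mathbb{E}_{x_{t_i+2}, \ldots, x_{t_{i+1}}}\bigl[r(x_t, \tilde{\pi}_t(x_t)) \mid x_{t_i+1} = x^i\bigr] = \sum_{t=t_i+1}^{t_{i+1}} \mu_{\tilde{\pi}_t, t}(u_{\tilde{\pi}_t, 0:t-1}),
\end{equation*}
where $\mu_{k,t}(\cdot)$ is the deterministic expected reward function defined in \eqref{eq:expected-reward}. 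Note that this identity crucially uses that $x_{k, t_i+1} = \mu_{k, t_i+1}(u_{k, 0:t_i})$, which is how the initial state $x^i$ is reset at the start of each episode.

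Finally, I would observe that maximizing the right-hand side over admissible pull extensions, subject to $u_{k,0:t_i}$ being fixed to the given history, $\sum_k u_{k,t} = 1$, and $u_{k,t} \in \{0,1\}$, is exactly the optimization problem \eqref{eq:lookahead} whose solution is returned by $\texttt{Lookahead}(\{\lambda_k, \gamma_k, b_k\}_{k=1}^K, \{u_{k, 0:t_i}\}_{k=1}^K, t_i, t_{i+1})$ when the true parameters are used. The one-to-one correspondence established in the first step then yields that an optimal time-dependent competitor policy for the episode is precisely the policy whose arm pull sequence on $\{t_i+1, \ldots, t_{i+1}\}$ matches this $\texttt{Lookahead}$ solution. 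Since there are no stochastic terms left in the reduced objective, the main (mild) subtlety is simply to keep track of the bijection between time-dependent policies and pull extensions; no further calculation is required.
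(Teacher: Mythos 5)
Your proposal is correct and follows essentially the same route as the paper: it invokes Lemma~\ref{lemma:eep_obj_transformation} to convert the episode's expected cumulative reward into the deterministic objective $\sum_{t=t_i+1}^{t_{i+1}} \mu_{\tilde{\pi}_t,t}(u_{\tilde{\pi}_t,0:t-1})$ and then identifies the resulting maximization over pull extensions with the \texttt{Lookahead} program~\eqref{eq:lookahead} under the true parameters. The explicit bijection between time-dependent policies and admissible pull extensions is a welcome (if minor) elaboration of what the paper leaves implicit.
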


\begin{proof}
    By Lemma~\ref{lemma:eep_obj_transformation}, 
    we have that the optimal time-dependent competitor policy 
    $\Tilde{\pi}_{t_{i}+1:t_{i+1}}$
    maximizes 
    $
       \sum_{t=t_{i}+1}^{t_{i+1}} {\mu}_{k, t}(u_{k, 0:t-1}),
    $
    by choosing $u_{k, t_{i}+1:t_{i+1}}$. 
    Thus, by the definition of \texttt{Lookahead}~\eqref{eq:lookahead},
    given our proposed policy $\pi_{1:t_i}$,
    the optimal  time-dependent competitor policy 
    is given by $\texttt{Lookahead}(\{\lambda_k, \gamma_k, b_k\}_{k=1}^K, 
    \{u_{k, 0:t_{i}}\}_{k=1}^K,t_{i}, t_{i+1})$.
\end{proof}

\begin{proof}[Proof of Theorem~\ref{thm:regret-upper}]
Using Lemma~\ref{lemma:simplify_w_lookahead_regret}, 
we have that given our policy $\pi_{1:T}$
and its corresponding pull sequence $u_{k,0:t-1}$ for $k \in [K], t \in [T]$, 
the optimal competitor policy for episode $i+1$ where $i \in \{0, \ldots, \lfloor T/w \rfloor \}$
(episode $i+1$ ranges from time $t_{i} +1= iw+1$ to $t_{i+1} = \min\{iw+w, T\}$)
is given by 
$\texttt{Lookahead}(\{\lambda_k, \gamma_k, b_k\}_{k=1}^K, 
\{u_{k, 0:t_{i}}\}_{k=1}^K,t_{i}, t_{i+1})$. 
We use $\mathbf{M}(\{\lambda_k, \gamma_k, b_k\}_{k=1}^K, 
\{u_{k, 0:t_{i}}\}_{k=1}^K,t_{i}, t_{i+1})$ 
to denote the 
(optimal) objective value of~\eqref{eq:lookahead} 
given by $\texttt{Lookahead}(\{\lambda_k, \gamma_k, b_k\}_{k=1}^K, 
\{u_{k, 0:t_{i}}\}_{k=1}^K,t_{i}, t_{i+1})$.
Denote $\overline{b} = \max_k b_k$
and $\underline{b} = \min_k b_k$. 

\paragraph{Exploration Stage}
Recall that in Algorithm~\ref{algo:rebounding_etc}, 
we have defined $\widetilde{T} = T^{2/3} + w - (T^{2/3} \text{ mod } w)$ 
which is a multiple of $w$.
For the first $\widetilde{T}$ time steps, 
as defined in Algorithm~\ref{algo:rebounding_etc}, 
our policy ${\pi}_{1:\widetilde{T}}$ 
is a time-dependent policy, i.e., 
it satisfies that ${\pi}_t = {\pi}_t(x_t) 
= {\pi}_t(x_t'), {\pi}_t \in [K], 
\forall t \in [1, \widetilde{T}], x_t, x_t' \in \mathcal{X}$.
Using~\ref{lemma:eep_obj_transformation},
we obtain that the regret for the first $\widetilde{T}/w$ episodes 
is given by 
\begin{align*}
    & \sum_{i=0}^{\Tilde{T}/w-1} 
    \max_{\Tilde{\pi}_{1:w} \in \mathcal{C}^w}\mathbb{E}\left[ \sum_{j=1}^{w} r(x_{iw+j}, \Tilde{\pi}_{j}(x_{iw+j})) \Big| x_{iw+1} = x^i
    \right]   \\
    &\qquad \qquad - 
    \sum_{i=0}^{\Tilde{T}/w-1} \mathbb{E}\left[  r(x_{iw+j}, {\pi}_{iw+j}(x_{iw+j})) \Big| x_{iw+1} = x^i%
    \right] \\
    \leq& \sum_{i=0}^{\Tilde{T}/w-1} \mathbf{M}(\{\lambda_k, \gamma_k, b_k\}_{k=1}^K, \{u_{k, 0:iw}\}_{k=1}^K,iw, iw+w)
    - \widetilde{T}\left(\underline{b} - \frac{\overline{\lambda} \overline{\gamma}}{1-\overline{\gamma}}\right)\\
    \leq& \widetilde{T} \left(\overline{b} - \underline{b} + \frac{\overline{\lambda} \overline{\gamma}}{1-\overline{\gamma}}\right)
    \lesssim \widetilde{T} \lesssim T^{2/3}.
\end{align*}
since $\widetilde{T} \leq T^{2/3}+ w$ and by assumption, $w \leq T^{2/3}$. 

\paragraph{Estimation Stage}
By Theorem~\ref{thm:a_b_estimation_m=1} and Corollary~\ref{cor:estimation-single_m=1}, 
we have that for any $\delta \in (0,1)$ and $n \geq n_0(\delta, k)$
where $n_0(\delta, k)$ depends on $\delta$ logarithmically, with probability $1-\delta$,
for all $k \in [K]$ 
$|\widehat{\gamma}_k - \gamma_k| \leq \frac{C_{\gamma_k} \log\left({1}/{\delta}\right) }{\sqrt{n}}$
and $|\widehat{\lambda}_k - \lambda_k| \leq \frac{C_{\lambda_k}\log\left({1}/{\delta}\right)}{\sqrt{n}}$ 
when $\widehat{\gamma}_k > 0$.

We define two numbers  
$T_0' := \min_T \{T: (\sum_{k=1}^K n_0(k,T^{-1/3}))^{3/2} = C_1 K (\log T)^{3/2} < T\}$
and 
$T_0'' := \min_T \left\{T :\max_k \gamma_k + \frac{C_{\gamma_k}}{\sqrt{T^{2/3}/K}} 
< 1\right\}$.
These two numbers exist as $T$ can be chosen to be arbitrarily large. 
Take $T_0 = \max\{T_0', T_0''\}$. 
Then for all $T \geq T_0$, 
with probability $1-\delta$ where $\delta = T^{-1/3}$,
we have that 
$\forall k \in [K], |\widehat{\gamma}_k - \gamma_k| 
\leq \epsilon_\gamma = O(\sqrt{K}T^{-1/3}\log T)$,
$|\widehat{\lambda}_k - \lambda_k| 
\leq \epsilon_\lambda = O(\sqrt{K}T^{-1/3}\log T)$
and 
$ \left(\epsilon_{\lambda} \left|\frac{ \widehat{\gamma}_k }{1-\widehat{\gamma}_k} \right|
    +\epsilon_{\gamma}
    \left|\frac{\overline{\lambda} }{(1-\widehat{\gamma}_k)(1-\gamma_k)}\right| \right)
\leq {O}(\sqrt{K}T^{-1/3}\log T)$
since $\widehat{\gamma}_k \leq \gamma_k + \frac{C_{\gamma_k}}{\sqrt{T_0^{2/3}/K}} < 1 $
and $\gamma_k \leq \overline{\gamma} < 1$.

For any pull sequence 
$u_{k, 0:t-1}$, using our obtained estimated parameters 
$\{\widehat{\gamma}_k, \widehat{\lambda}_k, \widehat{b}_k\}_{k=1}^K$,
we define the estimated reward function: for $t \geq 2$,  
$\widehat{{\mu}}_{k,t} (u_{k, 0:t-1}) = 
b_k - \widehat{\lambda}_k \left( \sum_{i=1}^{t-1} 
\widehat{\gamma}_k^{t-i} u_{k,i}\right)$, 
and for $t=1$, 
$\widehat{{\mu}}_{k,1}(u_{k,0:1}) = b_k = {\mu}_{k,1}(u_{k,0:1})$,
where we note that $\widehat{b}_k = b_k$ 
since it is the reward of the first pull of arm $k$.
Given $t \geq 2$, we have that 
\begin{align}
    &|{\mu}_{k,t}(u_{k, 0:t-1})
    -  \widehat{{\mu}}_{k,t}(u_{k, 0:t-1})| \nonumber \\
    =& \left|\widehat{\lambda}_k \left( \sum_{i=1}^{t-1} \widehat{\gamma}_k^{t-i} u_{k,i}\right) 
    - {\lambda}_k \left( \sum_{i=1}^{t-1} {\gamma}_k^{t-i} u_{k,i}\right)\right|\nonumber \\
    =& \left|\widehat{\lambda}_k \left( \sum_{i=1}^{t-1} \widehat{\gamma}_k^{t-i} u_{k,i}\right) 
    - {\lambda}_k \left( \sum_{i=1}^{t-1} \widehat{\gamma}_k^{t-i} u_{k,i}\right)
    + {\lambda}_k \left( \sum_{i=1}^{t-1} \widehat{\gamma}_k^{t-i} u_{k,i}\right)
    - {\lambda}_k \left( \sum_{i=1}^{t-1} {\gamma}_k^{t-i} u_{k,i}\right)\right|\nonumber \\
    \leq& 
    |\widehat{\lambda}_k - \lambda_k| 
    \left|\frac{\widehat{\gamma}_k}{1 - \widehat{\gamma}_k} \right|
    + \overline{\lambda}
    \left|\frac{\widehat{\gamma}_k}{1-\widehat{\gamma}_k} - \frac{\gamma_k}{1-\gamma_k} \right|\nonumber \\
    \leq& 
    \epsilon_{\lambda} \left|\frac{ \widehat{\gamma}_k }{1-\widehat{\gamma}_k} \right|
    +\epsilon_{\gamma}
    \left|\frac{\overline{\lambda} }{(1-\widehat{\gamma}_k)(1-\gamma_k)}\right|.
\label{eq:estimated_reward_difference}
\end{align}

\paragraph{Planning Stage}
Given our policy $\pi_{1:T}$ (along with its pull sequence 
$\{u_{k,0:T}\}_{k=1}^K$),
starting from time $\widetilde{T}+1$,
for any episode $i+1 \geq \widetilde{T}/w$, 
we denote the optimal competitor policy to be 
$\pi^*_{t_{i}+1:t_{i+1}} = 
\texttt{Lookahead}(\{\lambda_k, \gamma_k, b_k\}_{k=1}^K, 
\{u_{k, 0:t_{i-1}}\}_{k=1}^K,t_{i}, t_{i+1})$
where $t_{i} = iw$ and $t_{i+1} = \min\{iw+w, T\}$. 
The cumulative expected reward collected by $\pi_{t_i+1:t_{i+1}}^*$ and $\pi_{t_i+1:t_{i+1}}$
has the difference
\begin{align*}
    &\mathbf{M}(\{\lambda_k, \gamma_k, b_k\}_{k=1}^K, 
    \{u_{k, 0:t_{i-1}}\}_{k=1}^K,t_{i}, t_{i+1})
    - 
    \mathbf{M}(\{\widehat{\lambda}_k, \widehat{\gamma}_k, b_k\}_{k=1}^K, 
    \{u_{k, 0:t_{i-1}}\}_{k=1}^K,t_{i}, t_{i+1})\\
    =& \sum_{t=t_{i}+1}^{t_{i+1}} \mu_{\pi^*_t,t}(u^*_{\pi^*_t, 0:t-1})
    - \sum_{t=t_{i}+1}^{t_{i+1}} \mu_{\pi_t,t}(u_{\pi_t, 0:t-1})\\
    =& \sum_{t=t_{i}+1}^{t_{i+1}} \mu_{\pi^*_t,t}(u^*_{\pi^*_t, 0:t-1})
    - \sum_{t=t_{i}+1}^{t_{i+1}} \widehat{\mu}_{\pi^*_t,t}(u^*_{\pi^*_t, 0:t-1})\\
    &\qquad + \sum_{t=t_{i}+1}^{t_{i+1}} \widehat{\mu}_{\pi^*_t,t}(u^*_{\pi^*_t, 0:t-1})
    - \sum_{t=t_{i}+1}^{t_{i+1}} \widehat{\mu}_{\pi_t,t}(u_{\pi_t, 0:t-1})\\
    &\qquad + \sum_{t=t_{i}+1}^{t_{i+1}} \widehat{\mu}_{\pi_t,t}(u_{\pi_t, 0:t-1})
    - \sum_{t=t_{i}+1}^{t_{i+1}} \mu_{\pi_t,t}(u_{\pi_t, 0:t-1})\\
    \leq&\sum_{t=t_{i}+1}^{t_{i+1}} \mu_{\pi^*_t,t}(u^*_{\pi^*_t, 0:t-1})
    - \sum_{t=t_{i}+1}^{t_{i+1}} \widehat{\mu}_{\pi^*_t,t}(u^*_{\pi^*_t, 0:t-1})\\
    &\qquad + \sum_{t=t_{i}+1}^{t_{i+1}} \widehat{\mu}_{\pi_t,t}(u_{\pi_t, 0:t-1})
    - \sum_{t=t_{i}+1}^{t_{i+1}} \mu_{\pi_t,t}(u_{\pi_t, 0:t-1}).
\end{align*}
where $u^*_{\pi^*_t, 0:t-1}$ is the corresponding pull sequence 
of arm $\pi^*_t$ under policy $\pi^*_{1:t} = (\pi_{1:t_i}, \pi^*_{t_i+1:t})$,
and the last inequality holds because 
$\pi_{t_{i}+1:t_{i+1}} = 
\texttt{Lookahead}(\{\widehat{\lambda}_k, \widehat{\gamma}_k, 
\widehat{b}_k\}_{k=1}^K, \{u_{k, 0:t_{i}}\}_{k=1}^K,t_{i}, t_{i+1})$
is the optimal solution under the estimated parameters  $\{\widehat{\lambda}_k, \widehat{\gamma}_k, 
\widehat{b}_k\}_{k=1}^K$
and $\pi$'s previous past pull sequence $\{u_{k, 0:t_{i}}\}_{k=1}^K$. 
Further, 
using~\eqref{eq:estimated_reward_difference} and the fact that 
$t_i - t_{i-1} \leq w$, we obtain that 
\begin{align*}
    &\sum_{t=t_{i}+1}^{t_{i+1}} \mu_{\pi^*_t,t}(u^*_{\pi^*_t, 0:t-1})
    - \sum_{t=t_{i}+1}^{t_{i+1}} \widehat{\mu}_{\pi^*_t,t}(u^*_{\pi^*_t, 0:t-1})\\
    &\qquad + \sum_{t=t_{i}+1}^{t_{i+1}} \widehat{\mu}_{\pi_t,t}(u_{\pi_t, 0:t-1})
    - \sum_{t=t_{i}+1}^{t_{i+1}} \mu_{\pi_t,t}(u_{\pi_t, 0:t-1})\\
    \leq& 2w \max_k \left(\epsilon_{\lambda} \left|\frac{ \widehat{\gamma}_k }{1-\widehat{\gamma}_k} \right|
    +\epsilon_{\gamma}
    \left|\frac{\overline{\lambda} }{(1-\widehat{\gamma}_k)(1-\gamma_k)}\right| \right).
\end{align*}

Finally, putting it altogether, we have obtained that 
for all $T \geq T_0$, 
\begin{align*}
     \text{Reg}^{w}(T) &= \textstyle \sum_{i=0}^{\lceil T/w \rceil-1} 
    \max_{\Tilde{\pi}_{1:w} \in \mathcal{C}^w}\mathbb{E}\left[ \sum_{j=1}^{\min\{w, T-iw\}} r(x_{iw+j}, \Tilde{\pi}_{j}(x_{iw+j})) \Big| x_{iw+1} = x^i%
    \right] \nonumber \\
    &\qquad \textstyle - 
    \mathbb{E}\left[ \sum_{j=1}^{\min\{w,T-iw\}}  r(x_{iw+j}, {\pi}_{iw+j}(x_{iw+j})) \Big| x_{iw+1} = x^i%
    \right]\\
    \leq& O(T^{2/3}) + (1-T^{-1/3})\left(\sum_{i=T/w}^{\lceil T/w \rceil-1} 
    2w {O}(\sqrt{K}T^{-1/3} \log T)\right)
    + T^{-1/3} \left(T \left(\overline{b} - \underline{b} + \frac{\overline{\lambda}\overline{\gamma}}{1-\overline{\gamma}}\right)\right)\\
    \leq& O(T^{2/3}) + (T - T^{2/3}){O}(\sqrt{K}T^{-1/3} \log T) +O(T^{2/3})\\
    \leq& {O}(\sqrt{K}T^{2/3} \log T),
\end{align*}
which we notice that with probability $\delta = T^{-1/3}$,
the cumulative expected reward from time $\widetilde{T}$ to $T$
between the optimal competitor policy
and our policy $\pi$ is at most $T \left(\overline{b} - \underline{b} + \frac{\overline{\lambda}\overline{\gamma}}{1-\overline{\gamma}}\right)$. 
This completes the proof.
\end{proof}

\subsection{Exploration Strategies}
\label{appendix:other-exploration-strategy}

In the exploration phase of Algorithm~\ref{algo:rebounding_etc} (from 
time $1$ to $\widetilde{T}$),
in addition to playing each arm repeatedly for $\widetilde{T}/K$ times,
in general, 
we could explore by playing each arm at a fixed interval, 
i.e., the time interval between two consecutive pulls of arm $k$
should be a constant $m_k$. 
For example, 
this includes 
playing the arms cyclically with the cylce being $1, 2, \ldots, K$
or playing the first two arms in an alternating fashion from time $1$ to $2\widetilde{T}/K$,
then the next two arms, etc. 
As shown in Theorem~\ref{lemma:a_b_estimation} and Corollary~\ref{thm:estimation-single},
using the datasets (of size $n$) collected by these exploration strategies, 
we can obtain estimators $\widehat{\gamma}_k$ and $\widehat{\lambda}_k$
with the estimation error being on the order of $O(1/\sqrt{n})$.
Using these results 
(in replacement of Theorem~\ref{thm:a_b_estimation_m=1} 
and Corollary~\ref{cor:estimation-single_m=1}
in the estimation stage of the proof of Theorem~\ref{thm:regret-upper}), 
we can obtain that there exists $T_0$ such that 
for all $T \geq T_0$, 
the regret upper bound 
of EEP under these exploration strategies
are of order $O(\sqrt{K} T^{2/3} \log T)$.

\newpage
\section{Additional Proofs of Appendix~\ref{appendix:mdp-related}}
\label{appendix:proof-different-preference}

\subsection{Proof of Corollary~\ref{thm:estimation}}
\label{appendix:multiple-trajectories}

\begin{proof}
Fix $\delta \in (0,1)$. 
By Theorem~\ref{thm:multiple_a_b},
for all $k \in [K]$, with probability $1 - \frac{\delta}{2K}$, we have the following:  
When $m=1$, then $|\widehat{\gamma}_k - \gamma_k| = ||\widehat{a}_k| - a_k| \leq  |\widehat{a}_k - a_k| \leq \epsilon_{a}(n, \frac{\delta}{2K}, k)$. 
When $m \geq 2$, %
we have that 
\begin{align*}
    |\widehat{\gamma}_k - \gamma_k| 
    &= \left|\frac{|\widehat{a}_k| - a_k}{|\widehat{a}_k|^{(m-1)/m} + |\widehat{a}_k|^{(m-2)/m} \gamma_k + \ldots + \gamma_k^{m-1}}\right| \leq \frac{|\widehat{a}_k - a_k|}{\gamma_k^{m-1}}. 
\end{align*}
On the other hand, given that $|\widehat{a}_k - a_k| \leq \epsilon_a(n, \frac{\delta}{2K}, k)$, 
we have that with probability 
$1 - \frac{\delta}{2K}$, 
\begin{align*}
    |\widehat{\lambda}_k - \lambda_k|
    &= \left| \left| \frac{\widehat{d}_k}{\widehat{a}_k}\right| - \frac{d_k}{a_k} \right|
    \leq 
    \left|\frac{\widehat{d}_k}{\widehat{a}_k} - \frac{{d}_k}{\widehat{a}_k} 
    +\frac{{d}_k}{\widehat{a}_k}- \frac{d_k}{a_k}\right| 
    \leq 
     \frac{\epsilon_d(n, \frac{\delta}{2K}, k)}{\widehat{a}_k} +
    \frac{\lambda_k \epsilon_a(n, \frac{\delta}{2K}, k)}{\widehat{a}_k} 
    \leq O\left(\frac{1}{\sqrt{n}}\right). 
\end{align*}
The proof completes as follows:
\begin{align*}
     &\mathbb{P}\left(\forall k , |\widehat{\gamma}_k - \gamma_k| \leq \frac{|\widehat{a}_k - a_k|}{\gamma_k^{m-1}}, |\widehat{\lambda}_k - \lambda_k| \leq  \frac{\epsilon_d(n, \frac{\delta}{2K}, k)}{\widehat{a}_k} +
    \frac{\lambda_k \epsilon_a(n, \frac{\delta}{2K}, k)}{\widehat{a}_k}  \right)
    \geq \prod_{k=1}^K \left(1 - \frac{\delta}{2K}\right)^2
    \geq 1-\delta,
\end{align*}
where the last inequality follows from Bernoulli's inequality. 
\end{proof}

\subsection{Proof of Lemma~\ref{lemma:deterministic-stochastic-transfer}}
\begin{proof}%
Let $\pi_{1:T}$ denote the sequence that policy $\pi$ will take from time $1$ to $T$. 
By the definition of the value function, we have that 
\begin{align*}
    V_{1,\mathcal{M}}^{\pi}({x}_\text{init}) 
    = b_{\pi_1}
    + \sum_{t=2}^T \mathbb{E}_{x_2, \ldots, x_t}\left[ {r}(x_t, \pi_t)\right],
\end{align*}
where $x_t \sim p_\mathcal{M}(\cdot| x_{t-1}, \pi_{t-1}, t-1)$ 
is a state vector drawn from the transition distribution defined in Section~\ref{sec:mdp-setup}. 
Let $k$ denote $\pi_t$ and $u_{k, 0:t-1}$ denote the past pull sequence for arm $k$ under policy $\pi$. 
If $k$ has not been pulled before time $t$, 
then $\mathbb{E}_{x_2, \ldots, x_t}[{r}(x_t, \pi_t)] = b_{\pi_t} = {\mu}_{\pi_t, t}(u_{\pi_t, 0:t-1})$. 
If $k$ has been pulled before, then 
let $t_{1}, \ldots, t_n$ denote the time steps that arm $k$ 
has been pulled before time $t$.  
We have that 
\begin{align*}
    \mathbb{E}_{x_2, \ldots, x_t}\left[ {r}(x_t, k)\right]
    &=  b_k - \left( \mathbb{E}_{x_{2}, \ldots, x_{t-1}}
    \left[\mathbb{E}_{x_t \sim p_\mathcal{M}(\cdot| x_{t-1}, k, t -1)} \left[
    \gamma_k^{n_{k,t}} x_{k, t} + \lambda_k \gamma_k^{n_{k,t}}
    \right] \right] \right)\\ 
    &= b_k - \left(\mathbb{E}_{x_{2}, \ldots, x_{t_n }}
    \left[\mathbb{E}_{x_{t_n+1} \sim p_\mathcal{M}(\cdot| x_{t_n}, k, t_n)} \left[ \gamma_k^{n_{k,t}} x_{k,t_n+1}
    + \lambda_k \gamma_k^{n_{k,t}}\right] \right] \right)\\
    &= b_k - \left(\mathbb{E}_{x_{2}, \ldots, x_{t_n }}
    \left[ \gamma_k^{n_{k,t}} 
    \left( \gamma_k^{n_{k, t_n}} x_{k, t_n} 
    + \lambda_k \gamma_k^{n_{k, t_n}}
    \right) 
    + \lambda_k \gamma_k^{n_{k,t}} \right] \right)\\
    &= \ldots = b_k - \lambda_k \left(\gamma_k^{n_{k,t}} 
    + \gamma_k^{n_{k,t} + n_{k,t_n}} + \ldots 
    + \gamma_k^{n_{k,t} + n_{k,t_n} + \ldots n_{k, t_{1}}}
    \right)\\
    &= {\mu}_{k, t}(u_{k, 0:t-1}),
\end{align*}
where we note that the second equality is true because when arm $k$ is not pulled for example at time $t-1$, 
the state for arm $k$ at time $t-1$ will satisfy that  
$x_{k, t} = x_{k,t-1}$ and $n_{k,t} = n_{k,t-1}+1$ with probability $1$. 
In this case, we have that 
$\mathbb{E}_{x_t \sim p_\mathcal{M}(\cdot| x_{t-1}, k, t -1)} \left[
\gamma_k^{n_{k,t}} x_{k, t} + \lambda_k \gamma_k^{n_{k,t}}
\right]
= 
\gamma_k^{n_{k,t-1} + 1} x_{k, t-1} + \lambda_k \gamma_k^{n_{k,t-1}+1}
= 
\gamma_k^{n_{k,t}} x_{k, t-1} + \lambda_k \gamma_k^{n_{k,t}}
$. 
The third equality is true since 
when arm $k$ is pulled for example at time $t-1$, 
then we have that
$\mathbb{E}_{x_t \sim p_\mathcal{M}(\cdot| x_{t-1}, k, t -1)} \left[
\gamma_k^{n_{k,t}} x_{k, t} + \lambda_k \gamma_k^{n_{k,t}}
\right] = \gamma_k^{n_{k,t}} \left( \gamma_k^{n_{k, t-1}} x_{k, t-1} + \lambda_k \gamma_k^{n_{k, t-1}} \right)+ \lambda_k \gamma_k^{n_{k,t}}$. 
The proof completes by summing over $\mathbb{E}_{x_2, \ldots, x_t}\left[ r(x_t, \pi_t)\right]$ for all $t \geq 2$.
\end{proof}

\subsection{Proof of Proposition~\ref{proposition:mdp-invariant-policy-gaurantee}}
\begin{proof}%
Fix $\delta \in (0,1)$. 
Let $E_1$ be the event that   
\begin{align*}
    \forall k \in [K], 
    ~|\widehat{\gamma}_k - \gamma_k| = \epsilon_{\gamma_k} \leq O\left(\frac{1}{\sqrt{n}}\right), 
    ~|\widehat{\lambda}_k - \lambda_k| = \epsilon_{\lambda_k} \leq O\left(1/\sqrt{n} \right). 
\end{align*}
From Corollary~\ref{thm:estimation}, we have that 
 $\mathbb{P}(E_1) \geq 1-\delta$. 
Let $\pi_{1:T}$ denote the sequence that policy $\pi$ will take from time $1$ to $T$. 
From Lemma~\ref{lemma:deterministic-stochastic-transfer},
we have that 
\begin{align*}
    |V_{1,\mathcal{M}}^{\pi}({x}_\text{init}) - V_{1, \widehat{\mathcal{M}}}^{\pi} ({x}_\text{init}) |
    = \left|\sum_{t=1}^T {\mu}_{\pi_t, t}(u_{\pi_t, 0:t-1})
    -  \widehat{{\mu}}_{\pi_t, t}(u_{\pi_t, 0:t-1}) \right|,
\end{align*}
where $u_{\pi_t, 0:t-1}$ is the past pull sequence for arm $\pi_t$ under policy $\pi$ before time $t$  
and $\widehat{{\mu}}_{k,t} (u_{k, 0:t-1}) = 
b_k - \widehat{\lambda}_k \left( \sum_{i=1}^{t-1} \widehat{\gamma}_k^{t-i} u_{k,i}\right)$ for $t \geq 2$ 
and $\widehat{{\mu}}_{k,1}(u_{k,0:1}) = b_k = {\mu}_{k,1}(u_{k,0:1})$. 
Given $t \geq 2$, let $k$ denote $\pi_t$, we have that 
\begin{align*}
    &|{\mu}_{k,t}(u_{k, 0:t-1})
    -  \widehat{{\mu}}_{k,t}(u_{k, 0:t-1})|\\
    =& \left|\widehat{\lambda}_k \left( \sum_{i=1}^{t-1} \widehat{\gamma}_k^{t-i} u_{k,i}\right) 
    - {\lambda}_k \left( \sum_{i=1}^{t-1} {\gamma}_k^{t-i} u_{k,i}\right)\right|\\
    =& \left|\widehat{\lambda}_k \left( \sum_{i=1}^{t-1} \widehat{\gamma}_k^{t-i} u_{k,i}\right) 
    - {\lambda}_k \left( \sum_{i=1}^{t-1} \widehat{\gamma}_k^{t-i} u_{k,i}\right)
    + {\lambda}_k \left( \sum_{i=1}^{t-1} \widehat{\gamma}_k^{t-i} u_{k,i}\right)
    - {\lambda}_k \left( \sum_{i=1}^{t-1} {\gamma}_k^{t-i} u_{k,i}\right)\right|\\
    \leq& 
    |\widehat{\lambda}_k - \lambda_k| 
    \left|\frac{\widehat{\gamma}_k}{1 - \widehat{\gamma}_k} \right|
    + \overline{\lambda}
    \left|\frac{\widehat{\gamma}_k}{1-\widehat{\gamma}_k} - \frac{\gamma_k}{1-\gamma_k} \right|\\
    \leq& 
    \frac{ \widehat{\gamma}_k \epsilon_{\lambda_k}}{1-\widehat{\gamma}_k} 
    +
    \frac{\overline{\lambda} \epsilon_{\gamma_k}}{(1-\widehat{\gamma}_k)(1-\gamma_k)}
\end{align*}
Since $\widehat{\gamma}_k < 1$ ($\widehat{a}_k \in (\underline{a}, \overline{a})$) almost surely 
and with probability $1-\delta$, for all $k \in [K]$, 
$\epsilon_{\gamma_k} \leq O\left(1/\sqrt{n} \right)$ and
$\epsilon_{\lambda_k} \leq O\left(1/\sqrt{n} \right)$.
We have that with probability $1-\delta$, 
$$
    \left|\sum_{t=1}^T {\mu}_{\pi_t, t}(u_{\pi_t, 0:t-1})
    -  \widehat{{\mu}}_{\pi_t,t}(u_{\pi_t, 0:t-1}) \right|
    \leq \sum_{t=1}^T \left| {\mu}_{\pi_t, t}(u_{\pi_t, 0:t-1})
    -  \widehat{{\mu}}_{\pi_t, t}(u_{\pi_t, 0:t-1}) \right| 
    \leq \left(\frac{T}{\sqrt{n}}\right). 
$$
\end{proof}

\subsection{Proof of Proposition~\ref{proposition:mdp-gaurantee}}
\begin{proof}%
Fix $\delta \in (0,1)$. 
Let $E_1$ be the event that   
\begin{align*}
    \forall k \in [K], 
    ~|\widehat{\gamma}_k - \gamma_k| = \epsilon_{\gamma_k} \leq  O\left(\frac{1}{\sqrt{n}}\right), 
    ~|\widehat{\lambda}_k - \lambda_k| = \epsilon_{\lambda_k} \leq O\left(1/\sqrt{n} \right). 
\end{align*}
From Corollary~\ref{thm:estimation-single}, we have that 
 $\mathbb{P}(E_1) \geq 1-\delta/2$.
Let $\epsilon_\lambda := \max_k \epsilon_{\lambda_k}$. 
Let $E_2$ denote the event that $\forall t \in [T], k \in [K]$, $|x_{k,t}| \leq B(\delta/2)$~\eqref{eq:x_bound}. 
We know that $\mathbb{P}(E_2) \geq 1-\delta/2$. 
When $E_1$ and $E_2$ happen, we first observe that 
for all positive integer $n$ and $k \in [K]$, 
\begin{align*}
    |\widehat{\gamma}_k^n - \gamma_k^n| 
    \leq |\widehat{\gamma}_k - \gamma_k| 
    \left(n \max(\gamma_k^{n-1}, \widehat{\gamma}_k^{n-1}) \right)
    \leq 
    \frac{|\widehat{\gamma}_k - \gamma_k|}{
    \max(\gamma_k, \widehat{\gamma}_k)
    \ln \left({1}/\max(\gamma_k, \widehat{\gamma}_k)\right) } = O(1/\sqrt{n}),
\end{align*}
where%
and the second inequality uses the assumption that $\widehat{a}_k, \gamma_k$ are bounded away from $0$ and $1$.

To continue, we first bound the distance between the transition function in $\widehat{\mathcal{M}}$ 
and $\mathcal{M}$. 
At any any time $t$ and state $x_t = (x_{1,t}, n_{1,t}, \ldots, x_{K,t}, n_{K,t})$, when we pull arm $\pi_t = k$, the next state $x_{t+1}$ is updated by: 
(i) for arm $k$, $n_{k,t+1}=1$  and 
(ii) for all other arms $k' \neq k$, $n_{k,t+1} = n_{k,t} +1$ if $n_k \neq 0$, 
$n_{k,t+1}=0$ if $n_{k,t} =0$, 
and $x_{k',t+1} = x_{k',t}$.
Then, by~\cite[Theorem 1.3]{devroye2018total}, we have that when $n_{\pi_t,t} \neq 0$, 
\begin{align*}
\allowdisplaybreaks
    &\|p_{\widehat{\mathcal{M}}}\left(x_{t+1}|x_t, \pi_t, t\right) -  p_{\mathcal{M}} \left( x_{t+1}|x_t, \pi_t, t \right) \|_1 \\
    \stackrel{(*)}{\leq}& \frac{3 |\widehat{\lambda}_k^2 \sum_{i=0}^{n_{k,t} -1} \widehat{\gamma}_k^{2i}- \lambda_k^2\sum_{i=0}^{n_{k,t} -1} \gamma_k^{2i}|}{ \lambda_k^2\sum_{i=0}^{n_k -1} \gamma_k^{2i}} + \frac{|\gamma_k^{n_{k,t}}x_{k,t} + \lambda_k \gamma_k^{n_{k,t}} -\widehat{\gamma}_k^{n_{k,t}}x_{k,t}  - \widehat{\lambda}_k \widehat{\gamma}_k^{n_{k,t}} |}
    {\lambda_k \sqrt{\sum_{i=0}^{n_{k,t} -1} \gamma_k^{2i}}}\\
    = & 
    \frac{3 |\widehat{\lambda}_k^2 \left(\sum_{i=0}^{n_{k,t} -1} \widehat{\gamma}_k^{2i} -  \sum_{i=0}^{n_{k,t} -1} \gamma_k^{2i}\right)
    + (\widehat{\lambda}_k^2-\lambda_k^2) \sum_{i=0}^{n_{k,t} -1} \gamma_k^{2i}|}{ \lambda_k^2\sum_{i=0}^{n_k -1} \gamma_k^{2i}} \\
    &\qquad + 
    \frac{|\widehat{\gamma}_k^{n_{k,t}}
    - \gamma_k^{n_{k,t}}| B(\delta/2)
    + |\lambda_k \gamma_k^{n_{k,t}} -  \widehat{\lambda}_k \widehat{\gamma}_k^{n_{k,t}} |}
    {\lambda_k \sqrt{\sum_{i=0}^{n_{k,t} -1} \gamma_k^{2i}}} \\
    \stackrel{(**)}{\leq} & 
    3
    \left|\widehat{\lambda}_k^2 \left(\sum_{i=0}^{n_{k,t} -1} \widehat{\gamma}_k^{2i} -  \sum_{i=0}^{n_{k,t} -1} \gamma_k^{2i}\right)
    + (\widehat{\lambda}_k^2-\lambda_k^2) \sum_{i=0}^{n_{k,t} -1} \gamma_k^{2i} \right|
    + |\widehat{\gamma}_k^{n_{k,t}}
    - \gamma_k^{n_{k,t}}| \left(B(\delta/2) + \lambda_k \right) \\
    &\qquad + |\lambda_k \widehat{\gamma}_k^{n_{k,t}} -  \widehat{\lambda}_k \widehat{\gamma}_k^{n_{k,t}} |\\
    \leq& 
    3 
    \left( 
    (\lambda_k + \epsilon_\lambda)^2 
    \left|\frac{1}{1 - \widehat{\gamma}_k^2} - \frac{1}{1 - \gamma_k^2} \right|
    + \frac{|\widehat{\lambda}_k - \lambda_k| (2 \lambda_k + \epsilon_\lambda)}{1 - \gamma_k^2}
    \right)
    + |\widehat{\gamma}_k^{n_{k,t}}
    - \gamma_k^{n_{k,t}}| \left(B(\delta/2) + \lambda_k \right)  + |\lambda_k -  \widehat{\lambda}_k |\\
    =& 
    3 
    \left( 
    \frac{(\lambda_k + \epsilon_\lambda)^2|\widehat{\gamma}_k^2 - \gamma_k^2| }
    {\left(1 - \widehat{\gamma}^2_k \right) (1 - \gamma_k^2)} 
    + \frac{|\widehat{\lambda}_k - \lambda_k| (2 \lambda_k + \epsilon_\lambda)}{1 - \gamma_k^2}
    \right)  + |\widehat{\gamma}_k^{n_{k,t}}
    - \gamma_k^{n_{k,t}}| \left(B(\delta/2) + \lambda_k \right)
    + |\lambda_k -  \widehat{\lambda}_k |
    \\
    =:&\; \epsilon_P = O\left(\frac{1}{\sqrt{n}}\right), 
\end{align*}
where $(*)$ holds since $p_{\mathcal{M}} \left( x_{t+1}|x_t, \pi_t, t \right)$
is a Gaussian density with mean $\gamma_k^{n_{k,t}}x_{k,t} + \lambda_k \gamma_k^{n_{k,t}}$
and variance $\lambda_k^2\sum_{i=0}^{n_k -1} \gamma_k^{2i}$
and $(**)$ uses the fact that $\lambda_k^2 \sum_{i=0}^{n_k-1}\gamma_k^{2i} \geq \lambda_k^2 \geq 1$. 
When $n_{\pi_t,t} = 0$ and condition (i) and (ii) are fulfilled, 
we have that $\|p_{\widehat{\mathcal{M}}}\left(x_{t+1}|x_t, \pi_t, t\right) -  p_{\mathcal{M}} \left( x_{t+1}|x_t, \pi_t, t \right) \|_1 = 0$. 
Otherwise, that is, if condition (i) or (ii) is not satisfied, 
we also have that   
$\|p_{\widehat{\mathcal{M}}}\left(x_{t+1}|x_t, \pi_t, t\right) -  p_{\mathcal{M}} \left( x_{t+1}|x_t, \pi_t, t \right) \|_1 = 0$
since $p_{\widehat{\mathcal{M}}}\left(x_{t+1}|x_t, \pi_t, t\right) =  p_{\mathcal{M}} \left( x_{t+1}|x_t, \pi_t, t \right) = 0$. 
Next, we examine the difference of the expected reward obtained by pulling arm $k$ at state $x_t$ at time $t$ in MDP $\mathcal{M}$ and $\widehat{\mathcal{M}}$; when $n_{k,t} \neq 0$, this is given by
\begin{align*}
    \left|\widehat{{r}}(x_t, k)] - {r}(x_t, k)] \right|
    &= |\gamma_k^{n_{k,t}}x_{k,t} + \lambda_k \gamma_k^{n_{k,t}} -\widehat{\gamma}_k^{n_{k,t}}x_{k,t}  - \widehat{\lambda}_k \widehat{\gamma}_k^{n_{k,t}} | \\
    &\leq |x_{k,t}| \cdot |\gamma^{n_{k,t}} - \widehat{\gamma}^{n_{k,t}}| + 
    |\lambda_k \gamma_k^{n_{k,t}} - {\lambda}_k \widehat{\gamma}_k^{n_{k,t}} + {\lambda}_k \widehat{\gamma}_k^{n_{k,t}} - \widehat{\lambda}_k \widehat{\gamma}_k^{n_{k,t}} | \\
    &\leq \left(B(\delta/2) + \lambda_k \right) 
    |\widehat{\gamma}^{n_{k,t}}_k - {\gamma}^{n_{k,t}}_k| 
    + |\widehat{\lambda}_k - \lambda_k|
    =: \epsilon_R  = O\left(\frac{1}{\sqrt{n}}\right),
\end{align*}
where $\widehat{{r}}(x_t, k)$ is the expected reward of pulling arm $k$ at state $x_t$ in MDP $\widehat{\mathcal{M}}$.  
Putting it altogether, we have that for any deterministic policy $\pi$, 
\allowdisplaybreaks
\begin{align*}
    V_{1,\mathcal{M}}^\pi({x}_\text{init}) - V_{1, \widehat{\mathcal{M}}}^{\pi}({x}_\text{init}) 
    &={r}({x}_\text{init}, \pi_1({x}_\text{init}))
    -\widehat{{r}}({x}_\text{init}, \pi_1({x}_\text{init}))
    + \mathbb{E}_{x_{2} \sim p_{\mathcal{M}}(\cdot|x_1, \pi, 1)} [V_{2,\mathcal{M}}^{\pi}(x_2)] \\
    &\qquad - \mathbb{E}_{x_{2} \sim p_{\widehat{\mathcal{M}}}(\cdot|x_1, \pi, 1)} [V_{2,\widehat{\mathcal{M}}}^{\pi}(x_2)] \\
    &\leq \epsilon_R 
    +  \mathbb{E}_{x_{2} \sim p_{\mathcal{M}}(\cdot|x_1, \pi, 1)} [V_{2,\mathcal{M}}^{\pi}(x_2)]
    - \mathbb{E}_{x_{2} \sim p_{\widehat{\mathcal{M}}}(\cdot|x_1, \pi, 1)} [V_{2,\mathcal{M}}^{\pi}(x_2)] \\
    &\qquad + \mathbb{E}_{x_{2} \sim p_{\widehat{\mathcal{M}}}(\cdot|x_1, \pi, 1)} [V_{2,\mathcal{M}}^{\pi}(x_2)]
    - \mathbb{E}_{x_{2} \sim p_{\widehat{\mathcal{M}}}(\cdot|x_1, \pi, 1)} [V_{2,\widehat{\mathcal{M}}}^{\pi}(x_2)] \\
    &\leq T \epsilon_R +\sum_{t=1}^T
    \mathbb{E}_{\widehat{\mathcal{M}}, \pi} \Big\{ 
    \mathbb{E}_{x_{t+1} \sim p_{\mathcal{M}}(\cdot|x_t, \pi, t)} [V_{t+1,\mathcal{M}}^{\pi}(x_{t+1})]\\
    &\qquad - \mathbb{E}_{x_{t+1} \sim p_{\widehat{\mathcal{M}}}(\cdot|x_t, \pi, t)} [V_{t+1,\mathcal{M}}^{\pi}(x_{t+1})] \Big\} \\
    &\leq T \epsilon_R + T^2 \epsilon_P \max_k{{b}_k},
\end{align*}
where 
$p_{\mathcal{M}}(\cdot|x_t, \pi, t)$ denotes 
$p(\cdot|x_t, \pi_t(x_t), t)$ in MDP ${\mathcal{M}}$ 
and 
the last inequality uses the fact that 
$\langle p_{\mathcal{M}}(\cdot|x_t, \pi, t) - p_{\widehat{\mathcal{M}}}(\cdot|x_t, \pi, t),~ V_{t+1, \mathcal{M}}^\pi \rangle \leq \|p_{\mathcal{M}}(\cdot|x_t, \pi, t) - p_{\widehat{\mathcal{M}}}(\cdot|x_t, \pi, t) \|_1 \|V_{t+1, \mathcal{M}}^\pi\|_{\infty} \leq \epsilon_P T \max_k {b}_k$. 
Finally, we have that 
\begin{align*}
    V_{1,\mathcal{M}}^{*}({x}_\text{init}) - V_{1,\mathcal{M}}^{\pi_{\widehat{\mathcal{M}}}^*}({x}_\text{init}) &=  V_{1,\mathcal{M}}^{\pi_{\mathcal{M}}^*}({x}_\text{init}) - V_{1,\widehat{\mathcal{M}}}^{\pi_{\mathcal{M}}^*}({x}_\text{init})  
    + V_{1,\widehat{\mathcal{M}}}^{\pi_{\mathcal{M}}^*}({x}_\text{init})
    - V_{1,\widehat{\mathcal{M}}}^{\pi_{\widehat{\mathcal{M}}}^*}({x}_\text{init}) \\
    &\qquad + V_{1,\widehat{\mathcal{M}}}^{\pi_{\widehat{\mathcal{M}}}^*}({x}_\text{init})
    - V_{1,\mathcal{M}}^{\pi_{\widehat{\mathcal{M}}}^*}({x}_\text{init}) 
    \leq 2T \epsilon_R + 2T^2 \epsilon_P \max_k {b}_k, %
\end{align*} 
where the equation follows from the fact that $V_{1,\mathcal{M}}^{*}({x}_\text{init})=V_{1,\mathcal{M}}^{\pi_{\mathcal{M}}^*}({x}_\text{init})$ and rearranging the terms, and the inequality follows from applying the bound of $V_{1,\mathcal{M}}^{\pi}({x}_\text{init}) - V_{1,\widehat{\mathcal{M}}}^{\pi}({x}_\text{init})\leq T \epsilon_R + T^2 \epsilon_P \max_k{{b}_k}$ that was derived above for $\pi=\pi_{\mathcal{M}}^*$ and $\pi=\pi_{\widehat{\mathcal{M}}}^*$ and using the fact that the policy  $\pi_{\widehat{\mathcal{M}}}^*$ is optimal for MDP $\widehat{\mathcal{M}}$. 
Let $E_3$ denote the event that $V_{1,\mathcal{M}}^{*}({x}_\text{init}) - V_{1,\mathcal{M}}^{\pi_{\widehat{\mathcal{M}}}^*}({x}_\text{init})  \leq O(T^2/\sqrt{n})$. 
Putting it altogether, we have that 
$\mathbb{P}(E_3) \geq %
\mathbb{P}(E_2, E_1) = 1 - \mathbb{P}(E_2^c \cup E_1^c) \geq 1 - \delta$. 
\end{proof}

\newpage
\section{Additional Experimental Details and Results}
\label{appendix:experiment}
In this appendix, we present additional experimental details and results. 

\paragraph{$w$-lookahead Performance}
When evaluating the performance of $w$-lookahead policies, 
in addition to the case where $T=30$ (Figure~\ref{fig:T_30_lookahead}), 
we have also run the experiments with $T=100$ (Figure~\ref{fig:eep_T_100}).
When solving for the $100$-lookahead policy,
we have increased the number of threads to $50$ 
to solve for~\eqref{eq:obj} and 
stopped the program at a time limit of $24$ hours. 
In such settings, 
we obtain an upper bound on the absolute optimality gap of $64.0$ (percentage optimality gap of $13.0\%$). 
When solved for $w$-lookahead policies with $w$ in between $1$ and $15$  
using $10$ threads, 
Gurobi ends up solving~\eqref{eq:lookahead} within $40$s 
for all different $w$ values. 
Thus, 
despite using significantly lower computational time, 
$w$-lookahead policies achieve a similar cumulative expected reward 
to the $T$-lookahead policies (see Figures~\ref{fig:T_30_lookahead} and~\ref{fig:eep_T_100}).

\begin{figure*}
    \centering
      \begin{subfigure}[b]{0.45\linewidth}
        \centering
        \includegraphics[width=\linewidth]{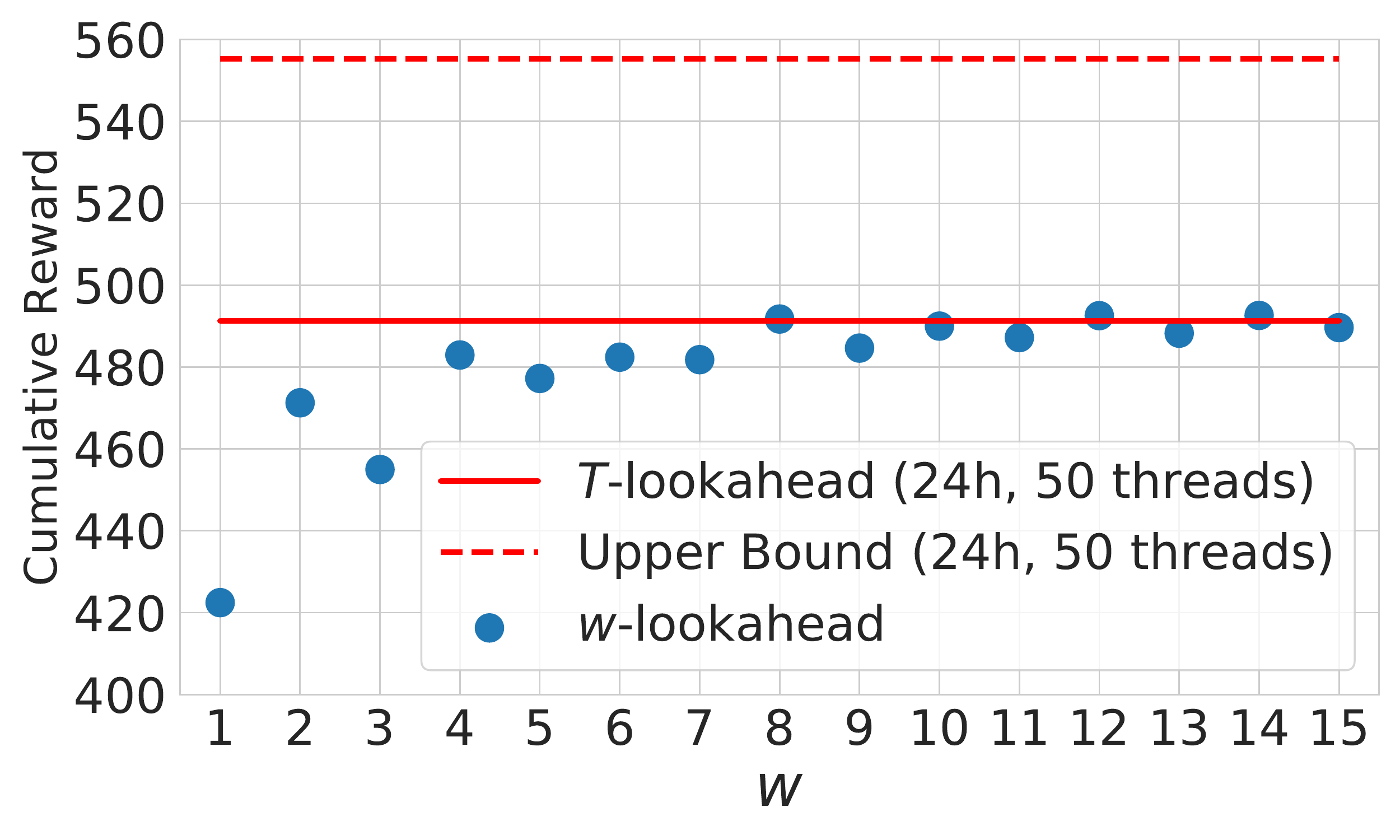}
        \caption{$T=100$}
        \label{fig:eep_T_100}
    \end{subfigure}
    \hfill
    \begin{subfigure}[b]{0.45\linewidth}
        \centering
        \includegraphics[width=\linewidth]{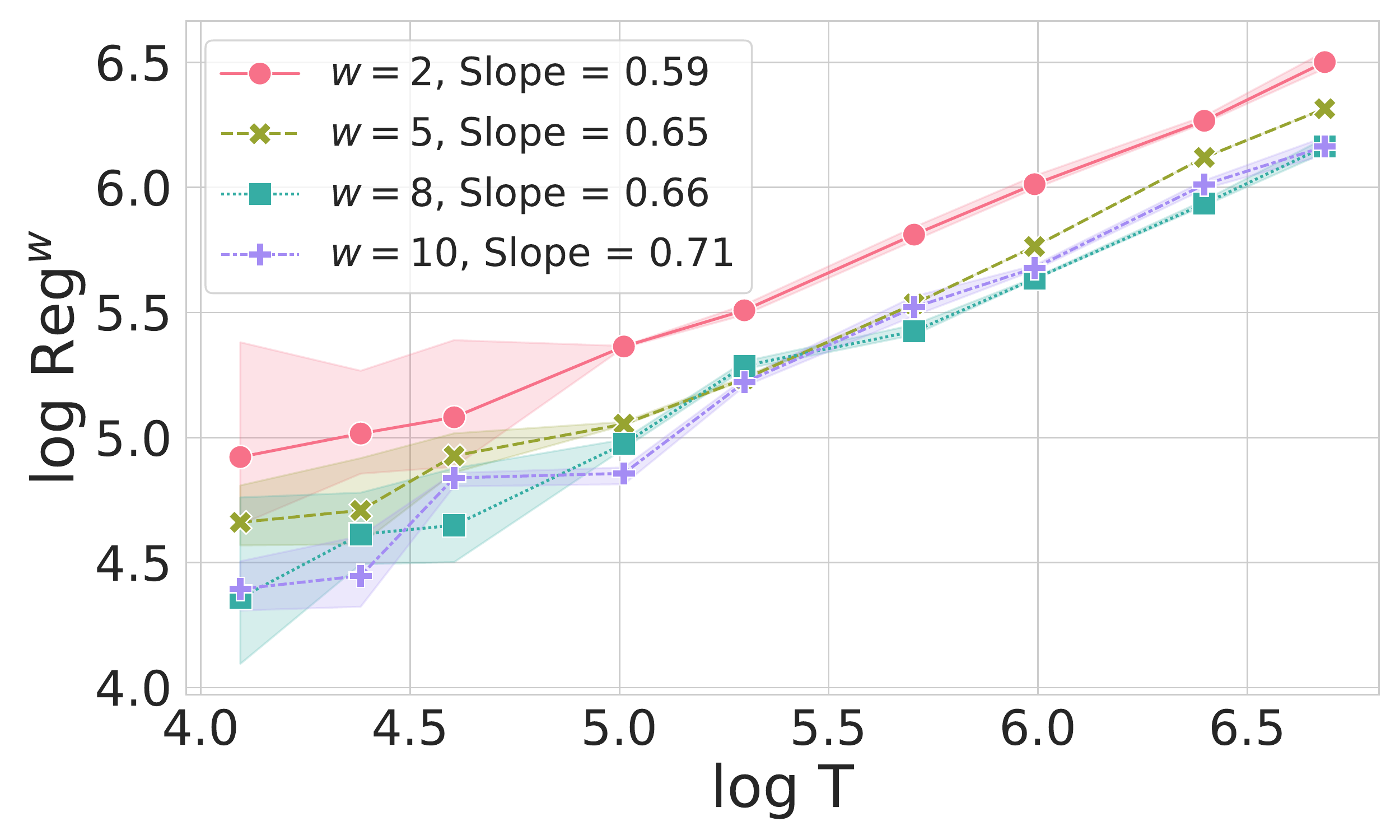}
        \caption{$\log \text{Reg}^w$ v.s. $\log T$}
        \label{fig:eep_800}
    \end{subfigure}
       \caption{ Figure
       \ref{fig:eep_T_100} shows the cumulative expected  reward 
       collected by 
       and $w$-lookahead policy (blue dots) when $T=100$. 
       When solving for the $T$-lookahead policy 
       (\eqref{eq:obj} with $T=100$),
       after $24$ hours,
       Gurobi 9.1 obtains an objective value of $491.3$
       (red solid line)
       with an upper bound $555.3$ (red dotted line)
       and  
       an absolute optimality gap $64.0$ ($13.0\%$). 
       The true cumulative expected reward for $T$-lookahead policy for this problem 
       lies in between 
       the solid and dotted red lines. 
       Figure~\ref{fig:eep_800} 
       shows the log-log plot of the 
       $w$-step lookahead regret 
       of $w$-lookahead EEP 
       (averaged over $5$ random runs)
       under different $T$.
       }
       \label{fig:eep_appendix}
\end{figure*}

\paragraph{EEP Performance}
Figure~\ref{fig:eep_performance}
is the log-log plot of the $w$-step lookahead regret of $w$-lookahead
EEP against the horizon $T$ 
when $T= 60, 80, 100, 150, 200, 300, 400$
(averaged over $20$ random runs)
and Figure~\ref{fig:eep_800} is the log-log plot 
when $T= 60, 80, 100, 150, 200, 300, 400, 600, 800$
(averaged over $5$ random runs),
under the experimental setup provided in \S~\ref{sec:experiment}.

\end{document}